\def\eqref#1{equation~\ref{#1}}
\def\floor#1{\lfloor #1 \rfloor}
\def\1{\bm{1}}
\def\eps{{\epsilon}}
\def\va{{\bm{a}}}
\def\vb{{\bm{b}}}
\def\vc{{\bm{c}}}
\def\vd{{\bm{d}}}
\def\vg{{\bm{g}}}
\def\vh{{\bm{h}}}
\def\vp{{\bm{p}}}
\def\vr{{\bm{r}}}
\def\vu{{\bm{u}}}
\def\vv{{\bm{v}}}
\def\vw{{\bm{w}}}
\def\vx{{\bm{x}}}
\def\vy{{\bm{y}}}
\def\vz{{\bm{z}}}
\def\mA{{\bm{A}}}
\def\mB{{\bm{B}}}
\def\mC{{\bm{C}}}
\def\mD{{\bm{D}}}
\def\mI{{\bm{I}}}
\def\mM{{\bm{M}}}
\def\mU{{\bm{U}}}
\def\mV{{\bm{V}}}
\def\mW{{\bm{W}}}
\def\mX{{\bm{X}}}
\def\mZ{{\bm{Z}}}
\DeclareMathAlphabet{\mathsfit}{\encodingdefault}{\sfdefault}{m}{sl}
\SetMathAlphabet{\mathsfit}{bold}{\encodingdefault}{\sfdefault}{bx}{n}
\def\gG{{\mathcal{G}}}
\newcommand{\E}{\mathbb{E}}
\newcommand{\R}{\mathbb{R}}
\newcommand{\Cov}{\mathrm{Cov}}
\newtheorem{theorem}{Theorem}
\newtheorem{proposition}[theorem]{Proposition}
\newtheorem{lemma}[theorem]{Lemma}
\newtheorem{corollary}[theorem]{Corollary}
\theoremstyle{definition}
\newtheorem{definition}[theorem]{Definition}
\newtheorem{problem}{Problem}
\newtheorem{remark}[theorem]{Remark}
\newtheorem{example}{Example}
\newcommand{\rebuttal}[1]{{#1}}
\newcommand{\avector}[2]{(#1_1,#1_2,\ldots,#1_{#2})} 
\newcommand{\D}{\textup{d}}
\newcommand{\grad}{\nabla}
\newcommand{\csq}{\operatorname{CSQ}}
\newcommand{\sq}{\operatorname{SQ}}
\newcommand{\F}{\mathcal{F}}
\newcommand{\tf}{\tilde{f}}
\Crefname{equation}{Eq.}{Eq.}
\Crefname{figure}{Fig.}{Fig.}
\Crefname{section}{Sec.}{Sec.}
\Crefname{appendix}{App.}{App.}
\definecolor{darkblue}{rgb}{0.0,0.0,0.65}
\definecolor{darkred}{rgb}{0.68,0.05,0.0}
\definecolor{darkgreen}{rgb}{0.0,0.29,0.29}
\definecolor{darkpurple}{rgb}{0.47,0.09,0.29}
\title{On the hardness of learning under symmetries} 
\author{
{\sf Bobak T. Kiani$^\diamond$}\thanks{John A. Paulson School of Engineering and Applied Sciences, Harvard; Department of Electrical Engineering and Computer Science, MIT; e-mail: {\href{mailto:bkiani@g.harvard.edu}{\texttt{bkiani@g.harvard.edu}}}}
\and
{\sf Thien Le$^\diamond$}\thanks{Department of Electrical Engineering and Computer Science, MIT; e-mail: {\href{mailto:thienle@mit.edu}{\texttt{thienle@mit.edu}}}}
\and
{\sf Hannah Lawrence$^\diamond$}\thanks{Department of Electrical Engineering and Computer Science, MIT; e-mail: {\href{mailto:hanlaw@mit.edu}{\texttt{hanlaw@mit.edu}}}}
\and
{\sf Stefanie Jegelka}\thanks{Department of Electrical Engineering and Computer Science, MIT; School of Computation, Information and Technology, TU Munich; e-mail: {\href{mailto:stefje@mit.edu}{\texttt{stefje@mit.edu}}}}
\and
{\sf Melanie Weber}\thanks{John A. Paulson School of Engineering and Applied Sciences, Harvard; e-mail: {\href{mailto:mweber@g.harvard.edu}{\texttt{mweber@g.harvard.edu}}}
}}
\begin{document}

\date{}
\maketitle
\def\thefootnote{$\diamond$}\footnotetext{co-first authors}\def\thefootnote{\arabic{footnote}}

\begin{abstract}
We study the problem of learning equivariant neural networks via gradient descent. The incorporation of known symmetries (``equivariance'') into neural nets has empirically improved the performance of learning pipelines, in domains ranging from biology to computer vision. However, a rich yet separate line of learning theoretic research has demonstrated that actually learning shallow, fully-connected (i.e. non-symmetric) networks has exponential complexity in the correlational statistical query (CSQ) model, a framework encompassing gradient descent. In this work, we ask: are known problem symmetries sufficient to alleviate the fundamental hardness of learning neural nets with gradient descent? We answer this question in the negative. In particular, we give lower bounds for shallow graph neural networks, convolutional networks, invariant polynomials, and frame-averaged networks for permutation subgroups, which all scale either superpolynomially or exponentially in the relevant input dimension. Therefore, in spite of the significant inductive bias imparted via symmetry, actually learning the complete classes of functions represented by equivariant neural networks via gradient descent remains hard.
\end{abstract}

\section{Introduction}

In recent years, the purview of machine learning has expanded to non-traditional domains with geometric input types, from graphs to sets to point clouds. Correspondingly, it is now common practice to tailor neural architectures to the particular symmetries of the input -- graph neural networks are invariant to permutations of the input nodes, for example, while networks operating on molecules and point clouds are invariant to permutation, translation, and rotation. Empirically, encoding such structure has led to computational benefits in applications~\citep{wang2021mathrm,batzner20223,bronstein2021geometric}. From the perspective of generalization, previous research has quantified the benefits of imposing symmetry during learning \citep{long2019generalization,bietti2021on,sannai2021improved,mei2021learning}, often achieving rather tight bounds for simple models \citep{elesedy2021provably,tahmasebi2023exact}. At their core, these formal statements are bounds on sample complexity, i.e. how much data is needed to learn a given task. In contrast, the effect of symmetries on the \textbf{computational complexity} of learning algorithms has not been previously studied. Broadly speaking, generalization bounds are necessary but not sufficient 
to show efficiently learnability, 
as there can be exponentially large gaps between sample complexity and runtime lower bounds.

Indeed, an active line of research in learning theory studies the hardness of learning fully-connected neural networks via ``correlational statistical query'' (CSQ) algorithms (defined in \Cref{subsec:learning_framework_bg}), which notably encompass gradient descent. In particular, for a Gaussian data distribution, \citet{diakonikolas2020algorithms} proved exponential lower bounds for learning shallow neural nets. Such works provide valuable impossibility results, demonstrating that one cannot hope to efficiently learn \textit{any} function represented by a small neural net under simple data distributions. 

In this work, we ask: does invariance provide a restrictive enough subclass of neural network functions to circumvent these impossibility results?  
Our main finding, shown in various settings, is that an inductive bias towards symmetry is not strong enough to alleviate the fundamental hardness of learning neural nets via gradient descent. We view this result as a guide for future theoretical work: additional function structure is generally needed, even in the invariant setting, to achieve guarantees of efficient learnability. 
Our proof techniques largely rely on extending techniques from the general setting to symmetric function classes. 
In many of our lower bounds, imposition of symmetry on a function class reduces the computational complexity 
of learning
by a factor proportional to the size of the group. Some of our lower bounds use simple symmetrizations of existing hard families, while others require bona fide new classes of hard invariant networks. 
For example, intuition from message passing GNN architectures seems to indicate that complexity would grow  exponentially with the number of such message passing steps; however, our results show that even a single message passing layer gathers sufficient features to form exponentially hard functions \citep{liao2020pac}. 

\paragraph{Our Contributions}

We consider various interrelated questions on the computational hardness of learning, focusing primarily on how hard it is to learn data generated by invariant architectures.

\textbf{Question} (primary). \textit{Given Gaussian i.i.d. inputs in $n$ variables, how hard is it to learn the class of shallow single hidden layer invariant architectures in the correlational statistical query model?}

We answer this question in various settings. As a warm-up, we give general (SQ) lower bounds for learning invariant Boolean functions in \Cref{sec:boolean_warm_up}. Next, we show that graph neural networks with a single message passing layer are exponentially hard to learn, either in the number of nodes or feature dimension (\Cref{thm:SQ_hardness_ER_graphs} and \Cref{thm:gnn_exp_lower_bound}, respectively). \rebuttal{Hardness in the feature dimension results from careful symmetrization of the hard functions in \cite{diakonikolas2020algorithms}. This technique did not extend to the node dimension hardness result in \Cref{thm:SQ_hardness_ER_graphs}, for which we instead constructed a customized set of hard functions whose outputs are parities in the degree counts of the graph. }
Since adjacency matrices of graphs are often Boolean valued, this setting also offers general SQ lower bounds scaling exponentially with the number of nodes $n$. For subgroups of permutations, we prove CSQ hardness results for ``symmetrized" (frame-averaged) neural network classes encapsulating CNNs (\Cref{thm:hardness_reynolds} of \Cref{sec:frame_averaging}). \rebuttal{The proofs here are based on constructions in \cite{goel2020superpolynomial} and \cite{diakonikolas2020algorithms}, ensuring that properly chosen frames retain the underlying structure necessary (e.g., sign invariance) to guarantee orthogonality of hard functions.} 

These lower bounds constitute our main results, but we provide two complementary results to provide an even more complete picture. 
First, we show that classes of sparse invariant polynomials can be learned efficiently, but not by gradient descent, via a straightforward extension of the algorithm in \cite{andoni2014learning} for arbitrary polynomials (\Cref{sec:invariant_polynomials}). This learning algorithm is SQ, and we give various $n^{\Omega(\log n)}$ CSQ lower bounds for learning over $n^{\Omega(\log n)}$ many orthogonal invariant polynomials of degree at most $O(\log(n))$. 
Second, we take a classical computational complexity perspective to establish the NP hardness of learning GNNs.
 The work of \cite{blum1988training} established that even 3-node feedforward networks are $\mathsf{NP}$ hard to train, and later expanded to average case complexity for improper learning by various works \citep{daniely2014average,daniely2016complexity,klivans2014embedding}. Although it is perhaps not surprising that these hardness results can be extended to invariant architectures,
for the sake of completeness (and since we were unaware of an extension in the literature), in \Cref{subsec:np_hardness_lower_bound} we give an $\mathsf{NP}$ hardness result for proper learning of the weights of a GNN architecture.
Finally, \Cref{sec:experiments} provides a few experiments verifying that the hard classes of invariant functions we propose are indeed difficult to learn.

\subsection{Related work}\label{subsec:related_work}

A long line of research studies the hardness of learning neural networks. We briefly review the most relevant works here, and refer the reader to \Cref{app:extended_related_works} for a more holistic review. 

Our work is largely motivated to extend hardness results for feedforward networks to equivariant and symmetric neural networks. Early results showed that there exist rather artificial distributions of data for which feedforward networks are hard to learn in worst-case settings \citep{judd1987learning,blum1988training,livni2014computational}. Interest later grew into studying hardness for more natural settings via the statistical query framework \citep{kearns1998efficient,shamir2018distribution}. \cite{goel2020superpolynomial} showed superpolynomial CSQ lower bounds for learning single hidden layer ReLU networks, subsequently strengthened to exponential lower bounds in \cite{diakonikolas2020algorithms}. We use the hard class of networks in \cite{diakonikolas2020algorithms} as a basis for the frame averaged networks in our work. Hardness results for learning feedforward neural networks have also been shown by proving reductions to average case or cryptographically hard problems \citep{chen2022hardness,daniely2020hardness}.

Though our work is focused on runtime lower bounds, some algorithms exist for learning restricted classes of geometric networks efficiently such as CNNs \citep{brutzkus2017globally,du2017convolutional,du2018gradient}. \cite{zhang2020fast,li2021learning} give algorithms learning one hidden layer GNNs via gradient based algorithms, which are efficient when certain factors such as the condition number of the weights are bounded. We should also note that for practical separations between SQ and CSQ algorithms, \cite{andoni2014learning} show that learning sparse polynomials of degree $d$ in $n$ variables requires $\Omega(n^d)$ CSQ queries, but often only $\tilde O(nd)$ SQ queries. We extend this to polynomials in the invariant polynomial ring in \Cref{sec:invariant_polynomials}.

\section{Background and Notation}\label{sec:bg_and_notation}
We use $a$, $\va$, and $\mA$ to denote scalars, vectors, and matrices. $\mI_n$ denotes the identity matrix of dimension $n$. We denote groups by $G$ and graphs by $\gG$. We denote the normal distribution supported over $\mathbb{R}^n$ with mean $\vv \in \mathbb{R}^n$ and covariance matrix $\mM \in \mathbb{R}^{n \times n}$ as $\mathcal{N}(\vv, \mM)$, often abbreviated as $\mathcal{N}$ for 
$\mathcal{N}(\bm 0, \mI_n)$. We also denote by $\mathbb{G}_n$ the set of graphs on $n$ vertices and, when clear from context, $\mathcal{E}$ an arbitrary distribution over $\mathbb{G}_n$. To ease notation, we often use $[n]$ to denote the set $\{1, \dots, n\}$. 

Let $\mathcal{X}$ be an input space and $\mathcal{D}$ a distribution on $\mathcal{X}$. Given functions $f,g: \mathcal{X} \to \mathbb{R}$, their inner product is $\langle f, g \rangle_{\mathcal{D}} = \E_{\mathcal{D}}[fg]$ with corresponding norm $\|f\|_{\mathcal{D}} = \sqrt{\langle f, f\rangle_{\mathcal{D}}}$. For functions $f,g:\mathcal{X} \to \{-1,+1\}$ whose outputs are Boolean, the classification error is $\mathbb{P}_{\vx \sim \mathcal{D}}[f(\vx) \neq g(\vx)]$.

\subsection{SQ Learning framework}\label{subsec:learning_framework_bg}

The well-studied statistical query (SQ) model offers a restricted query complexity based model for proving hardness, which encapsulates most algorithms in practice \citep{kearns1998efficient,reyzin2020statistical}. Given a joint distribution $\mathcal{D}$ on input/output space $\mathcal{X} \times \mathcal{Y}$, any SQ algorithm is composed of a set of queries. Each query takes as input a function $g:\mathcal{X} \times \mathcal{Y} \to [-1,1]$ and tolerance parameter $\tau>0$, and returns a value $\sq(g,\tau)$ in the range:
\begin{equation}
    \mathbb{E}_{(\vx, y) \sim \mathcal{D}}\left[ g(\vx,y) \right] - \tau \leq \sq(g,\tau) \leq \mathbb{E}_{(\vx, y) \sim \mathcal{D}}\left[ g(\vx,y) \right] + \tau.
\end{equation}

A special class of queries are \textbf{correlational statistical queries (CSQ)}, where the query function $g:\mathcal{X} \to [-1,1]$ is only a function of $\vx$, and the oracle returns $\mathbb{E}_{(\vx,y) \sim \mathcal{D}}\left[ g(\vx) \cdot y \right]$ the correlation of $g$ with $y$ up to error $\tau$.

Hardness is quantified as the number of queries required to learn a function $y=c^*(x)$, drawn from a function class $\mathcal{C}$ up to a desired maximal error.
Notably, SQ hardness results imply hardness for any gradient based algorithm, as gradients with respect to a loss can be captured by statistical queries. Furthermore, only CSQ oracles are required for the mean-squared error (MSE) loss.
\begin{example}[GD from $\csq$] \label{ex:GD_as_csq}
    Given a function $N_\theta:\mathcal{X} \to \mathbb{R}$ with parameters $\theta$, gradients of the MSE loss with respect to $\theta$ can be estimated with a correlational statistical query for each parameter:
    \begin{equation}
        \mathbb{E}_{(\vx,y) \sim \mathcal{D}}\left[ \nabla_\theta \tfrac{1}{2}(y - N_\theta(\vx))^2 \right] = \underbrace{\mathbb{E}\left[N_\theta(\vx) \nabla_\theta N_\theta(\vx) \right]}_{y-\text{ independent}} -  \underbrace{\mathbb{E}\left[y \nabla_\theta N_\theta(\vx) \right]}_{\text{CSQ with} \nabla_\theta N_\theta}.
    \end{equation}
    
\end{example}

Access to the distribution $\mathcal{D}$ is typically provided through samples, and the tolerance $\tau$ in part captures the error in statistical sampling of quantities 
such as the gradient. In fact, by drawing $O(\log(1/\delta)/\tau^2)$ samples, standard Hoeffding bounds guarantee estimates of a given query are within tolerance $\tau$ with probability $1-\delta$. In the SQ (CSQ) framework, the complexity of learning is determined by the tolerance bound and number of queries (loosely corresponding to ``steps" in an algorithm) needed to learn a function class. 

\begin{definition}[SQ (CSQ) Learning]
    Given a function class $\mathcal{C}$ and distribution $\mathcal{D}$ over input space $\mathcal{X}$, an algorithm SQ (CSQ) learns $\mathcal{C}$ up to classification error ($\ell_2$ squared loss) $\epsilon$ if, given only SQ (CSQ) oracle access to $(\vx, c^*(\vx)), \vx \sim \mathcal{D}$ for some unknown $c^* \in \mathcal{C}$, the algorithm outputs a function $f$ such that $\mathbb{P}_{\vx \sim \mathcal{D}}\left[ f(\vx) \neq c^*(\vx) \right] \leq \epsilon$ (i.e. $\| f - c^*\|_{\mathcal{D}} \leq \epsilon$).
\end{definition}

SQ lower bounds for Boolean functions are typically shown by finding a set of roughly orthogonal functions that lower bound the so-called statistical dimension. For real-valued function classes, lower bounds on the statistical dimension imply CSQ lower bounds. Applying more general SQ lower bounds is challenging, unless one reduces the problem of learning a real-valued function to that of learning a subset therein of (say) Boolean functions within the class \citep{chen2022hardness}. 
We cover this in more detail and also discuss other learning theoretic frameworks in \Cref{app:extended_learning_background}.

\section{Warm up: invariant Boolean functions} \label{sec:boolean_warm_up}

Before detailing our main results concerning real-valued functions, we explore the foundational context of Boolean functions where the SQ formalism originated \citep{kearns1998efficient,blum1994weakly}. This illustrative analysis will serve as a warm-up for the more complex scenarios later. It is known that many classes of Boolean functions have exponential query complexity (e.g., the class of parity functions) arising from the orthogonality properties of Boolean functions. Enforcing invariance under a given group $G$, the number of orbits of the $2^n$ bitstrings gives an analogous hardness metric, though care must be taken in dealing with the distribution of these orbits. 

More formally, in the Boolean setting, let input distribution $\mathcal{D}$ be uniform over $\mathcal{X} = \{-1,+1\}^n$. 
For a group $G$ with representation $\rho$, let $\mathcal{O}_\rho = \{ \{ \rho(g) \cdot \vx : g \in G \}: \vx \in \{-1,+1\}^{n} \}$ denote the orbits of the inputs under the representation. For any given orbit $O \in \mathcal{O}_\rho$, let $\vx_O$ denote an orbit representative (i.e., some fixed element in the set $O$). We can represent any symmetric function $f:\mathcal{O}_n\to \{-1,+1\}$ as a function of these orbits, where the correlation of two functions $f,g$ is
\begin{equation}
    \langle f, g \rangle_{\mathcal{D}} = \mathbb{E}\left[ f g \right] = 2^{-n}\sum\nolimits_{O_k \in \mathcal{O}_\rho} |O_k| f(\vx_{O_k}) g(\vx_{O_k}).
\end{equation}

The probability $p_{\mathcal{O}_\rho}:\mathcal{O}_\rho \to [0,1]$ of a random bitstring falling in orbit $O_K \in \mathcal{O}_\rho$ is equal to $p_{\mathcal{O}_\rho}(O_k)=|O_k| / 2^n$. With this notation, we have a general lower bound on the SQ hardness of learning invariant Boolean functions.
\begin{restatable}[Boolean SQ hardness]{theorem}{BooleanSQ} \label{thm:booleanSQlower}
    For a given symmetry group $G$ with representation $\rho:G \to GL(\{-1,+1\}^n)$, let $\|p_{\mathcal{O}_\rho}\| \coloneqq \big({\sum_{O_k \in \mathcal{O}_\rho} \left(\frac{|O_k|}{2^n}\right)^2}\big)^{1/2}$ and let $\mathcal{H}_\rho$ be the class of symmetric Boolean functions, defined as
    \begin{equation}
        \mathcal{H}_\rho = \left\{ f:\{-1,+1\}^n \to \{-1,+1\} : \; \forall g \in G, \forall \vx \in \{-1,+1\}^n: f( \rho(g) \cdot \vx) = f(\vx)  \right\}.
    \end{equation}
    Any SQ learner capable of learning $\mathcal{H}_\rho$ up to sufficiently small classification error probability $\epsilon$ ($\epsilon < 1/4$ suffices) with queries of tolerance $\tau$ requires at least $\tau^2 \|p_{\mathcal{O}_\rho}\|^{-2}/2 $ queries.
\end{restatable}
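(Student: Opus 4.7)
The plan is to adapt the standard random-target argument for SQ lower bounds to the symmetric setting, with the orbit measure $p_{\mathcal{O}_\rho}$ playing the role usually filled by Fourier/parity orthogonality. I sample the target $c^* \in \mathcal{H}_\rho$ uniformly --- equivalently, set $c^*(\vx_O)$ to be i.i.d.\ fair $\pm 1$ coin flips indexed by the orbits --- have an adversary respond to every query with its $c^*$-independent part, and then combine two Markov-type estimates: (i) each query is inconsistent with only a $\|p_{\mathcal{O}_\rho}\|^2/\tau^2$ fraction of targets, while (ii) any single hypothesis is $\epsilon$-accurate on only an $O(\|p_{\mathcal{O}_\rho}\|^2)$ fraction of targets. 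Pigeonholing between the two will yield the desired bound.

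Concretely, I would first record two structural facts: any $c \in \mathcal{H}_\rho$ is determined by its $|\mathcal{O}_\rho|$ orbit values, and $\langle f,g\rangle_{\mathcal{D}} = \sum_O p_O f(\vx_O) g(\vx_O)$ for symmetric $f,g$. Because the target is Boolean, any SQ query decomposes as $g(x,y) = g_0(x) + y\, g_1(x)$ with $\|g_0\|_\infty, \|g_1\|_\infty \le 1$, so having the adversary return $\mathbb{E}_{\mathcal{D}}[g_0]$ exactly reduces SQ to CSQ on $g_1$ at no extra cost. Expanding $\langle g_1, c^*\rangle_{\mathcal{D}} = \sum_O p_O \bar g_O\, c^*_O$ (with $\bar g_O := \mathbb{E}_{x \mid x \in O}[g_1(x)] \in [-1,1]$) and using independence of the $c^*_O$ gives $\mathbb{E}_{c^*}[\langle g_1, c^*\rangle^2] = \sum_O p_O^2 \bar g_O^2 \le \|p_{\mathcal{O}_\rho}\|^2$; Markov then yields the per-query ``kill rate'' of $\|p_{\mathcal{O}_\rho}\|^2/\tau^2$. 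A union bound over $q$ queries leaves a surviving set $S$ of measure $\ge 1 - q\|p_{\mathcal{O}_\rho}\|^2/\tau^2$, on which the learner outputs a single hypothesis $h$ (identical query transcripts imply identical outputs). Since $\epsilon$-accuracy on $c^*$ requires $\langle c^*, h\rangle_{\mathcal{D}} \ge 1 - 2\epsilon$, the same second-moment bound with $h$ in the role of $g_1$ gives $\Pr_{c^*}[\langle c^*, h\rangle \ge 1-2\epsilon] \le \|p_{\mathcal{O}_\rho}\|^2/(1-2\epsilon)^2 \le 4\|p_{\mathcal{O}_\rho}\|^2$ when $\epsilon < 1/4$. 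In the nontrivial regime $\|p_{\mathcal{O}_\rho}\|^2 \le 1/8$, a choice $q < \tau^2\|p_{\mathcal{O}_\rho}\|^{-2}/2$ forces the surviving fraction to strictly exceed $1/2 \ge 4\|p_{\mathcal{O}_\rho}\|^2$, producing a surviving $c^* \in \mathcal{H}_\rho$ on which $h$ errs with probability greater than $\epsilon$ --- contradicting successful learning. In the degenerate regime $\|p_{\mathcal{O}_\rho}\|^2 > 1/8$ the claimed bound is $O(1)$ and holds trivially.

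The main obstacle is the second-moment identity $\mathbb{E}_{c^*}[\langle g_1, c^*\rangle^2] = \sum_O p_O^2 \bar g_O^2$, which is the key step where $\|p_{\mathcal{O}_\rho}\|$ enters and distinguishes the symmetric setting from the unstructured Boolean one (where each query only kills an $O(2^{-n})$ fraction). Recovering the clean constant $1/2$ in the bound requires the per-query kill rate and the per-hypothesis accuracy rate to be controlled by the \emph{same} factor $\|p_{\mathcal{O}_\rho}\|^2$, which is precisely what dictates the threshold $\epsilon < 1/4$. A minor subtlety is that SQ queries could a priori leak more information than CSQ queries, but since $\mathcal{D}$ is known the adversary can compute $\mathbb{E}_{\mathcal{D}}[g_0]$ exactly, erasing any such gap.
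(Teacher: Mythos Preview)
Your argument is correct and self-contained, and the core mathematical content---the second-moment identity $\mathbb{E}_{c^*}[\langle g_1,c^*\rangle^2]=\sum_O p_O^2\bar g_O^2\le \|p_{\mathcal{O}_\rho}\|^2$---is exactly the place where the orbit structure enters, just as in the paper. However, the paper packages the argument differently: rather than running the adversary/second-moment/pigeonhole argument directly, it observes that $\mathcal{H}_\rho$ is a $(1-\eta)$-\emph{pairwise independent} family in the sense of \cite{chen2022hardness} with $\eta=\|p_{\mathcal{O}_\rho}\|^2$ (since $(f(\vx),f(\vx'))$ is uniform over $\{-1,+1\}^2$ for $f\sim\operatorname{Unif}(\mathcal{H}_\rho)$ whenever $\vx,\vx'$ lie in distinct orbits), and then invokes Theorem~C.4 of \cite{chen2022hardness} as a black box, which gives the $\tau^2/(2\eta)$ query bound for \emph{distinguishing} $\mathcal{D}_{f^*}$ from the randomly-labeled mixture $\mathcal{D}_{\operatorname{Unif}(\mathcal{H}_\rho)}$; learning to classification error $<1/4$ trivially solves this distinguishing task. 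Your direct argument is effectively an unpacking of what that black-box theorem does internally, so the two proofs are equivalent in substance. The trade-off is that your version is elementary and exposes the mechanism cleanly, while the paper's version is modular and yields the constant $1/2$ uniformly without needing your case split on $\|p_{\mathcal{O}_\rho}\|^2\lessgtr 1/8$ (which, as you note, leaves a small range of parameters where the stated constant is not fully recovered, though this is easily patched and does not affect the asymptotics).
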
 
\begin{proof}[Proof sketch]
    We show that $1-\|p_{\mathcal{O}_\rho}\|^2$ is the probability that, over independently drawn inputs $\vx, \vx'$, the distribution $(f(\vx), f(\vx'))$ is uniformly random over $f$ drawn uniformly from $\mathcal{H}_{\rho}$. From a proof technique in \cite{chen2022hardness}, this connects the task to one of distinguishing distributions, for which the SQ complexity is at least the stated amount. See \Cref{app:boolean_SQ} for complete proof.
\end{proof}

    For a more direct lower bound, we can use H{\"o}lder's inequality such that $\|p_{\mathcal{O}_\rho}\|^2 \leq \frac{1}{2^n}\max_{O_k \in \mathcal{O}_\rho} |O_k| \leq |G|2^{-n}$. This gives query complexity of at least $2^{n-1} |G|^{-1} \tau^2$.
For commonly studied groups, assuming $\tau = O(1)$, the SQ complexity generally aligns with the number of orbits. Table~\ref{tab:boolean} summarizes these hardness lower bounds for commonly studied groups\footnote{Certain sub-classes of $\mathcal{H}_\rho$ may suffice to attain hardness, much like how Parity functions suffice to prove exponential lower bounds in the traditional Boolean setting. We do not consider this strengthening here.}.

\begin{table}[h]
    \centering
    \begin{tabular}{llcc}
        \hline
        Group & & $\max_{O_k \in \mathcal{O}_\rho} |O_k|$ & Query Complexity \\
        \hline
        Symmetric group on $n$ bits & & $\frac{{\binom{n}{n/2}}}{2^n} = \frac{1}{\Theta(\sqrt{n})}$ &  $\Theta(\sqrt{n})$ \\
        Symmetric group on $n \times n$ graphs & & $\frac{n!}{2^{n^2}} = 2^{-n^2 + n \log n + O(n)}$ & $\Omega(2^{O(n^2)})$ \\
        Cyclic group on $n$ bits & & $n$ & $\Omega(2^n/n)$ \\
        \hline
    \end{tabular}
    \caption{Query complexity of learning common invariant Boolean function classes.}
    \label{tab:boolean}
\end{table}

\section{Lower bounds for GNNs}\label{sec:csq_gnns}
We now show statistical query lower bounds for graph neural networks, which are invariant to node permutations, in three settings:  (1) SQ lower bounds scaling with the number of nodes $n$ for Erdős–Rényi distributed graphs with trivial node features, (2) CSQ lower bounds scaling in the node feature dimension for a fixed graph, and (3) a simple extension of $\mathsf{NP}$ hardness in a proper learning task (deciding if fitting a training set with a given GNN is possible).

\subsection{Hardness in number of nodes}
Here, we consider a class of two hidden layer GNNs that are only a function of the adjacency matrix, and show hardness in the number of nodes $n$. This covers a commonly used procedure where one learns node-level equivariant features in the first layer, aggregates these features, and passes them through an MLP \citep{dwivedi2020benchmarking}. We consider a Boolean function class of GNNs over the uniform distribution $\operatorname{Unif}(\{0,1\}^{n \times n})$, which can be viewed as the Erdős–Rényi random graph model with $p=0.5$.\footnote{Our exponential lower bounds arise from the wide range of possible degrees of nodes in the graph. Though we do not generalize this result, this fact means that it is likely extendable to restricted sparse Erdős–Rényi models $G(n,p_n)$ where $p_n = \omega(1/n)$ (i.e. average degree grows arbitrarily with $n$).}  
In this construction, node features $\vx \in \mathbb{R}^n$ are trivial and always set to $\vx = \bm 1$.

\paragraph{$2$ hidden layer GNN family} We consider two layer GNNs $f = f^{(2)} \circ f^{(1)} $. These take the commonly used form of message passing $f^{(1)}:\{0,1\}^{n \times n} \to \mathbb{R}^{k_1}$ which aggregates $k_1$ permutation invariant features for the graph, followed by a single hidden layer ReLU MLP $f^{(2)}:\mathbb{R}^{k_1} \to \{0,1\}$. 
\begin{equation} \label{eq:GNN_for_boolean_form}
\begin{split}
    [f_{\va, \vb}^{(1)}(\mA)]_i &= \bm{1}_n^\top \sigma\left( a_i + b_i \mA\vx  \right) \quad \text{(output of channel $i \in [k_1]$)} \\
    f_{\vu, \vv, \mW}^{(2)}(\vh) &= \sum\nolimits_{i=1}^{k_2} u_i\sigma(\langle \mW_{:,i} , \vh \rangle + v_i),
\end{split}    
\end{equation}
with subscripts for trainable weights. For our hard class of functions, $k_1 = O(n)$ and $k_2 = O(n)$, so there are at most $O(n^2)$ parameters (proportional to the number of edges).

\paragraph{Family of hard functions} We define the hard functions in terms of degree counts $\vc_{\mA} \in [n]^{n+1}$, where entry $[\vc_{\mA}]_i$ counts the number of nodes that have $i-1$ outgoing edges in $\mA$:
\begin{equation} 
    [\vc_{\mA}]_i = \sum\nolimits_{k=1}^n \mathbb{1}\left[ [\mA \bm 1]_k = i-1 \right].
\end{equation}
The hard functions in $\mathcal{H}_{ER,n}$ are enumerated over subsets $S \subseteq [n+1]$ and a bit $b \in \{0,1\}$:
\begin{equation} \label{eq:hard_GNN_SQ_functions}
    \mathcal{H}_{ER,n} = \{g_{S,b} : S \subseteq [n+1], b \in \{0,1\}\}, \quad g_{S,b}(\mA) = b + \sum\nolimits_{i \in S} [\vc_{\mA}]_i \mod 2.
\end{equation}
$g_{S,b}(\mA)$ is a sort of parity function supported over integers in $\vc_{\mA}$ in $\mathcal{S}$. We show in \Cref{lem:GNN_form_of_fsb} that the GNNs in \Cref{eq:GNN_for_boolean_form} can construct these functions. Additionally, the class $\mathcal{H}_{ER,n}$ is Boolean, so hardness lower bounds apply in the general SQ model.

\begin{restatable}[SQ hardness of $\mathcal{H}_{ER,n}$]{theorem}{SQhardnessER} \label{thm:SQ_hardness_ER_graphs}
    Any SQ learner capable of learning $\mathcal{H}_{ER,n}$ up to classification error probability $\epsilon$ sufficiently small ($\epsilon < 1/4$ suffices) with queries of tolerance $\tau$ requires at least $\Omega\left(\tau^2 \exp(n^{\Omega(1)}) \right) $ queries.
\end{restatable}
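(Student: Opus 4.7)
}
The plan is to apply a pairwise-correlation SQ lower bound (of the same flavor as the proof of \Cref{thm:booleanSQlower}) by exhibiting a sub-family of $\mathcal{H}_{ER,n}$ of size $\exp(n^{\Omega(1)})$ whose pairwise correlations under the Erd\H{o}s--R\'enyi distribution are exponentially small in $n$. Containment of $\mathcal{H}_{ER,n}$ inside the stated GNN class is delegated to \Cref{lem:GNN_form_of_fsb}, so it suffices to prove SQ hardness for $\mathcal{H}_{ER,n}$ itself.

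First I would pass to the $\pm1$ representation $\tilde g_{S,b}(\mA) := (-1)^{g_{S,b}(\mA)}$ and exploit the factorization
\begin{equation*}
    \tilde g_{S,b}(\mA) \;=\; (-1)^b \prod_{k=1}^n (-1)^{\mathbb{1}[D_k \in S-1]}, \qquad D_k := [\mA \bm 1]_k,
\end{equation*}
where $S-1 := \{i-1: i \in S\}$. Under $\mA \sim \operatorname{Unif}(\{0,1\}^{n \times n})$ the degrees $D_k$ are i.i.d.\ $\operatorname{Bin}(n,1/2)$, so independence gives
\begin{equation*}
    \langle \tilde g_{S,b}, \tilde g_{S',b'}\rangle_{\mathcal{D}} \;=\; (-1)^{b+b'}\bigl(1 - 2 p_{S \triangle S'}\bigr)^n, \qquad p_T := \sum_{i \in T} q_i,\ \ q_i := \Pr[\operatorname{Bin}(n,1/2) = i-1].
\end{equation*}
Thus pairwise correlations are exponentially small as soon as $p_{S \triangle S'}$ is bounded away from $\{0,1\}$ \emph{by a constant} for all pairs in the chosen family.

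The family is built by a probabilistic argument. Using the Stirling estimate $q_i = O(n^{-1/2})$ uniformly (equivalently $\sum_i q_i^2 = \binom{2n}{n}/4^n = \Theta(n^{-1/2})$) and noting $\sum_i q_i = 1$, Hoeffding's inequality applied to the independent terms $q_i \mathbb{1}[i \in \bar S]$ yields
\begin{equation*}
    \Pr_{\bar S \sim \operatorname{Unif}(2^{[n+1]})}\!\bigl[|p_{\bar S} - 1/2| > 1/4\bigr] \;\leq\; 2\exp\bigl(-\Omega(\sqrt n)\bigr).
\end{equation*}
Since the symmetric difference of two independent uniform subsets is itself uniform, a union bound over $\binom{N}{2}$ pairs shows that for some $N = \exp(\Omega(\sqrt n))$, a random collection of $N$ subsets $\{S_1, \dots, S_N\}$ satisfies $p_{S_a \triangle S_b} \in [1/4, 3/4]$ for every $a \neq b$. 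On this collection $|\langle \tilde g_{S_a, 0}, \tilde g_{S_b, 0}\rangle_{\mathcal{D}}| \leq (1/2)^n$ for all distinct indices, and invoking the standard pairwise-correlation SQ lower bound (the same machinery underlying \Cref{thm:booleanSQlower}) delivers a query lower bound of $\Omega(\tau^2 N) = \Omega(\tau^2 \exp(n^{\Omega(1)}))$.

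The main obstacle is the balance in the probabilistic step: the concentration of $p_{\bar S}$ must be strong enough (stretched-exponential in $n$) to survive a union bound over exponentially many pairs, while the gap $|1 - 2p_{S_a \triangle S_b}|$ must remain bounded away from $1$ by a \emph{constant}, not just by $1 - o(1)$, because that factor is raised to the power $n$. A $p_{S_a \triangle S_b}$ that drifted into the tails of the binomial (close to $0$ or $1$) would inflate the correlation to $\exp(o(n))$ and destroy orthogonality. The $q_i = O(n^{-1/2})$ bound is exactly what threads this needle at the $\sqrt n$ scale; strengthening $N$ beyond $\exp(\Theta(\sqrt n))$ would require finer control of $p_{\bar S}$ than Hoeffding provides, but the weaker bound already suffices for the stated $\exp(n^{\Omega(1)})$ conclusion.
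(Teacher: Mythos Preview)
Your proposal is correct and takes a genuinely different (and more direct) route than the paper.

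The paper works with the \emph{entire} family $\mathcal{H}_{ER,n}$ and applies the pairwise-independence framework of Chen et al.\ (the machinery behind \Cref{thm:booleanSQlower}): it shows that for two independent random graphs $\mA,\mA'$, $\mathbb{P}[\widehat{\vc}_{\mA}=\widehat{\vc}_{\mA'}]=O(\exp(-n^{\Omega(1)}))$ via a somewhat delicate analysis of the degree-count vector $\vc_{\mA}$ in a window $\Delta_p$ around the binomial peak, arguing that $\Theta(n^p)$ coordinates each take values $\Theta(\sqrt{n})$ and are odd with probability $1/2+o(1)$, with approximate conditional independence across coordinates.

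You instead exploit the elementary but decisive observation that the out-degrees $D_k$ are i.i.d.\ $\operatorname{Bin}(n,1/2)$ (since the rows of $\mA$ are independent), which gives the exact closed form $\langle \tilde g_{S,b},\tilde g_{S',b'}\rangle=(-1)^{b+b'}(1-2p_{S\triangle S'})^n$. You then construct probabilistically a subfamily of size $\exp(\Omega(\sqrt n))$ on which every pairwise correlation is $\le 2^{-n}$, and invoke a correlation-based SQ lower bound. This bypasses the entire conditional-independence analysis of $\vc_{\mA}$; the trade-off is that you keep only $\exp(\Theta(\sqrt n))$ of the $2^{n+2}$ functions, but since the paper's exponent is also strictly below $1/2$, both routes land at $\exp(n^{\Omega(1)})$.

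One small point of attribution: the ``machinery underlying \Cref{thm:booleanSQlower}'' in this paper is pairwise \emph{independence} (distinguishing $\mathcal{D}_{f^*}$ from $\mathcal{D}_{\operatorname{Unif}(\mathcal{C})}$), not the pairwise-\emph{correlation} bound you are actually using in your last step. You should instead cite the standard elimination argument (at most $(1+(N-1)\gamma)/\tau^2$ candidates can have correlation $>\tau$ with any single query, since $\|G\|_{\mathrm{op}}\le 1+(N-1)\gamma$), which with $N\gamma=\exp(\Theta(\sqrt n)-\Theta(n))\to 0$ gives exactly the $\Omega(N\tau^2)$ you claim.
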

\begin{proof}[Proof sketch]
    We form networks that in the message passing layer calculate $\vc_{\mA}$, which is passed to the MLP calculating \Cref{eq:hard_GNN_SQ_functions} as a sum of ReLUs via hat-like functions that mimic parities. Based on concentration properties of the Erdős–Rényi model, we show that for some $p<0.5$, at least $\Theta(n^p)$ entries of $\vc_\mA$ will have values scaling roughly as $O(\sqrt{n})$ and the probability that any such entry is odd converges to $1/2$. Also using concentration properties, we condition on a high probability region (where probability of these entries being odd are roughly independent from each other), and use hardness of learning parity functions to get SQ lower bounds.
\end{proof}

\subsection{Hardness in feature dimension}\label{sbs:gnn_feature_dimension}

In this section, we derive correlational statistical query lower bounds (CSQ) for commonly used graph neural network architectures (GNNs), and state the corresponding learning-theoretic hardness result. The techniques in this section extend those of \cite{diakonikolas2020algorithms}, resulting in an exponential (in \textit{feature} dimension) lower bound for learning one-hidden layer GNNs. 

\paragraph{$1$-hidden layer GNN} 
For a fixed directed, unweighted graph $\gG$ with $n \in \mathbb{N}$ vertices, let $\mA(\gG) \in \mathbb{R}^{n \times n }$ be the adjacency matrix or the Laplacian of $\gG$. For input feature dimension $d \in \mathbb{N}$ and width parameter $k \in \mathbb{N}$, we consider the following set of functions:
\begin{equation}
    F_{\gG}^{d,k} := \left\{ f: \R^{n \times d} \to \R, \: f(\mX) = {\bm 1}_n^\top \sigma(\mA(\gG) \mX \mW) \va \mid \mW \in \mathbb{R}^{d \times 2k}, \va \in \mathbb{R}^{2k}\right\},
\end{equation}
for some nonlinearity $\sigma$. When the graph itself is part of the input, we have the function class:
\begin{equation}
    F_n^{d,k} := \left\{ f: \R^{n \times d} \times \mathbb{G}_n \to \R, \: f(\mX, \gG)= {\bm 1}_n^\top \sigma(\mA(\gG) \mX \mW) \va \mid \mW \in \mathbb{R}^{d \times 2k}, \va \in \mathbb{R}^{2k}\right\}.
\end{equation}
  
\paragraph{Family of hard functions} 
First, we define functions $f_{\gG}: \R^{n \times 2} \to \R$ and $f_n :\R^{n \times 2} \times \bar{\gG}_n \to \R$ as
\begin{align}
    f_{\gG}(\mX) =  \bm{1}_n^\top \sigma\left( \mA(\gG)\mX\mW^* \right)\va^*, \quad
    f_n(\mX,\gG)= \bm{1}_n^\top \sigma\left( \mA(\gG)\mX\mW^* \right)\va^*,
\end{align}

\begin{align}\label{eq:hard_w_a}
    \text{ where } \quad \mW^* &:= \begin{bmatrix} \left(\cos(\pi j / k)\right)^\top_{j \in [2k]}  \\ \left(\sin(\pi j / k)\right)^\top_{j \in [2k]} \end{bmatrix} \in \R^{2 \times 2k}, \qquad \va^* = \left( (-1)^j \right)_{j \in [2k]} \in \R^{2k}.
\end{align}

Given a set of matrices $\mathcal{B} \subset \mathbb{R}^{d \times 2}$, our family of hard functions can now be written as:
\begin{align}
    C^{\mathcal{B}}_{\gG} := \left\{g^\mB_{\gG}: \mX \mapsto \frac{f_{\gG}(\mX B)}{\|f_{\gG}\|_{\mathcal{N}}} \mid B \in \mathcal{B}\right\}, \quad
    C^{\mathcal{B}}_n := \left\{g^\mB_{n}: (\mX, \gG) \mapsto \frac{f_{n}(\mX B, \gG)}{\|f_{n}\|_{\mathcal{N}\times \mathcal{E}}} \mid B \in \mathcal{B}\right\}.
\end{align} 

Below, we show exponential lower bounds \footnote{To have a meaningful set of functions, $g^\mB_\gG$ and $g^\mB_n$ must not vanish, which holds when $\sigma$ is not a low-degree polynomial ($\rm{ReLU}$ suffices, see Remark 12 of \cite{diakonikolas2020algorithms}) and $\gG \not \equiv \emptyset$ (graph with no edges).} for learning $F_{\gG}^{d,k}$ and $F_{n}^{d,k}$:
\begin{theorem}[Exponential CSQ lower bound for GNNs]\label{thm:gnn_exp_lower_bound}
    For any number of vertices $n$ independent of input dimension $d$ and width parameter $k$, let $\epsilon > 0$ be a sufficiently small error constant and $\mathcal{E}$ a distribution over $\mathbb{G}_n$, independent of $\mathcal{N}$, with $\Pr_{\mathcal{E}}(\{\emptyset\}) < \Omega(1) < 1$. Then there exists a set $\mathcal{B}$ of size at least $2^{\Omega(d^{\Omega(1)})}$ such that any CSQ algorithm that queries from oracles of concept $f \in C^{\mathcal{B}}_{\gG}$ (resp. $C^{\mathcal{B}}_n$) and outputs a hypothesis $h$ with $\|f - h\|_{\mathcal{N}} \leq \epsilon$ (resp. $\|f - h\|_{\mathcal{N} \times \mathcal{E}} \leq \epsilon$) requires either $2^{d^{\Omega(1)}}$ queries or at least one query with precision $d^{-\Omega(k)} + 2^{-d^{\Omega(1)}}$.

\end{theorem}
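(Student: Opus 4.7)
My plan is to extend the Hermite-orthogonality construction of \citet{diakonikolas2020algorithms} (DKM) from one-hidden-layer feedforward nets to the GNN setting, where the output is a sum of $n$ node-level terms coupled through the graph. The argument has three stages.

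\emph{Stage 1 (reduce to a sum of two-dimensional DKM functions).} Let $h_*(\vu) := \sum_{j=1}^{2k} (-1)^j \sigma(\langle \vu, \vw_j\rangle)$ with $\vw_j = (\cos(\pi j/k), \sin(\pi j/k))^\top$. First I would observe that
\begin{equation*}
f_\gG(\mX \mB) = \sum_{i=1}^{n} h_*(\vz_i^{\mB}), \quad \vz_i^{\mB} := (\mA(\gG)\mX\mB)_i^\top \in \R^2.
\end{equation*}
Parametrize $\mathcal{B}$ by matrices with orthonormal columns; a direct computation then shows each $\vz_i^{\mB}$ is centered Gaussian with covariance $\|\mA_i\|^2 I_2$, and for any $\mB_1,\mB_2\in\mathcal{B}$ the pair $(\vz_i^{\mB_1},\vz_j^{\mB_2})$ is jointly Gaussian with cross-covariance $(\mA\mA^\top)_{ij}\,\mB_1^\top\mB_2$.

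\emph{Stage 2 (bound pairwise correlations and lower-bound $\|f\|$).} The key input from DKM is that $h_*$ has nontrivial Hermite coefficients only at total degree $\ge 2k$, so that the expected product of $h_*$ evaluated at two jointly standard Gaussians in $\R^2$ with cross-covariance $\Sigma$ is at most $\mathrm{poly}(k)\,\|\Sigma\|_{\mathrm{op}}^{2k}$. Using ReLU positive homogeneity $h_*(c\vu)=c\,h_*(\vu)$ to rescale and applying this lemma termwise yields
\begin{equation*}
\bigl|\E[h_*(\vz_i^{\mB_1})\,h_*(\vz_j^{\mB_2})]\bigr| \le \mathrm{poly}(k)\,\|\mA_i\|\,\|\mA_j\|\,\|\mB_1^\top\mB_2\|_{\mathrm{op}}^{2k},
\end{equation*}
and summing over $i,j$ (with $\sum_i\|\mA_i\|\le\mathrm{poly}(n)$ since $n$ is fixed) gives $|\langle f_\gG(\cdot\mB_1), f_\gG(\cdot\mB_2)\rangle_{\mathcal N}|\le \mathrm{poly}(n,k)\,\|\mB_1^\top\mB_2\|^{2k}$; the $f_n$ bound follows by averaging over $\gG\sim\mathcal E$. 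A matching constant lower bound $\|f_\gG\|_{\mathcal N}=\Omega(1)$ comes from isolating the $i=j$ diagonal contributions, which dominate whenever $\gG\neq\emptyset$; for $f_n$ the hypothesis $\Pr_{\mathcal E}(\{\emptyset\})<1-\Omega(1)$ suffices.

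\emph{Stage 3 (packing $+$ CSQ lower bound).} Next I would construct $\mathcal{B}\subset\mathrm{Gr}(2,d)$ of cardinality $2^{\Omega(d^c)}$ for some $c>0$ with all pairwise $\|\mB_1^\top\mB_2\|_{\mathrm{op}}\le d^{-\Omega(1)}$, by a standard volumetric/Johnson--Lindenstrauss packing argument in the Grassmannian. After normalization, $C_\gG^{\mathcal{B}}$ (respectively $C_n^{\mathcal{B}}$) becomes a family of $2^{d^{\Omega(1)}}$ unit-norm functions with pairwise correlations bounded by $d^{-\Omega(k)}$, and invoking the standard CSQ statistical-dimension lower bound (e.g.\ Lemma 3.3 of \citet{diakonikolas2020algorithms}) yields the quoted query/precision tradeoff $2^{d^{\Omega(1)}}$ versus $d^{-\Omega(k)}+2^{-d^{\Omega(1)}}$.

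\emph{Main obstacle.} The principal new difficulty beyond DKM is the coupled double sum $\sum_{i,j}\E[h_*(\vz_i^{\mB_1})h_*(\vz_j^{\mB_2})]$ induced by node aggregation. Off-diagonal terms involve the node-similarity ratios $(\mA\mA^\top)_{ij}/(\|\mA_i\|\|\mA_j\|)$, which may be $\Theta(1)$ for graphs with highly overlapping neighborhoods; ensuring that these (i) do not swamp the $i=j$ diagonal in the lower bound on $\|f\|$, and (ii) do not spoil the $\|\mB_1^\top\mB_2\|^{2k}$ decay in the upper bound on cross-correlations, is the place where the bulk of the technical work will lie.
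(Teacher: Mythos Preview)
Your high-level plan matches the paper's: lift the DKM low-moment construction to the node-summed GNN output, use a Grassmannian packing for $\mathcal B$, and invoke the CSQ statistical-dimension lemma. The one substantive implementation difference is that the paper does not work termwise with the sum $\sum_i h_*(\vz_i)$; instead it builds an orthogonal family of \emph{graph-invariant Hermite polynomials} $H^{\mA}_J(\mX)=n^{-1/2}\sum_v H_J((\mA\mX)_v)$, shows $\langle H^{\mA}_J,H^{\mA}_{J'}\rangle=c_J\,\delta_{J=J'}$ with $c_J=n^{-1}\sum_{v,v'}\rho_{vv'}^{|J|}$ and $\rho_{vv'}=(\mA\mA^\top)_{vv'}$, expands $f_\gG$ in this basis, and proves a chain-rule lemma yielding $|\langle f_\gG(\cdot\mB_1),f_\gG(\cdot\mB_2)\rangle|\le\sum_{m\ge k}(2\sqrt n\,\|\mB_1^\top\mB_2\|)^m\|f_\gG^{[m]}\|^2$. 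Your direct joint-Gaussian argument arrives at the same bound and is more elementary; the paper's basis is more systematic and makes the $f_n$ case (averaging over $\gG\sim\mathcal E$) a one-line total-expectation step.

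Your flagged ``main obstacle'' is easier than you expect. For the norm lower bound you do not need any diagonal-dominance argument: since each $\mB$ has orthonormal columns, $\mX\mB$ for $\mX\sim\mathcal N^{n\times d}$ is distributed exactly as $\mathcal N^{n\times 2}$, so $\|f_\gG(\cdot\mB)\|_{\mathcal N}=\|f_\gG\|_{\mathcal N(\R^{n\times 2})}$ is a \emph{single constant independent of} $\mB$; after dividing by it every function in $C^{\mathcal B}_\gG$ has unit norm automatically. You only need this constant to be nonzero, and the paper secures this by taking $\mA$ to be the row-normalized adjacency with self-loops, which forces $\rho_{vv'}\in[0,1]$ and hence $c_J\ge 1$ for every $J$. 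For the cross-correlation upper bound the off-diagonal node pairs contribute only an extra factor polynomial in $n$ (since $|\rho_{vv'}|\le 1$), which is harmless as $n$ is held fixed --- exactly as your $\mathrm{poly}(n,k)$ prefactor already anticipates.
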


\begin{proof}[Proof sketch]
    Full details of the proof can be found in \Cref{app:csq_gnn}. 
    From Lemma 17 of \cite{diakonikolas2020algorithms}, there exists a set $\mathcal{B} \subset \R^{d \times 2}$ of size at least $2^{\Omega(d^{\Omega(1)})}$ such that
\begin{equation}\label{eq:orthogonal_subspace}
    \|\mB^\top \mB'\|_2  = O(d^{-\Omega(1)}) < 1\text{ if } \mB \neq \mB'
\end{equation}
and $\mB^\top \mB = \mathbf{1}_2$ which we use to index $C_{\gG}^{\mathcal{B}}$. We design a specialized class of invariant orthogonal Hermite polynomials, such that the correlation in low degree moments vanishes and is bounded as:

\begin{equation}
    \left|\langle f_{\gG}(\cdot \mB), f_{\gG}(\cdot \mB') \rangle_{\mathcal{N}}\right| \leq \|\mB^\top \mB'\|^k_2 \sum\nolimits_{m > k} \|f^{[m]}_{\gG}\|^2_{\mathcal{N}} =  \|\mB^\top \mB'\|^k_2 \|f_{\gG}\|^2_{\mathcal{N}}.
\end{equation}
Finally, we use \Cref{eq:orthogonal_subspace} to get the desired almost uncorrelatedness of elements in $C_{\gG}^{\mathcal{B}}$ and use a simple total probability argument to extend this to $C_n^{\mathcal{B}}$.  
\end{proof} 

\subsection{\texorpdfstring{$\mathsf{NP}$}{NP} hardness of proper learning of GNNs}
\label{subsec:np_hardness_lower_bound}

The classic work of \cite{blum1988training} showed that there exist datasets for which determining whether or not a 3-node neural network can fit that dataset is $\mathsf{NP}$ hard. It is straightforward to prove a similar result for invariant networks. We map the task of training a single hidden layer GNN to the $\mathsf{NP}$ hard problem of learning halfspaces with noise \citep{guruswami2009hardness,feldman2012agnostic}. Here, one is given a set of $N$ labeled examples $\{\vx_i, y_i \}_{i=1}^N$ and must determine whether there exists a halfspace parameterized by vector $\vv$ and constant $\theta$ that correctly classifies a specified fraction of the labels such that $\operatorname{sign}\left(\langle \vv, \vx_i \rangle - \theta\right) = y_i$. We map a GNN learning problem to this task, as informally described below and formally detailed in \Cref{app:NP_hardness_extension}.

\begin{proposition}[$\mathsf{NP}$ hardness of GNN training; informal]
    There exists a sequence of datasets indexed by the number of nodes $n$, each of the form $\{\mA_i, \vx_i, y_i\}_{i=1}^N$ where $y_i \in \{-1, +1\}$, such that determining whether a GNN with parameters $\va, \vb \in \mathbb{R}^k$, $c \in \mathbb{R}$ and $k=O(n)$ of the form 
    \begin{equation}
        f_{\va, \vb, c}(\vx, \mA) = c + \sum\nolimits_{i=1}^k \bm 1^\top \operatorname{relu}\left( \vx + a_i \mA \vx  \right)  b_i
    \end{equation}
    can correctly classify a specific fraction of the data is $\mathsf{NP}$ hard.
\end{proposition}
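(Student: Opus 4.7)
The plan is a polynomial-time reduction from the NP-hard problem of agnostically learning halfspaces in $\R^d$~\cite{guruswami2009hardness, feldman2012agnostic}. An instance is $\{(\vz_l, y_l)\}_{l=1}^N \subset \R^d \times \{\pm 1\}$ together with a fraction $\alpha \in (0,1)$, and deciding whether some pair $(\vv, \theta) \in \R^d \times \R$ satisfies $\operatorname{sign}(\langle \vv, \vz_l \rangle - \theta) = y_l$ on at least $\alpha N$ indices is NP-hard. Without loss of generality I may assume $\vz_l > 0$ componentwise, since translating all inputs by a common large vector is a one-to-one reduction of the halfspace family (the threshold $\theta$ simply absorbs the translation). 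From this I construct in polynomial time a GNN training instance with $n = d$ nodes, $k = d = O(n)$ channels, and the same target fraction $\alpha$.

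\textbf{Construction and correspondence.} For each $l$ I take node features $\vx_l = \vz_l$ and adjacency matrix $\mA_l = \operatorname{diag}(1, 2, \ldots, d)$ (the same matrix across all $l$). Then
\[
    \vx_l + a_i \mA_l \vx_l = \bigl((1 + j a_i)\, z_{l, j}\bigr)_{j=1}^{d},
\]
and since $z_{l, j} > 0$,
\[
    \bm{1}_n^\top \operatorname{relu}\bigl(\vx_l + a_i \mA_l \vx_l\bigr) = \sum_{j=1}^{d} z_{l, j}\, \max(0,\, 1 + j a_i).
\]
The GNN output then collapses to $f_{\va, \vb, c}(\vx_l, \mA_l) = c + \langle \bm{V}(\va, \vb), \vz_l\rangle$, where $V_j(\va, \vb) := \sum_{i=1}^{k} b_i \max(0,\, 1 + j a_i)$ --- an \emph{exact} halfspace in $\vz_l$ with normal $\bm{V}$ and threshold $-c$. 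Choosing $a_i = -1/(i + 1/2)$ for $i \in [d]$ makes $M_{ji} := \max(0,\, 1 + j a_i)$ triangular with strictly positive diagonal entries $(2i+1)^{-1}$, hence invertible, so for any target halfspace $(\vv, \theta)$ one solves $M \vb = \vv$ and sets $c = -\theta$ to realize $(\bm{V}, -c) = (\vv, \theta)$. Thus the GNN fits at least $\alpha N$ of the constructed examples if and only if the original halfspace instance admits a solution at that fraction, which completes the reduction.

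\textbf{Main obstacle.} The key technical point is keeping every ReLU in a regime where the GNN output is \emph{exactly} linear in $\vz_l$. The componentwise positivity of $\vz_l$ together with the diagonal adjacency makes each preactivation $(1 + j a_i)\, z_{l, j}$ carry the sign of $1 + j a_i$ independently of $l$, so the ReLU reduces to multiplication by the scalar $\max(0,\, 1 + j a_i)$ and the coordinates of $\vz_l$ decouple cleanly. A further subtlety will be verifying that NP-hardness of agnostic halfspace learning persists under the componentwise positivity constraint, which it does because the positivity shift is itself a polynomial-time one-to-one reduction preserving the halfspace family and the target fraction $\alpha$. All introduced constants are rational with polynomial bit length, so the reduction runs in polynomial time.
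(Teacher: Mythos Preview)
Your reduction is correct and follows the same high-level scheme as the paper (reduce from agnostic halfspace learning, show the GNN output is linear in the halfspace input, and verify surjectivity onto all halfspaces via an invertible triangular matrix built from a clever choice of the $a_i$). The chief difference is where the halfspace input is encoded: you place it in the \emph{node features} $\vx_l=\vz_l$ while holding the adjacency matrix fixed to $\mA=\operatorname{diag}(1,\ldots,d)$, whereas the paper fixes the node features to $\vx=-\bm 1$ and encodes each Boolean coordinate of $\vz$ as a disjoint clique of the corresponding size inside $\mA_i\in\{0,1\}^{n(n+1)/2\times n(n+1)/2}$. Your route is shorter and uses only $n=d$ nodes, but it relies on a weighted diagonal ``adjacency'' matrix that is not a $\{0,1\}$ graph; the paper's construction, while bulkier, establishes hardness even when every $\mA_i$ is an honest unweighted adjacency matrix, which is the stronger and more GNN-native statement (and is what the paper's formal version actually asserts). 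The triangular-invertibility device is essentially identical in both proofs: your $a_i=-1/(i+\tfrac12)$ plays the same role as the paper's $a_k=(1/2-k)^{-1}$.
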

\section{CSQ Lower bound for CNNs and Frame Averaging}\label{sec:frame_averaging} 

Several commonly used invariant architectures take the form of symmetrized neural networks over permutation subgroups $G \leq S_n$, which can be generalized as frame-averaging architectures \citep{puny2023equivariant}. This includes translation invariant CNNs, group convolutional networks, and networks with preprocessing steps such as sorting. We study the hardness of learning these invariant functions.  

\paragraph{Frame averaging and its connection with CNNs.} Let $G$ be a subgroup of the permutation group $S_n$ that acts on $\R^{n \times d}$ by permuting the rows. A frame is a function $\mathcal{F}: \R^{n \times d} \to 2^G \backslash \emptyset$ satisfying $\mathcal{F}(g \cdot \mX) = g \mathcal{F}(\mX)$ set-wise.  
As noted in \citet{puny2023equivariant}, given some frame $\mathcal{F}$, one can transform an arbitrary function (or neural network) $h(\mX)$ into an invariant function by averaging its values as $\frac{1}{|\mathcal{F}(\mX)|} \sum_{g \in \mathcal{F}(\mX)}h(g^{-1}\mX)$. For instance, setting $ \forall \mX: \mathcal{F}(\mX)=G$ recovers the Reynolds (or group-averaging) operator. Given a fixed frame $\mathcal{F}$, we will first show CSQ lower bounds for the following class of frame-averaged one-hidden-layer fully connected nets: 
\begin{align}
    \mathcal{H}_{\mathcal{F}} := \Bigg\{f: \R^{n \times d} \to \R \text{, } f(\mX) = \frac{1}{\sqrt{|\mathcal{F}(\mX)|}} \sum_{g \in \mathcal{F}(\mX)} \va^\top \sigma&(\mW^\top (g ^{-1} \mX)) \mathbf{1}_d\mid \nonumber
    \mW \in \R^{n \times k},\va \in \R^{k}\Bigg\},
\end{align}
for some nonlinearity $\sigma$. 

\begin{example}[Frame for CNN]
    Set $d = 1$ and let $G$ be the cyclic group acting on $\R^n$ via cyclic shifts of its elements with frame $\mathcal{F}(\mX) = G, \forall X \in \R^n$. Then, $\mathcal{F}^d_n$ consists of CNNs with one convolutional layer and $k$ hidden channels. 
\end{example}

\begin{remark}[Difficulty of frames] 
    Since the frame $\mathcal{F}(\mX)$ may vary by datapoint $\mX$, the distribution $\text{Unif}(\mathcal{F}(\mX)^{-1}) \circ \mX$ can be significantly different from the original distribution over $\mX$, even if $\mX \sim g \mX$ for all $g$.  
For example, consider the frame $\mathcal{F}(\mX)$ containing all permutations that sort $\mX$ lexicographically (where $|\mathcal{F}(\mX)|>1$ if $\mX$ contains a repeated row). Even if $\mX \sim \mathcal{N}$ with $\mathcal{N}$ invariant to row-permutation, the resultant distribution is not row permutation invariant. 
\end{remark}
 We focus on certain cases where such effects are not too pronounced. 
 For instance, it is simple to check that if $\mathcal{F}(\mX)$ is constant over $\mX$, then $\mathcal{F}(\mX)=G \: \forall \mX$ (\Cref{app:lemma_constant_frame}).
In the following, we assume that $G$ is a polynomial-sized (in $n$) subgroup of $S_n$.

\paragraph{Family of hard functions.} Recall the low-dimensional function from the proof of \Cref{thm:gnn_exp_lower_bound}:
\begin{equation}
    f^*_{\exp}: \R^{2 \times d} \to \R \text{ with } f(\mX) = (\va^*)^\top \sigma((\mW^*)^\top\mX) \mathbf{1}_d
\end{equation}
for some special parameter $\va^* \in \R^{2k}$ and $\mW^* \in \R^{2 \times 2k}$ in \Cref{eq:hard_w_a}. We now define the family of hard functions, indexed by a set of matrices $\mathcal{B} \subset \R^{n \times 2}$ obtained from \Cref{lem:extended_projection_set}:
\begin{align} \label{eq:hard_functions_reynolds}
    C^{\mathcal{B}}_{\mathcal{F}} = \Big\{g_{\mB}: \R^{n \times d} \to \R \text{ with } g_{\mB}(\mX) = \frac{\sum_{g \in G} f^*_{\exp}(\mB^\top g^{-1}\mX)}{\sqrt{|G|\cdot\|f^*_{\exp}\|_{\mathcal{N}}}} \mid
    \mB \in \mathcal{B} &\subset \R^{n \times 2}\Big\}.
\end{align}

\begin{theorem}[Exponential CSQ lower bound for polynomial-sized group averaging] \label{thm:hardness_reynolds}
        For feature dimension $d$ independent of inputs $n$ and width parameter $k = \Theta(n)$, let $\epsilon > 0$ be a sufficiently small error constant
        . Then there exists a set $\mathcal{B}'$ of size at least $2^{\Omega(d^{\Omega(1)})} / |G|^2$ such that: for any target $g \in C^{\mathcal{B}'}_{\mathcal{F}}$ with $\|g\|_{\mathcal{N}} = 1$, any CSQ algorithm outputting a hypothesis $h$ with $\|g - h\|_{\mathcal{N}} \leq \epsilon$ requires either $2^{n^{\Omega(1)}}/|G|^2$ queries or at least one query with precision $|G|2^{-n^{\Omega(1)}} + \sqrt{|G|} n^{-\Omega(k)}$.
\end{theorem}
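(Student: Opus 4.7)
The plan is to adapt the argument of \Cref{thm:gnn_exp_lower_bound} to the group-averaging setting. Since the hard family $C^{\mathcal{B}'}_{\mathcal{F}}$ symmetrizes the same two-dimensional target $f^*_{\exp}$ used in the GNN proof, and the standard Gaussian $\mathcal{N}$ is invariant under row permutations, we can transfer the group action from $\mX$ onto the projection matrices $\mB$. The new step relative to \Cref{thm:gnn_exp_lower_bound} is to control the $|G|$-fold sum that results when the group average is expanded inside the inner product $\langle g_{\mB_1}, g_{\mB_2}\rangle_{\mathcal{N}}$, which in turn demands a version of the Diakonikolas--Kane--Zarifis set that survives the $G$-action.

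First, I would invoke \Cref{lem:extended_projection_set} to produce a $G$-invariant set $\mathcal{B} \subset \R^{n \times 2}$ of isometries satisfying $\mB^\top \mB = \mathbf{1}_2$ and $\|\mB_1^\top \mB_2\|_2 = O(n^{-\Omega(1)})$ for distinct $\mB_1, \mB_2 \in \mathcal{B}$, with $|\mathcal{B}| \geq 2^{\Omega(n^{\Omega(1)})}$. I would then take $\mathcal{B}' \subset \mathcal{B}$ to be a set of $G$-orbit representatives restricted to those with trivial stabilizer, so that $|\mathcal{B}'| \geq |\mathcal{B}|/|G|^2$. By construction, for any $\mB_1, \mB_2 \in \mathcal{B}'$ and any $h \in G$, the permuted matrix $h^{-1}\mB_2$ still lies in $\mathcal{B}$, and equals $\mB_1$ only in the trivial case $\mB_1 = \mB_2$ and $h = e$.

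Second, I would compute pairwise correlations. Substituting $\mX \mapsto g\mX$ under the invariance of $\mathcal{N}$ collapses one copy of the group sum and gives
\begin{equation*}
    \langle g_{\mB_1}, g_{\mB_2}\rangle_{\mathcal{N}} = \frac{1}{\|f^*_{\exp}\|^2_{\mathcal{N}}} \sum_{h \in G} \langle f^*_{\exp}(\mB_1^\top \,\cdot\,),\, f^*_{\exp}((h^{-1}\mB_2)^\top \,\cdot\,) \rangle_{\mathcal{N}}.
\end{equation*}
Reusing the Hermite-decomposition bound from the proof of \Cref{thm:gnn_exp_lower_bound} (which relies on the vanishing low-degree Hermite coefficients of $f^*_{\exp}$), each summand is at most $\|\mB_1^\top h^{-1}\mB_2\|_2^k \, \|f^*_{\exp}\|^2_{\mathcal{N}}$. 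For distinct $\mB_1, \mB_2 \in \mathcal{B}'$, the construction of $\mathcal{B}'$ forces this to be at most $n^{-\Omega(k)}$ uniformly in $h$, producing a total correlation bounded by $|G|\cdot n^{-\Omega(k)}$. For the self-correlation the $h=e$ term gives $1$ after normalization, while the other $|G|-1$ terms contribute a norm perturbation of the same order.

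Finally, I would feed these quantities into the standard CSQ lower bound from statistical dimension (cf.\ \Cref{app:extended_learning_background}): a family of $N$ near-unit-norm concepts with pairwise correlations at most $\gamma$ forces any successful CSQ learner to issue $\Omega(N)$ queries or to request a query of tolerance $O(\sqrt{\gamma} + 1/\sqrt{N})$. With $N \geq 2^{\Omega(n^{\Omega(1)})}/|G|^2$ and $\gamma \leq |G|\, n^{-\Omega(k)}$, this yields the two precision terms $\sqrt{|G|}\, n^{-\Omega(k)}$ (from $\sqrt{\gamma}$) and $|G|\, 2^{-n^{\Omega(1)}}$ (from $1/\sqrt{N}$) that appear in the statement. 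The main obstacle is step one: obtaining a near-orthogonal set that remains near-orthogonal under the simultaneous action of \emph{all} $h \in G$, not just the identity, genuinely strengthens Lemma 17 of \cite{diakonikolas2020algorithms}, and the combined bookkeeping of orbit sizes, stabilizers, and near-collisions between distinct orbits is what ultimately forces the loss of $|G|^2$ rather than the more optimistic $|G|$.
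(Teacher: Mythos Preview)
Your overall architecture is right, and your treatment of the pairwise correlation $\langle g_{\mB_1}, g_{\mB_2}\rangle_{\mathcal{N}}$ for distinct $\mB_1,\mB_2$ is essentially what the paper does. The gap is in the self-correlation step, i.e.\ in verifying that $\|g_{\mB}\|_{\mathcal{N}}^2 = \Omega(1)$ before normalizing.

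You claim that the $h\neq e$ summands in
\[
\|g_{\mB}\|_{\mathcal{N}}^2 \;=\; \frac{1}{\|f^*_{\exp}\|^2_{\mathcal{N}}}\sum_{h\in G} \langle f^*_{\exp}(\mB^\top\,\cdot\,),\, f^*_{\exp}((h\mB)^\top\,\cdot\,)\rangle_{\mathcal{N}}
\]
are ``of the same order'' as the cross correlations, i.e.\ bounded by $\|\mB^\top h\mB\|_2^k = n^{-\Omega(k)}$. That is false for a general polynomial-sized $G\leq S_n$. Part~3 of \Cref{lem:extended_projection_set} tells you that the diagonal entries of $\mB^\top h\mB$ concentrate near $F_{e,h}/n$, where $F_{e,h}$ is the number of fixed points of $h$. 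If $h$ has $n-O(1)$ fixed points (e.g.\ $G$ contains a transposition), then $\|\mB^\top h\mB\|_2$ is bounded away from $0$ by a constant, and raising it to the $k$-th power gives only a constant, not $n^{-\Omega(k)}$. Consequently your norm estimate could collapse, and after normalizing, the pairwise-correlation bound would be destroyed. This is also why your proposed construction of a genuinely $G$-invariant near-orthogonal set $\mathcal{B}$ cannot exist in general: within a single orbit you would need $\|\mB^\top h\mB\|_2$ small for all $h\neq e$, which fails for the same reason.

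The paper does not try to make those terms small. Instead it splits the $h\neq e$ sum into two regimes. When $\max(|\langle h\vx,\vx\rangle|,|\langle h\vy,\vy\rangle|)$ is below a fixed constant $c_1<1/\sqrt{3}$ (here $\mB=[\vx\;\vy]$), the Hermite bound with $k=\Theta(n)$ does give $(|G|-1)(c_1\sqrt{3})^k = o(1)$. For the remaining regime, where $F_{e,h}/n$ is bounded below, the paper uses the sharper decomposition of the inner product (their \Cref{cor:sharper_decomp}) together with the concentration in part~3 of \Cref{lem:extended_projection_set} to show that $\mB^\top h\mB$ satisfies $\vz^\top(\mB^\top h\mB)\vz\geq 0$ for all $\vz\in\R^2$; a Kronecker-power PSD lemma then forces each of those cross terms to be \emph{nonnegative}, so they may simply be dropped when lower-bounding $\|g_{\mB}\|^2$. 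This PSD step is the missing idea in your proposal.
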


\begin{proof}[Proof sketch]
    Using a union bound argument over the original set $\mathcal{B}$ from \citet{diakonikolas2020algorithms}, and concentration of inner product between permuted unit vectors, we obtain from \Cref{lem:extended_projection_set} a set of orthogonal matrices $\mathcal{B}'$ such that both $\| \mB (\mB')^T \|$ and $\| g\mB (g'\mB')^T \|$ are small $\forall \mB \neq \mB' \in \mathcal{B}'$, $\forall g \neq g' \in G$. This construction costs a factor of $|G|^2$ in $|\mathcal{B}'|$. The rest of the proof proceeds similarly; 
    see \Cref{app:subsec_exp_lower_bound_frames}.
\end{proof}

We also obtain superpolynomial lower bounds for more general frames using the technique of \citet{goel2020superpolynomial}. Here, we drop the assumption that $G$ is polynomially-sized. 
The hard functions are based on parity functions, similar to \citet{goel2020superpolynomial} (but with an extra input dimension for $\mX$):
\begin{equation}
    f_S: \R^{n \times d} \to \R \text{ with } f_S(\mX) =  \mathbf{1}_d^\top \Big(\sum_{\vw \in \{-1,1\}^m}  \chi(\vw) \sigma(\langle \vw, \mX_S\rangle / \sqrt{m})\Big), \qquad S \subseteq_m [n].
\end{equation}
where $\subseteq_m$ indicates a size $m$ subset, $\chi$ is the parity function $\chi(\vw) = \prod_i w_i$ and $\mX_S \in \R^{m \times d}$ denotes entries of $\mX$ in $S$. Here, $\langle \vw, \mX_S \rangle := \sum_{i = 1}^m w_i (\mX_S)_{i,:}$ and $\sigma: \R^d \to \R^d$ is an activation function. 
The hard \textit{invariant} function class consists of frame-averages of the original functions above:
\begin{equation}
    \mathcal{H}_{\mathcal{S}, \mathcal{F}} := \left\{\tf_S: \R^{n \times d} \to \R \text{ with } f(\mX) = \frac{1}{\sqrt{|\mathcal{F}(\mX)|}} \sum_{g \in \mathcal{F}(\mX)} f_S(g^{-1}  \mX) \mid S\subset_m [n] \right\}.
\end{equation}

Below, we show a hardness result for this class inspired by that of \cite{goel2020superpolynomial}, whenever the frame $\mathcal{F}$ is \textbf{sign invariant}: $\mathcal{F}(\mX)=\mathcal{F}(\mX \circ \vz)$ for almost all $\mX$ and for all $\vz\in\{-1,1\}^n$.
\begin{theorem}[Superpolynomial CSQ lower bound for sign-invariant frame averaging]
If $\mathcal{F}$ is a sign-invariant frame, then 
for any $c > 0$, any CSQ algorithm that queries from an oracle of concept $f \in \mathcal{H}_{\mathcal{S}, \mathcal{F}}$ and output a hypothesis $h$ with $\|f - h\|_{\mathcal{N}} \leq \Theta(1)$ needs at least $\Omega(n^{\Omega(\log n)} / M(n))$ queries or at least one query with precision $O(n^{-c/2})$, where $M(n)$ is the size of the largest orbit of size $m \leq \log(n)$ subsets of $[n]$ under $G$. If moreover $|\mathcal{F}(\mX)|=1$ for almost all $\mX$ (a ``singleton'' frame), then 
the same result holds with $M(n)$ replaced by $1$.

\end{theorem}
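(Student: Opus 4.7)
The plan is to mimic the superpolynomial lower bound of \citet{goel2020superpolynomial} for subset-parity functions, adapted via an orbit reduction over the action of $G$ on size-$m$ subsets of $[n]$. The crucial structural fact is that each base function $f_S$ is \emph{sign-equivariant}: by the substitution $\vw \mapsto \vw \circ \vz_S$ in its defining sum, one has $f_S(\mX \circ \vz) = \chi(\vz_S) f_S(\mX)$ for every $\vz \in \{-1,1\}^n$. Because $g \in S_n$ permutes the rows, one also has $g^{-1}(\mX \circ \vz) = (g^{-1}\mX) \circ (g^{-1}\vz)$, which combined with the previous identity gives $f_S(g^{-1}(\mX \circ \vz)) = \chi(\vz_{g(S)}) f_S(g^{-1}\mX)$, where $g(S) := \{g(i) : i \in S\}$.

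The next step is to compute $\langle \tf_S, \tf_{S'} \rangle_{\mathcal{N}}$ for $S, S'$ of size $m$. Since $\mathcal{N}$ is sign-invariant and $\mathcal{F}$ is sign-invariant by assumption, we may introduce an independent random sign $\vz \sim \mathrm{Unif}(\{-1,1\}^n)$ inside the expectation without changing its value, yielding
\begin{equation*}
\langle \tf_S, \tf_{S'} \rangle_{\mathcal{N}} = \E_{\mX, \vz}\!\left[\frac{1}{|\mathcal{F}(\mX)|} \sum_{g, g' \in \mathcal{F}(\mX)} \chi(\vz_{g(S)})\chi(\vz_{g'(S')})\, f_S(g^{-1}\mX) f_{S'}(g'^{-1}\mX)\right].
\end{equation*}
Averaging over $\vz$ first, $\E_{\vz}[\chi(\vz_A)\chi(\vz_B)] = \mathbb{1}[A = B]$, so only pairs $(g, g')$ with $g(S) = g'(S')$ contribute; this forces $S$ and $S'$ into the same $G$-orbit. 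Picking one representative $S$ from each orbit of size-$m$ subsets of $[n]$ therefore produces a collection of \emph{pairwise orthogonal} functions of cardinality at least $\binom{n}{m}/M(n)$. In the singleton case $|\mathcal{F}(\mX)| \equiv 1$, the inner double sum collapses to a single term and the surviving condition is simply $S = S'$, with no orbit reduction required, which yields $\binom{n}{m}$ orthogonal functions.

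The proof concludes with a norm bound on $\|\tf_S\|_{\mathcal{N}}$ (via the same sign-averaging applied with $S = S'$) followed by the standard statistical dimension lower bound: $N$ pairwise orthogonal, suitably normalized functions force any CSQ learner outputting an $O(1)$-accurate hypothesis to use either $\Omega(N)$ queries or at least one query of precision $O(1/\sqrt{N})$. Setting $m = \Theta(\log n)$ gives $N = n^{\Omega(\log n)}/M(n)$ (respectively $n^{\Omega(\log n)}$ in the singleton case), and tracking constants through the precision analysis of \citet{goel2020superpolynomial} reproduces the $O(n^{-c/2})$ bound for any fixed $c > 0$. I expect the main obstacle to be the norm bound in the non-singleton case, where the diagonal ($S = S'$) expansion leaves cross-terms $f_S(g^{-1}\mX) f_S(g'^{-1}\mX)$ with $g \neq g'$ in the setwise stabilizer of $S$; one must show that these contribute only a uniformly bounded multiplicative factor so that the normalization is well-controlled independent of $\mX$. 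A secondary technicality is that sign-invariance of $\mathcal{F}$ holds only almost everywhere, which must be handled by a routine null-set argument before applying the sign-averaging identity under $\mathcal{N}$.
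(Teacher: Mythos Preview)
Your orthogonality argument is correct and matches the paper exactly: insert a random sign $\vz$ via sign-invariance of $\mathcal{N}$ and $\mathcal{F}$, factor out $\chi_{g(S)}(\vz)\chi_{g'(S')}(\vz)$, average over $\vz$ to kill all pairs with $g(S)\neq g'(S')$, and then pick one representative per $G$-orbit (or all $S$ in the singleton case). The paper then cites the statistical-dimension machinery of \citet{goel2020superpolynomial} for the query/precision conclusion, just as you propose.

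On the norm bound---which the paper treats as a separate lemma rather than part of the theorem proper---your diagnosis of the obstacle is slightly off. The stabilizer cross-terms with $g(S)=g'(S)$ are harmless: since $f_S(g^{-1}\mX)=f_{g(S)}(\mX)$ depends only on the \emph{set} $g(S)$, those terms equal $f_{g(S)}(\mX)^2\geq 0$ and can only increase $\|\tf_S\|^2$. The genuine difficulty is that even the diagonal contribution $\E_\mX\bigl[|\mathcal{F}(\mX)|^{-1}\sum_{g\in\mathcal{F}(\mX)}f_{g(S)}(\mX)^2\bigr]$ need not be comparable to $\|f_S\|^2$ for a fixed $S$, because the frame is $\mX$-dependent (and this persists in the singleton case). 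The paper does not attempt a uniform per-$S$ bound; instead it averages over $S$ (where the reindexing $T=g(S)$ collapses the frame sum to give $\E_S\|\tf_S\|^2\geq\E_S\|f_S\|^2$), upper-bounds $\max_S\|\tf_S\|^2$ by a polynomial under the extra assumption that $|\mathcal{F}(\mX)|$ is a.s.\ polynomial in $n$, and applies Paley--Zygmund to extract a superpolynomial subset of $\mathcal{H}_{\mathcal{S},\mathcal{F}}$ with norm $\Omega(\mathrm{poly}(n)^{-1})$.
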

\begin{proof}[Proof sketch.] The proof hinges on the structure of $f_S$, and in particular that applying $g$ to the input of $f_S$ yields $f_{gS}$. The sign-invariance of $\mathcal{F}$ is necessary to ensure that the proof of \citet{goel2020superpolynomial}, which involves introducing random sign-flips, goes through. See \Cref{app:lemma_goel_frame} for the full proof.
\end{proof}

\begin{example}[Singleton frames]
An example of such a frame is the lexicographical sorting of $\mX \in \R^{n \times d}$ according to the absolute value of the entries (to make the frame sign-invariant). With probability $1$, $\mX$ does not have any repeated values, and thus there is a unique lexicographical sort for $\mX$ almost surely. In fact, for any group $G$, a singleton frame exists if $\mX$ has no self-symmetry with probability 1, i.e. $\mX \neq g \mX$ for all $g\in G$.
\end{example}

For the hard class function to be nonvanishing, we show in \Cref{cor:lower_bound_goel} that at least a superpolynomial-sized subset of $\mathcal{H}_{\mathcal{S}, \mathcal{F}}$ has norm lower-bounded by $\Omega(\textup{poly}(n)^{-1})$, assuming that $|\mathcal{F}(\mX)|$ is polynomial in $n$  almost surely (note that this does not mean $|G|$ has to be polynomial in $n$).
\section{Separation between SQ and CSQ for invariant function classes}
\label{sec:invariant_polynomials}

A natural question is whether there exist real-valued function classes which are hard to learn in the CSQ setting (and hence by gradient descent with mean squared error loss), but efficient to learn via a more carefully constructed \emph{non-CSQ} algorithm. 
In the general (non-invariant) setting, one 
such separation is that of learning $k$-sparse polynomials, i.e., degree $d$ polynomials in $n$ variables expressible as sums of only $k$ unique monomials over inputs drawn from a product distribution $\mathcal{D} = \bigtimes_{i = 1}^n \mu_i$. Here, an efficient SQ (but not CSQ) algorithm called the GROWING-BASIS algorithm is known from \cite{andoni2014learning}. CSQ algorithms require $\Omega(n^d)$ queries corresponding to the number of degree $d$ orthogonal polynomials, but the GROWING-BASIS algorithms learns with $O(nkf(d))$ SQ queries, where $f(d)$ may be exponential in $d$ but independent of $n$.  
We informally illustrate the invariant extension of this here, leaving complete details to \Cref{app:invariant_polynomials}.

To perform this extension, we apply GROWING-BASIS to more general spaces (so-called `rings') of invariant polynomials whose basis consists of independent invariant functions or `\emph{generators}' $\{g_i \in \R[\vx]^G\}_{i \in [r]}$, for some $r \in \mathbb{N}$. Independence here asserts that any $g_i$ cannot be written as a polynomial in the others. The space of all polynomials in $g_i(\vx)$ with degree $d$ and sparsity $k$ (both in $g_i$'s expansion) is denoted by $\R[g_1(\vx),\ldots,g_r(\vx)]_{d,k}$. We have the following separation result:
\begin{lemma}[informal]\label{lemma:separation_invariant_polynomial}
    Let $\vx \sim \mathcal{D}$ such that the induced distribution of $(g_i(\vx))_{i \in [r]}$  is a product distribution. Then any CSQ algorithm that learns $f \in \R[g_1(\vx),\ldots,g_r(\vx)]_{d,k}$ of degree at most $d=O(\log n)$ to a small constant error requires $\Omega(r^{d})$ CSQ queries of bounded tolerance. However, the GROWING-BASIS algorithm learns $f$ with $O(kr^2 \log r)$ SQ queries of precision $\Omega(1/(k\operatorname{poly}(r)))$.  
\end{lemma}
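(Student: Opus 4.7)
The plan is to handle the two assertions separately: a CSQ lower bound via a large family of pairwise orthonormal $k$-sparse invariant polynomials, and a matching SQ upper bound by lifting the GROWING-BASIS algorithm of \cite{andoni2014learning} so that it operates on the generators $g_1(\vx),\ldots,g_r(\vx)$ directly.

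For the lower bound, I would exploit the hypothesis that the induced distribution of $(g_1(\vx),\ldots,g_r(\vx))$ is a product distribution, so there is an orthonormal polynomial sequence $\{q_j^{(i)}\}_{j \geq 0}$ for each marginal. For every $\va \in \{0,1\}^r$ with $|\va|_1 = d$, the function $P_\va(\vx) := \prod_{i : a_i = 1} q_1^{(i)}(g_i(\vx))$ is a product of independent mean-zero unit-variance random variables and is therefore pairwise orthonormal (in $\va$) in $\R[g_1(\vx),\ldots,g_r(\vx)]$. Because each $q_1^{(i)}(g_i)$ is affine in $g_i$, $P_\va$ expands into at most $2^d$ monomials in the generators, so $P_\va \in \R[g_1(\vx),\ldots,g_r(\vx)]_{d,k}$ with $k \leq 2^d$, which is $\operatorname{poly}(n)$ when $d = O(\log n)$. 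There are $\binom{r}{d} = \Omega((r/d)^d) = r^{\Omega(\log n)}$ such functions, and a standard statistical-dimension argument as in \Cref{subsec:learning_framework_bg} then yields a CSQ lower bound of $\Omega(r^d)$ queries at the claimed tolerance.

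For the upper bound, I would observe that SQ queries may evaluate any bounded function of $(\vx, y)$ and hence of $(g_1(\vx),\ldots,g_r(\vx), y)$. Under the product-distribution assumption on the generators, the GROWING-BASIS algorithm of \cite{andoni2014learning} transplants essentially verbatim with the generators playing the role of the coordinates: maintain a candidate set $T$ of monomials in the $g_i$'s, estimate the correlation of $f$ against each element of $T$ via a single SQ query per candidate, greedily add the monomial with the largest estimated coefficient to the recovered support while expanding $T$ by its ``degree-neighbors'', and terminate after at most $k$ such rounds followed by a least-squares solve. Accounting for $r$ generators in place of $n$ coordinates yields the stated $O(kr^2 \log r)$ SQ queries at tolerance $\Omega(1/(k\operatorname{poly}(r)))$.

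The main obstacle I expect is the sparsity bookkeeping for the lower bound: confirming that the products $P_\va$ of linear orthogonal polynomials in the $g_i$'s are simultaneously (i) orthonormal, (ii) within the sparsity/degree budget defining $\R[g_1,\ldots,g_r]_{d,k}$, and (iii) sufficiently numerous to drive the statistical dimension to $r^{\Omega(\log n)}$ under the constraint $d = O(\log n)$. A secondary issue on the upper-bound side is ensuring that the moments of $g_i(\vx)$ appearing inside GROWING-BASIS can be estimated with only $\operatorname{poly}(r,k)$ samples at the stated precision, which reduces to uniform control of low-order moments of the marginals in the product distribution.
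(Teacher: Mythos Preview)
Your proposal is correct and follows essentially the same approach as the paper: both arguments build an orthonormal family of hard functions out of tensor products of univariate orthogonal polynomials in the generators (exploiting the product-distribution hypothesis) to lower-bound the statistical dimension, and both invoke GROWING-BASIS verbatim on the generator coordinates for the SQ upper bound. Two minor remarks: (i) the paper takes sparsity $k$ in the \emph{orthogonal} basis $\{H_{\vv}\}$, so each of your $P_{\va}$ already has sparsity $1$ and your $2^d$ monomial-expansion worry is moot (the paper treats canonical-basis sparsity separately in a corollary); (ii) the paper uses all multi-indices of degree $\le d$ rather than only multilinear ones, which gives $(r+1)^d$ rather than $\binom{r}{d}$ hard functions---your count is off by roughly $d^d$, though this does not affect the $r^{\Omega(\log n)}$ conclusion.
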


A more rigorous exposition and proof is prepared in \Cref{app:invariant_polynomials}. 
Finding independent generators itself can be a challenging task; however, we provide an example below, and more in \Cref{app:invariant_polynomials}, for separations over groups where $r = \Omega(n)$ generators are known \citep{derksen2015computational}. 

\begin{example}
    The sign group $G$ flips the sign of input elements and is commonly used to study eigenvectors \citep{lim2022sign}. $G$ is indexed by vectors $\vv \in \{-1,+1\}^n$ with representation $\rho(\vv) \cdot \vx = \vx \odot \vv$ where $\odot$ represents pointwise multiplication. Consider generators $g_1, \dots, g_n$ where $g_i(\vx) = |\vx_i|$. For inputs $\vx \sim \operatorname{Unif}([-1,+1]^n)$, the distribution of $ [g_1(\vx), \dots, g_n(\vx)]$ is a product distribution $\operatorname{Unif}([0,1]^n)$ with the ``shifted" Legendre polynomials as the orthogonal polynomials. For $d$ at most $O(\log n)$, any CSQ algorithm requires $\Omega(n^d)$ queries of bounded tolerance to learn these polynomials, whereas the GROWING-BASIS algorithm learns in $O(k n^2 \log n)$ SQ queries. 
\end{example}

\section{Experiments}\label{sec:experiments}
\begin{figure}[t]
    \centering
    \vspace*{0mm}
    \begin{subfigure}[b]{0.48\textwidth}
        \includegraphics{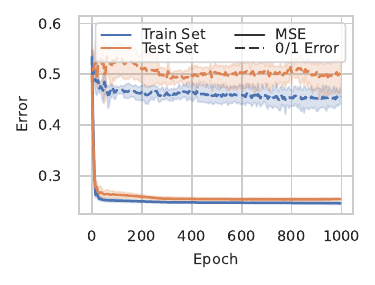}
        \vspace*{-3mm}
        \caption{GNN learning $\mathcal{H}_{ER,n}$ }
        \label{fig:GNN_performance}
    \end{subfigure}
    \hfill
    \begin{subfigure}[b]{0.48\textwidth}
        \includegraphics{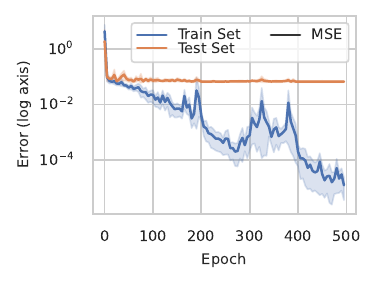}
        \vspace*{-3mm}
        \caption{CNN learning $C^{\mathcal{B}}_{\mathcal{F}}$}
        \label{fig:CNN_performance}
    \end{subfigure}
    \vspace*{0mm}
    \caption{Overparameterized GNN (a) and CNN (b) fail to learn functions from the class $\mathcal{H}_{ER,n}$ and $C^{\mathcal{B}}_{\mathcal{F}}$ respectively by either failing to fit the training set or overfitting the data. 
    Plots are aggregated and averaged over five random realizations. }
    \vspace{-3mm}
    \label{fig:experiment_performance}    
\end{figure}

We train overparameterized GNNs and CNNs on the hard functions from \Cref{sec:csq_gnns} and \Cref{sec:frame_averaging}, respectively. Specifically, we attempt to learn relatively small instances drawn from the function classes $\mathcal{H}_{ER,n}$ for graphs (\Cref{thm:SQ_hardness_ER_graphs}) and  $C^{\mathcal{B}}_{\mathcal{F}}$ for translationally invariant data (\Cref{thm:hardness_reynolds}). For the function class $\mathcal{H}_{ER,n}$, we consider $n=15$ node inputs and the target function $g_{S,b}$ where $S \subseteq [n]$ is a random subset of size $7$ and $b$ is either $0$ or $1$ at random. For the function class $C^{\mathcal{B}}_{\mathcal{F}}$, we choose a random orthogonal matrix $\mB$ and then symmetrize the function class as per \Cref{eq:hard_functions_reynolds}.

\Cref{fig:GNN_performance,fig:CNN_performance} plot the performance of the GNN and CNN respectively. The GNN is unable to fit even the training data consisting of $225$ $n=15$ node graphs drawn uniformly from the Erdős–Rényi model (i.e., $p=0.5$). The CNN trained on $n=50$ dimension inputs fits the training set of size $500$ appropriately, but fails to generalize in the mean squared error loss. The GNN/CNN were overparameterized with $3$ layers of graph/cyclic convolution followed by a two layer ReLU MLP on the aggregated invariant features. We refer the reader to \Cref{app:experimental_details} for further details.

\section{Discussion }

Our study asked how hard it is to learn relatively simple function classes constrained by symmetry. Setting aside mathematical formality, one may conjecture that such function classes are good representations or approximations of the data observed in nature. Our results indicate that such an assumption is unlikely to be a realistic one --- at least in worst-case settings, where we show such functions would be exponentially hard to learn. Provable guarantees of learning will have to incorporate further assumptions in the model class or biases in training to better account for the practical success of learning algorithms \citep{lawrence2021implicit,gunasekar2018implicit,le2022training}.

We now state potential directions for future work and conjectures. It would be interesting to extend our hardness results beyond the SQ setting or to apply to continuous groups. For example, a natural question is whether symmetric functions are cryptographically hard to learn, as shown in \cite{chen2022hardness} for two hidden layer ReLU networks. A line of work in probability and statistics has identified barriers to many algorithms solving average-case settings of classic optimization problems, such as max cut or largest independent set. These include the overlap gap property \citep{gamarnik2021overlap} and bounds on low degree algorithms \citep{hopkins2017bayesian,brennan2020statistical}. Whether these barriers extend to the types of problems studied in geometric deep learning is an open question. We should note that there are restricted settings where learning neural networks is possible (see \Cref{app:extended_related_works}). Recent work \citep{chen2022learning,chen2023faster} has shown that ReLU networks with $O(1)$ hidden nodes are learnable in polynomial time. Extending this result to invariant network classes with $O(1)$ channels would similarly expand the set of learnable invariant networks.

\section*{Acknowledgements}
The authors thank Sitan Chen for insightful discussions and feedback. BTK and MW were supported by the Harvard Data Science Initiative Competitive Research Fund and NSF award 2112085. TL and SJ were supported by NSF awards 2134108 and CCF-2112665 (TILOS AI Institute), and Office of Naval Research grant N00014-20-1-2023 (MURI ML-SCOPE). HL is supported by the Fannie and John Hertz Foundation and the NSF Graduate Fellowship under Grant No. 1745302.

\bibliography{main}
\bibliographystyle{plainnat}

\newpage

\appendix

\section*{Table of Contents}
\startcontents[sections]
\printcontents[sections]{l}{1}{\setcounter{tocdepth}{2}}

\section{Extended related works}\label{app:extended_related_works}

\paragraph{Restricted learning algorithms for CNNs and GNNs}
Various works show that convolutional and graph neural networks can be efficiently learned under various assumptions on the form of the network. For given input distributions, \cite{brutzkus2017globally,du2017convolutional} give an efficient learning algorithm for learning a single convolution filter and \cite{zhong2017learning} give a polynomial time algorithm for learning a sum of a polynomial number of convolution kernels applied to an input. These can be viewed as a single hidden layer network where the last layer weights are all equal to one. \cite{du2018improved} (improving on \cite{goel2018learning}) give a polynomial time algorithm for learning single layer convolutional networks (not necessarily invariant) with filters that have stride at least half the filter width. These CNNs have a single channel but are not invariant to the cyclic group due to the presence of weights in the second layer that vary by each patch of the filter. Similarly, \cite{du2018gradient} show that gradient descent can also learn networks of a similar form with non-overlapping patches in the random setting (weights are random) with high probability. \cite{oymak2018end} learn convolutional networks of depth $D$ layers with a single kernel per layer and make a similar assumption that stride is at least as large as width. \cite{zhang2020fast,li2021learning} provide learning algorithms for a class of one hidden layer GNNs via gradient descent based algorithms whose sample complexity and runtime depend on factors like the condition number or norm of the weights of the target function. These algorithms are only guaranteed to be polynomial in runtime for the restricted instances when such factors are appropriately bounded.

\paragraph{Learning feedforward neural networks}
The study of the hardness of learning feedforward nueral networks has a rich history dating back many decades. \cite{judd1987learning,blum1988training} show that in the proper learning setting, it is $\mathsf{NP}$ complete to find a set of weights of a given network that fits a given training set. These results were later expanded in \citep{zhang2017learnability} to more realistic settings.
Many works study the hardness of improperly learning the class of ReLU feedforward networks. In the SQ setting, \cite{goel2018learning,diakonikolas2020algorithms,song2017complexity} show that at least a superpolynomial number of queries are needed to learn even single hidden layer networks in the correlational statistical query model. \cite{chen2022hardness} show that networks with two hidden layers are even hard in the general SQ setting by using the networks to round inputs to boolean inputs and applying hardness results from Boolean learning theory and cryptography. We should remark that there are also other papers that reduce the task of learning feedforward neural networks to average case or cryptographically hard problems \citep{song2021cryptographic,daniely2020hardness}. To sidestep these hardness results and provide proofs of learnability in the PAC setting, a number of works make assumptions on the networks or inputs/outputs to give efficient PAC learning algorithms for single hidden layer neural networks \citep{bakshi2019learning,goel2018learning,shamir2018distribution,sedghi2016provable,vempala2019gradient}. Such assumptions include bounds on the condition number \citep{bakshi2019learning}, approximation guarantees of the network by polynomials \citep{vempala2019gradient}, positivity of the second layer weights \cite{diakonikolas2020algorithms}, and others. In perhaps the most general setting, \cite{chen2023faster,chen2023learning} give a polynomial time learning algorithm for learning single hidden layer ReLU networks with $O(1)$ hidden nodes (i.e. runtime is exponential in the number of nodes but polynomial in other parameters such as the input size and error). 

\paragraph{Random neural network learnability} Though the class of neural networks may be challenging to learn, various works study the average case setting where one is interested in learning random neural networks or random features from neural networks. In fact, prior work has shown that the complexity of such random neural networks, measured via statistical measures, continuity, or robustness, is rather low though this is by no means a guarantee of learnability \citep{kalimeris2019sgd,de2019random,shah2020pitfalls,valle2018deep}. Various hardness results exist for this setting of learning random neural networks. \cite{das2019learnability} gives SQ hardness results for learning neural networks with sign activation that shows query complexity increasing exponentially with the depth of the network. \cite{daniely2020hardness} show that neural networks with random weights are hard to learn if the input distribution is allowed to depend on the weights based on reductions to random K-SAT.  Nevertheless, recent work by \cite{daniely2023most} shows that random constant depth neural networks with ReLU activation can be learned in error $\epsilon$ in time scaling as $d^{\operatorname{polylog}(1/\epsilon)}$ where $d$ is the size of the network.

\paragraph{Sample complexity and generalization bounds} 
Separate from the question of computational complexity of learning is the question of how many samples are needed to learn a function class.
Specific to (exactly or approximately) invariant neural network architectures, there exist papers studying the sample complexity of learning convolutional neural networks \citep{du2018many,zhou2018understanding,cao2019tight} and graph neural networks \citep{zhang2020fast}.

Various papers consider how generalization bounds improve when enforcing equivariance. These include generalization bounds based on covering numbers or the complexity of the invariant function space \cite{sokolic2017generalization,petrache2023approximation}, based on invariant kernel method algorithms \cite{bietti2021on,tahmasebi2023exact,elesedy2021provably}, and based on equivariant versions of norm-based PAC-Bayesian generalization bounds
\citep{behboodi2022pac}. For the group of translations, there also exist papers studying generalization bounds for architectures with convolutional layers though these architectures are not strictly invariant \citep{long2019generalization,zhou2018understanding}. Similarly, there are various generalization bounds as well for the class of graph neural networks \citep{lv2021generalization,verma2019stability}.

\paragraph{Other related topics} Though not directly a statement of computational hardness, various papers give no-go theorems for learning function classes by studying expressivity limitations of graph neural networks \citep{jegelka2022theory,wu2020comprehensive,loukas2019graph}. These expressivity results provide a set of functions that a graph neural network cannot express and thus by definition also cannot learn to arbitrary accuracy. The most well known set of results are related to limitations of graph neural networks in distinguishing non-isomorphic graphs via the Weisfeiler-Lehman hierarchy \citep{xu2018powerful,morris2019weisfeiler,geerts2022expressiveness}. Other work show expressivity limitations of GNNs by studying their power in expressing invariant polynomials \citep{puny2023equivariant}, identifying graph biconnectivity \citep{zhang2023rethinking}, counting substructures \citep{chen2020can}, and through various other means.

Various works study the implicit bias of neural networks primarily to help address why neural networks can even learn in overparameterized settings \citep{vardi2023implicit,chizat2020implicit,ji2020directional}. Here, the aim of these works is to show that gradient descent will converge to specific regularized functions that fit a given training set. Implicit bias results have been proven for linear but multi-layer classes of convolutional neural networks \citep{gunasekar2018implicit,yun2020unifying,le2022training} and equivariant neural networks \citep{lawrence2021implicit,chen2023implicit}. Generally, these results show that gradient descent on the parameter space of such neural networks is implicitly regularized under some norm or semi-norm depending on properties of the group. Extending these results to more realistic settings is likely challenging.

In Boolean learning theory, there are various results on the problem of learning a symmetric $k$-junta \citep{mossel2003learning,lipton2005fourier,kolountzakis2005learning}. Here, we are promised that the Boolean function only depends on $k$ variables, and within that $k$ variable subset, the function is symmetric and thus only depends on the number of $1$s in that subset. For this problem, \cite{kolountzakis2005learning} achieve a runtime $n^{O(k \log k)}$ that is polynomial in $n$ for fixed $k$. Note, that the functions we study in the Boolean setting are assumed to be symmetric over the whole input space and not just on a size $k$ subset of the inputs bits which drastically simplifies the problem. In fact, the hardness of the symmetric $k$-junta learning problem largely arises from challenges in finding which $k$ bits are in the support of the Boolean function.

Finally, in quantum machine learning, there has been a recent interest in studying quantum variational algorithms or quantum neural networks that are constrained under symmetries \citep{meyer2023exploiting,ragone2022representation,skolik2023equivariant,cong2019quantum}. Some recent theoretical work has quantified the hardness and sample complexity associated with these models that are generally linear though acting on high dimensional Hilbert spaces \citep{pesah2021absence,schatzki2022theoretical,anschuetz2022quantum,nguyen2022theory,anschuetz2022efficient}. Since these quantum models typically operate in a different regime than classical models and are typically focused on quantum tasks, they are out of the scope of our current work. 

\section{Extended background into learning frameworks}\label{app:extended_learning_background}

There are various models used to rigorously quantify the hardness of learning. We recommend the review in \cite{reyzin2020statistical} for an overview of the SQ formalism and its connections to other learning models. Here, we briefly mention the PAC model and overview further the background into the SQ framework from the main text.

Perhaps the most widely used framework for quantifying hardness of learning is the provably approximately correct (PAC) model \citep{valiant1984theory}. 

\begin{definition}[PAC Learning \citep{mohri2018foundations}]
    A concept class $\mathcal{C}$ is PAC-learnable if there exists an algorithm such that for any $\epsilon>0$ and $\delta>0$, for all distributions $\mathcal{D}$ on $\mathcal{X}$, and for any target concept $c^* \in \mathcal{C}$, the algorithm takes at most $m=\operatorname{poly}(1/\epsilon, 1/\delta, n)$ samples drawn i.i.d. from $(\vx, c(\vx))$ with $\vx \sim \mathcal{D}$, and returns a function $f$ satisfying $\|f-c^*\|_{\mathcal{D}}\leq\epsilon$ with probability at most $1-\delta$. If the algorithm runs in time at most $\operatorname{poly}(1/\epsilon, 1/\delta, n)$, then $\mathcal{C}$ is \emph{efficiently PAC learnable}. The algorithm is a \emph{proper learner} if it returns $f \in \mathcal{C}$, and otherwise denoted an \emph{improper learner}.
\end{definition}

Proving computational hardness in the PAC model typically requires reducing learning tasks to cryptographically or complexity theoretically hard problems. Unconditional hardness results for learning with neural networks would imply $\mathsf{P} \neq \mathsf{NP}$: \cite{abbe2023polynomial} show that the task of training neural networks via gradient descent is $\mathsf{P}$-Complete, i.e. any poly-time algorithm in $\mathsf{P}$ can be reduced to the task of training a given poly-sized neural network with stochastic gradient descent.

Given the above, the statistical query formalism sidesteps these challenges by constraining algorithms to be composed of a set of noisy queries. Perhaps the most important restriction in the SQ framework is that algorithms do not have access to individual data points but only noisy queries averaged across the input/output distribution. The classic example of an efficient non-SQ algorithm is Gaussian elimination for learning parities \citep{blum1994weakly}. Nevertheless, the SQ formalism naturally captures most algorithms used in practice including gradient descent as shown for example in \Cref{ex:GD_as_csq} of the main text for gradients over the mean squared error loss. 

For boolean functions, correlational statistical queries can capture any general statistical query since any query can be split into its $+1$ or $-1$ output cases \citep{reyzin2020statistical}. For example, assume one performs an SQ query with query function $q:\mathcal{X} \times \{-1,+1\} \to [-1,+1]$, then
\begin{equation}
    \E_{(\vx, y) \sim \mathcal{D}}\left[ g(\vx, y) \right] = \E\left[ g(\vx, +1)\frac{1+y}{2} + g(\vx, -1)\frac{1-y}{2} \right],
\end{equation}
which decomposes into two correlational statistical queries and two functions independent of the target.

We remarked in the main text that generalizing real-valued hardness results proven in the CSQ framework to the SQ framework is in general challenging. In fact, \cite{vempala2019gradient} more rigorously underscored this challenge (see their Proposition 4.1) by providing a rather unnatural, 
yet efficient, SQ algorithm that learns any finite real-valued function class satisfying a non-degeneracy assumption that the set of points $f(x) = g(x)$ is measure zero for any pair of functions $f, g$ in the class. Namely, they show the following no-go theorem.
\begin{theorem}[Limitations on $\operatorname{SQ}$ hardness for real-valued functions \citep{vempala2019gradient}]
    Given a concept class $\mathcal{C}$ of functions $f:\mathcal{X} \to \mathbb{R}$ of size $|\mathcal{C}|$ where for all $f,g \in \mathcal{C}$ the set of inputs where $f(x) = g(x)$ is measure zero, the family $\mathcal{C}$ can be learned with $O(\log |\mathcal{C}|)$ queries to $\operatorname{SQ}$ with a constant error tolerance $\tau$.
\end{theorem}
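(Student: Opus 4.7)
The plan is to use a halving / binary-search strategy over $\mathcal{C}$, where each SQ query is an indicator that cleanly certifies membership of the target in a candidate subset. Crucially, the measure-zero agreement hypothesis guarantees that, for the true target $c^* \in \mathcal{C}$, the function $c^*$ is the \emph{only} element of $\mathcal{C}$ that evaluates to $y$ at almost every sample $(x,y)$ from the SQ oracle. This will let a single query of tolerance $\tau<1/2$ distinguish ``$c^*\in S$'' from ``$c^*\notin S$'' for any candidate subset $S\subseteq\mathcal{C}$.

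Concretely, I would proceed as follows. First, fix a tolerance constant, say $\tau=1/3$. For any finite $S\subseteq \mathcal{C}$, define the query function
\begin{equation}
    q_S(x,y) \;=\; \mathbb{1}\!\left[\, \exists\, f\in S :\ y=f(x)\,\right] \;\in\;[0,1].
\end{equation}
Since $S$ is finite and each $f$ is measurable, $q_S$ is measurable. Second, I would verify the two alternatives for its oracle value. If $c^*\in S$, then $y=c^*(x)=c^*(x)$ identically, so $\mathbb{E}_{(x,y)\sim\mathcal{D}}[q_S(x,y)]=1$. If $c^*\notin S$, then for every $f\in S$ the set $\{x:f(x)=c^*(x)\}$ has measure zero by hypothesis; taking a finite union over $f\in S$ preserves this, so $\mathbb{E}[q_S(x,y)]=0$. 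Consequently an SQ oracle answer $\sq(q_S,\tau)$ lies in $[1-\tau,1]$ in the first case and $[-\tau,\tau]$ in the second, and these intervals are disjoint for $\tau<1/2$, so one query unambiguously decides whether $c^*\in S$.

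Third, I would run the standard halving loop: initialize $S\gets\mathcal{C}$; while $|S|>1$, partition $S$ into two roughly equal halves $S_1,S_2$, issue query $q_{S_1}$ at tolerance $\tau=1/3$, and set $S\gets S_1$ if the oracle returns a value $>1/2$, else $S\gets S_2$. Each iteration halves $|S|$, so after $\lceil\log_2|\mathcal{C}|\rceil$ queries exactly one candidate remains, which must equal $c^*$; the output therefore satisfies $\|f-c^*\|_{\mathcal{D}}=0\le\epsilon$ for any $\epsilon$. This gives the claimed $O(\log|\mathcal{C}|)$ query bound at constant tolerance.

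The only subtle point — and the main obstacle worth flagging — is ensuring that the ``at most one $f\in\mathcal{C}$ agrees with $y$ at a typical $x$'' phenomenon is used correctly: it rests solely on the pairwise measure-zero agreement assumption together with the finiteness of $\mathcal{C}$, which allows the union bound that kills the $c^*\notin S$ case. No assumption is made that $c^*(x)$ takes a dense or structured set of values; the proof is entirely insensitive to the real-valued nature of the outputs, which is precisely why this ``cheat'' does not contradict CSQ lower bounds (CSQ cannot implement $q_S$ since $q_S$ depends jointly on $x$ and $y$ in a nonlinear way).
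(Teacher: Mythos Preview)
Your proof is correct and rests on the same key observation as the paper's: the pairwise measure-zero agreement hypothesis, together with finiteness of $\mathcal{C}$, ensures that for almost every $(x,y)$ there is a \emph{unique} $f\in\mathcal{C}$ with $f(x)=y$, so an SQ query can deterministically encode information about the identity of $c^*$.

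The only difference is in the encoding. You use a binary indicator $q_S(x,y)=\mathbb{1}[\exists f\in S: f(x)=y]$ and halve the candidate set each round, giving a clean $\lceil\log_2|\mathcal{C}|\rceil$ bound at any fixed $\tau<1/2$. The paper instead assigns each $f\in\mathcal{C}$ a distinct real value $v_f\in[-1,1]$, defines the query to output $v_f$ for the unique $f$ matching $(x,y)$, and after receiving $v'$ restricts to those $f$ with $v_f\in[v'-\tau,v'+\tau]$; iterating this gives a roughly $(1/\tau)$-ary search. Your version is arguably more elementary and makes the tolerance requirement explicit, while the paper's version shows that a single query can in principle extract $\log(1/\tau)$ bits rather than one. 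Either way the argument and its moral---that CSQ lower bounds do not transfer to SQ for generic real-valued classes because SQ can implement such nonlinear-in-$y$ identification queries---are identical.
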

\begin{proof}
    We aim to choose a query $g:\mathcal{C} \to [-1,1]$ which maps the target function $f \in \mathcal{C}$ to a unique value in the range $[-1,1]$. For our target function $f \in \mathcal{C}$, let us call this value $v$. To specify the form of this query, it suffices to send each pair $(x,f(x))$ to its corresponding unique value in $[-1,+1]$, i.e. output $v$ for every input $(x,f(x))$. Since outputs of disjoint functions in $\mathcal{C}$ are equal only on a support of measure zero, this mapping can be performed with probability one. Thus, we can query the value $v$ and receive an output, say $v'$ up to error $\tau$. Repeating this procedure iteratively to only include functions in the tolerance range $[v' - \tau, v' +\tau] $, we can select the target function after $O(\log |\mathcal{C}|)$ queries.
\end{proof}

Loosely, the above theorem shows that real-valued classes of functions are only hard in SQ settings when there exist sufficiently many hard functions with finitely many outputs. \cite{chen2022hardness} achieved such a result for two-hidden layer networks by constructing networks that round inputs to the nearest boolean bitstring.

\section{Boolean statistical query setting}
\label{app:boolean_SQ}

As discussed in \Cref{sec:boolean_warm_up}, the Boolean function setting provides a nice illustration of the types of results obtainable via SQ lower bounds. For further background into the boolean SQ setting, we recommend the review in \cite{reyzin2020statistical}. Here, we will prove \Cref{thm:booleanSQlower} restated below for convenience.

\BooleanSQ*

For a simple proof of the above statement, we will follow the proof technique in Appendix B of \cite{chen2022hardness}. The above statement will follow as a consequence of the pairwise independence of functions in $\mathcal{H}_\rho$ defined below. 

\begin{definition}[Boolean Case of Definition C.1 of \cite{chen2022hardness}] \label{def:pairwise_independent_functions}
    Let $\mathcal{C}$ be a hypothesis class mapping $\{-1,+1\}^n$ to $\{-1,+1\}$, and let $\mathcal{D}$ be a distribution on $\mathcal{X}$. $\mathcal{C}$ is a $(1-\eta)$-pairwise independent function family if with probability $1 - \eta$ over the choice of $\vx, \vx'$ drawn independently from $\mathcal{D}$, the distribution of $(f (\vx), f (\vx'))$ for $f$ drawn uniformly at random from $\mathcal{C}$ is the product distribution $\operatorname{Unif}(\{-1,+1\}^2)$.
\end{definition}

Pairwise independent function families offer a convenient means to bound the number of queries needed to distinguish the outputs of a given unknown function $f^*$ versus uniformly random outputs from the function class.

\begin{theorem}[Theorem C.4 of \cite{chen2022hardness}]\label{thm:pairwise-ind-sq-lower}
	Let function class $\mathcal{C}$  mapping $\{-1,+1\}^n$ to $\{-1,+1\}$ be a $(1-\eta)$-pairwise independent function family w.r.t.\ a distribution $\mathcal{D}$ on $\{-1,+1\}^n$. For any $f \in \mathcal{C}$, let $\mathcal{D}_f$ denote the distribution of $(\vx, f(\vx))$ where $\vx \sim \mathcal{D}$. Let $\mathcal{D}_{\operatorname{Unif}(\mathcal{C})}$ denote the distribution of $(\vx, y)$ where $\vx \sim \mathcal{D}$ and $y = f(\vx)$ for $f \sim \operatorname{Unif}(\mathcal{C})$. Any SQ learner able to distinguish the labeled distribution $\mathcal{D}_{f^*}$ for an unknown $f^* \in \mathcal{C}$ from the randomly labeled distribution $\mathcal{D}_{\operatorname{Unif}(\mathcal{C})}$ using bounded queries of tolerance $\tau$ requires at least $\frac{ \tau^2}{2\eta}$ such queries.
\end{theorem}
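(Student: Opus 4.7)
The plan is to run the standard decision-theoretic SQ indistinguishability argument: construct an adversarial oracle whose transcript against the unknown concept $f^*$ matches the transcript it produces against the ``random label'' distribution $\mathcal{D}_{\operatorname{Unif}(\mathcal{C})}$, for \emph{most} choices of $f^*\in\mathcal{C}$. Fix any SQ learner making $q$ adaptive queries of tolerance $\tau$, and let the oracle respond to query $g_i$ with $\bar\mu_i := \mathbb{E}_{f\sim\operatorname{Unif}(\mathcal{C})}\mathbb{E}_{\vx\sim\mathcal{D}}[g_i(\vx,f(\vx))]$. Because $\bar\mu_i$ depends only on the learner's internal state and not on $f^*$, the induced transcript---hence the entire sequence of queries $g_1,\ldots,g_q$---is deterministic. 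The learner cannot distinguish $\mathcal{D}_{f^*}$ from $\mathcal{D}_{\operatorname{Unif}(\mathcal{C})}$ whenever $|\mu_i(f^*)-\bar\mu_i|\le\tau$ for every $i$, where $\mu_i(f):=\mathbb{E}_{\vx\sim\mathcal{D}}[g_i(\vx,f(\vx))]$.

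The technical core is to show that $\mu_i(f)$ concentrates around $\bar\mu_i$ under a uniformly random $f\sim\operatorname{Unif}(\mathcal{C})$. I would split each query as $g_i(\vx,y)=h_i(\vx)+y\cdot c_i(\vx)$ with $h_i,c_i:\{-1,+1\}^n\to[-1,1]$; the label-free piece contributes identically to $\mu_i(f)$ and $\bar\mu_i$, so the deviation reduces to a purely correlational quantity. Expanding the variance,
\begin{equation}
\mathrm{Var}_{f}[\mu_i(f)] \;=\; \mathbb{E}_{\vx,\vx'\sim\mathcal{D}}\bigl[c_i(\vx)\,c_i(\vx')\,\mathrm{Cov}_{f}(f(\vx),f(\vx'))\bigr].
\end{equation}
By $(1-\eta)$-pairwise independence, the covariance vanishes on a $(1-\eta)$-fraction of pairs $(\vx,\vx')$; on the remaining $\eta$-fraction, $|\mathrm{Cov}_f(f(\vx),f(\vx'))|\le 1$ since $f$ is $\pm 1$-valued. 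Together with $|c_i|\le 1$ this yields $\mathrm{Var}_f[\mu_i(f)]\le\eta$.

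Chebyshev's inequality then gives $\Pr_{f\sim\operatorname{Unif}(\mathcal{C})}[|\mu_i(f)-\bar\mu_i|>\tau]\le \eta/\tau^2$ for each fixed query in the transcript, and a union bound over the $q$ queries bounds the fraction of $f^*$ on which some response ever falls outside tolerance by $q\eta/\tau^2$. For the learner to distinguish $\mathcal{D}_{f^*}$ from $\mathcal{D}_{\operatorname{Unif}(\mathcal{C})}$ for more than half of targets, this fraction must exceed $1/2$, giving $q\ge\tau^2/(2\eta)$. The only delicate step is adaptivity: committing the oracle to the responses $\bar\mu_i$ freezes the entire query sequence, so the union bound is applied over a deterministic list of queries rather than a random one---this is the decoy-oracle device from \cite{chen2022hardness}, and I expect it to be the main place to be careful in a full write-up, along with tracking that $|c_i|\le 1$ (rather than $2$) is enough to land on the precise constant $1/(2\eta)$.
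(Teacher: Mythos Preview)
The paper does not prove this statement; it is quoted directly from \cite{chen2022hardness} as a black-box tool and used without proof in \Cref{app:boolean_SQ}. Your argument is correct and is exactly the standard decoy-oracle proof one finds in that reference: fix the oracle to answer each query with the class-averaged value $\bar\mu_i$, observe that this freezes the adaptive transcript, bound $\mathrm{Var}_f[\mu_i(f)]\le\eta$ using the $(1-\eta)$-pairwise independence (the covariance vanishes on the good pairs and is at most $1$ on the bad $\eta$-fraction by Cauchy--Schwarz for $\pm1$-valued variables), and finish with Chebyshev plus a union bound over the $q$ queries. Your handling of the Boolean decomposition $g_i(\vx,y)=h_i(\vx)+y\,c_i(\vx)$ with $|c_i|\le 1$ is the right way to reduce to the correlational piece, and your caveat about adaptivity is the only point that genuinely needs care.
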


Finally, we are ready to prove \Cref{thm:booleanSQlower}.
\begin{proof}[Proof of \Cref{thm:booleanSQlower}]
    Since the symmetric function is constant on orbits, 
    we can represent any symmetric function as $f:\mathcal{O}_n\to \{-1,+1\}$ and the correlation between two functions $f,g$ can be written as
    \begin{equation}
        \langle f, g \rangle_{SQ} = \mathbb{E}\left[ f g \right] = 2^{-n}\sum_{O_k \in \mathcal{O}_\rho} |O_k| f(\vx_{O_k}) g(\vx_{O_k}).
    \end{equation}
    The probability $p_{\mathcal{O}_\rho}:\mathcal{O}_\rho \to [0,1]$ of a uniformly random bitstring falling in orbit $O_K \in \mathcal{O}_\rho$ is equal to $p_{\mathcal{O}_\rho}(O_k)=|O_k| / 2^n$.
    Note that the distribution of $(f(\vx_{O}), f(\vx_{O'}))$ is equal to the uniform distribution whenever $\vx_{O} \neq \vx_{O'}$ since we have a uniform distribution over all possible symmetric functions. This occurs with probability
    \begin{equation}
        \eta = \sum_{O_k \in \mathcal{O}_\rho} \left(\frac{|O_k|}{2^n}\right)^2 = \|p_{\mathcal{O}_\rho}\|^2.
    \end{equation}
    Therefore, $\mathcal{H}_\rho$ is a $(1-\eta)$-pairwise independent function family with parameter $\eta$ as above.

    To apply \Cref{thm:pairwise-ind-sq-lower}, note that for a given function $f^* \in \mathcal{H}_\rho$, to distinguish $\mathcal{D}_{f^*}$ from $\mathcal{D}_{\operatorname{Unif}(\mathcal{H}_\rho)}$, it suffices to return a function $f$ that has classification error at most $1/4$ with respect to $f^*$. Since distinguishing the two distributions requires $\frac{ \tau^2}{2\eta}$ queries, we arrive at the final result.
\end{proof}
\section{Complexity theoretic hardness extension} \label{app:NP_hardness_extension}

In this section, we would like to show a reduction from the task of training an invariant neural network on a given dataset to a previously $\mathsf{NP}$ hard problem. This extends the classic result of \cite{blum1988training} showing that there exists a sequence of datasets such that determining whether or not a 3-node neural network can fit that dataset is hard. Though we consider extending this to a simple graph neural network here for simplicity, we remark that we chose this only for convenience and similar extensions can be found for other classes of networks.

The particular $\mathsf{NP}$ hard problem we consider is the decision version of the learning halfspaces with noise problem \citep{guruswami2009hardness,feldman2012agnostic}.

\begin{problem}[Learning halfspaces with noise \citep{guruswami2009hardness}] \label{prob:halfspace_noise}
    Given a set of $N$ labeled examples $\{\vx_i, y_i \}_{i=1}^N$ where $\vx_i \in \{0,1\}^n$ and $y_i \in \{-1,+1\}$ and constants $\delta, \epsilon$ where $0 < \delta \leq \epsilon < 1$, we distinguish between the following two cases:
    \begin{itemize}
        \item \textbf{Case 1:} There exists a vector $\vv \in \mathbb{R}^n$, constant $\theta \in \mathbb{R}$, and set $S \subseteq [N]$ of size at least $|S| \geq (1-\delta)N$ such that $y_i \left(\langle \vv, \vx_i \rangle - \theta\right) \geq 0$ for all $i \in S$.
        \item \textbf{Case 2:} For all $\vv \in \mathbb{R}^n$, $\theta \in \mathbb{R}$, and sets $S \subseteq [N]$ such that $|S| \geq (1-\epsilon)N$, there exists some $i \in S$ such that $y_i \left(\langle \vv, \vx_i \rangle - \theta\right) < 0$.
    \end{itemize}
\end{problem}

Consider the following class of message passing GNNs with a single hidden layer and $k$ channels which acts on an adjacency matrix $\mA \in \{0,1\}^{n \times n}$ and node features $\vx \in \mathbb{R}^{n}$:
\begin{equation} \label{eq:reduction_form_GNN}
    f_{\va, \vb, c}(\vx, \mA) = c + \sum_{i=1}^k \bm 1^\top \operatorname{relu}\left( \vx + a_i \mA \vx  \right)  b_i.
\end{equation}

We will map the below problem of fitting the weights $\va, \vb, c$ of the GNNs above to a given dataset consisting of a set of graphs and their corresponding outputs. 

\begin{problem} \label{prob:np_complete_graph}
    Given a labeled set of graphs, node features, and outputs $\{\mA_i, \vx_i, y_i\}_{i=1}^N$ where $\mA_i \in \{0,1\}^{n \times n}$, $\vx_i \in \mathbb{R}^n$ and $y_i \in \{-1,+1\}$ and constants $\delta, \epsilon$ where $0 < \delta \leq \epsilon < 1$, distinguish between the following two cases:
    \begin{itemize}
        \item \textbf{Case 1:} There exists weights $a_1, \dots, a_k, b_1, \dots, b_k, c \in \mathbb{R}$ and set $S \subseteq [N]$ of size at least $|S| \geq (1-\delta)N$ such that $y_i f_{\va, \vb, c}(\vx_i, \mA_i) \geq 0$ for all $i \in S$.
        \item \textbf{Case 2:} For all weights $a_1, \dots, a_k, b_1, \dots, b_k, c \in \mathbb{R}$, and sets $S \subseteq [N]$ such that $|S| \geq (1-\epsilon)N$, there exists some $i \in S$ such that $y_i f_{\va, \vb, c}(\vx_i, \mA_i)< 0$.
    \end{itemize}
\end{problem}

\begin{proposition}[$\mathsf{NP}$ hardness of GNN training]
    Training the GNN in \Cref{eq:reduction_form_GNN} to solve \Cref{prob:np_complete_graph} is $\mathsf{NP}$ hard.
\end{proposition}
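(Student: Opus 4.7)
The plan is to reduce from Problem~\ref{prob:halfspace_noise} (learning halfspaces with noise, known to be $\mathsf{NP}$-hard) to Problem~\ref{prob:np_complete_graph}. Given a halfspace instance $\{(\vx_i, y_i)\}_{i=1}^N$ with $\vx_i \in \{0,1\}^n$, I construct in polynomial time a GNN instance $\{(\mA_i, \vx'_i, y_i)\}_{i=1}^N$ on graphs with $n'=n+1$ vertices as follows. Distinguish an ``anchor'' node $0$ and ``data'' nodes $1,\ldots,n$. Fix the node features $[\vx'_i]_0 = -M$ and $[\vx'_i]_j = -j$ (same for every $i$), where $M$ is a large constant. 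Build $\mA_i$ with a directed edge from each data node $j$ to the anchor iff $[\vx_i]_j = 1$, and no other edges. A direct computation gives $(\mA_i \vx'_i)_0 = 0$ and $(\mA_i \vx'_i)_j = -M[\vx_i]_j$ for $j \in [n]$.

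The key observation is that for \emph{any} parameters $(\va,\vb,c)$, the GNN on these inputs computes a halfspace. Indeed, the channel-$\ell$ pre-activation at node $j$ is $-j - a_\ell M [\vx_i]_j$, which is non-positive whenever $a_\ell \geq 0$ or $[\vx_i]_j = 0$, and equals $-j + |a_\ell| M$ when $a_\ell < 0$ and $[\vx_i]_j=1$. The anchor always contributes $\mathrm{relu}(-M)=0$. Summing,
\begin{equation*}
f_{\va,\vb,c}(\vx'_i,\mA_i) = c + \sum_{j=1}^n w_j\, [\vx_i]_j, \qquad w_j := \sum_{\ell : a_\ell < 0} b_\ell \max(0, -a_\ell M - j).
\end{equation*}
Hence every GNN classifier on our constructed inputs is the linear threshold function $\mathrm{sign}(\langle \vw,\vx_i\rangle + c)$ for some $\vw \in \R^n$.

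This immediately gives the backward direction: any GNN correctly classifying a set $S$ yields a halfspace with weights $\vw$ and threshold $-c$ correctly classifying the identical set $S$ of original halfspace inputs. For the forward direction, given a halfspace solution $(\vv,\theta)$, I take $k = n$ channels with the explicit choice $a_\ell = -(\ell + 1/2)/M$. Then $-a_\ell M - j = \ell + 1/2 - j$, which is positive iff $\ell \geq j$, so the induced map $\vb \mapsto \vw$ is the upper-triangular linear map $w_j = \sum_{\ell \geq j} b_\ell (\ell + 1/2 - j)$ with constant nonzero diagonal entries $1/2$, hence invertible. Solving for $\vb$ and setting $c = -\theta$ produces a GNN realizing exactly the halfspace $(\vv,\theta)$, hence achieving the same classification.

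Since the construction is polynomial-time ($O(Nn)$) and the set of functions realizable by the GNN on $\{(\mA_i,\vx'_i)\}$ coincides exactly with the set of halfspaces on $\{\vx_i\}$, both the $(1-\delta)$-solvability (Case 1) and $(1-\epsilon)$-unsolvability (Case 2) conditions transfer between the two problems, completing the reduction. The main conceptual obstacle is the symmetry of the $\mathbf{1}^\top$ aggregation, which forces uniform treatment of nodes; the anchor-with-positional-features construction is what breaks this symmetry and allows per-coordinate discrimination, reducing the GNN on the constructed inputs to an $n$-dimensional halfspace whose weights are exactly the tunable parameters.
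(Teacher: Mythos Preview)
Your proof is correct and follows the same high-level strategy as the paper: reduce from the $\mathsf{NP}$-hard halfspace-with-noise problem by constructing a GNN dataset on which the realizable function class coincides exactly with the set of affine functions of the original halfspace inputs. The two directions (any GNN on the construction is a halfspace; any halfspace is realizable via an invertible triangular map from $\vb$ to $\vw$) match the paper's logic.

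Where you differ is in the encoding. The paper keeps node features constant ($\vx'_i=-\bm 1$) and breaks the permutation symmetry of the $\bm 1^\top$ aggregation through the \emph{graph}: it embeds bit $j$ of $\vx_i$ as a $j$-clique, so each bit produces a distinct degree and hence a distinct ReLU slope; this costs $n(n+1)/2$ nodes. You instead break symmetry through the \emph{node features}, assigning positional values $-j$ to data nodes and routing all edges through a single anchor; this costs only $n+1$ nodes. Your construction is therefore more economical in graph size, at the price of using non-constant (though still data-independent) node features. Both are valid instantiations of Problem~\ref{prob:np_complete_graph}, and both yield an upper-triangular linear system with nonzero diagonal, establishing surjectivity onto all halfspaces. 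One minor remark: you call $M$ a ``large constant,'' but your argument in fact only uses $M>0$ (the anchor pre-activation is $-M$ regardless of $a_\ell$, and in the forward direction $M$ cancels in $-a_\ell M - j$), so any positive $M$ suffices.
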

\begin{proof}
    We show that solving \Cref{prob:np_complete_graph} is equivalent to solving \Cref{prob:halfspace_noise} which is previously shown to be $\mathsf{NP}$ hard \citep{guruswami2009hardness,feldman2012agnostic}. Given a dataset $\{\vx_i, y_i\}_{i=1}^N$ for \Cref{prob:halfspace_noise} over $n$-bit Boolean strings, we map this to a dataset over graphs $\{\mA_i, \vx_i', y_i'\}_{i=1}^N$ of $\sum_{k=1}^n k = n(n+1)/2$ nodes where $\vx_i' = -\bm 1$ for all inputs and $y_i=y_i'$ are the same for both datasets. Adjacency matrix $\mA_i$ is constructed by including a disconnected $k$-clique (including self loops) in the graph whenever $[\vx_i]_k = 1$. Since there are $n(n+1)/2$ nodes, there are sufficient nodes to construct any such graph. E.g., for $\vx = (1,0,1)^\top$, a corresponding adjacency matrix is
    \begin{equation}
        \mA = \begin{bmatrix}
            1 & 0 & 0 & 0 & 0 & 0 \\
            0 & 0 & 0 & 0 & 0 & 0 \\
            0 & 0 & 0 & 0 & 0 & 0 \\
            0 & 0 & 0 & 1 & 1 & 1 \\
            0 & 0 & 0 & 1 & 1 & 1 \\
            0 & 0 & 0 & 1 & 1 & 1 \\
        \end{bmatrix}.
    \end{equation}
    
    We also assume that the GNNs have $k=n$ channels. In this setting, we have that for any $i \in [N]$
    \begin{equation} \label{eq:halfspace_reduction_of_GNN}
        \begin{split}
            f_{\va, \vb, c}(\vx_i', \mA_i) &= c + \sum_{j=1}^n \bm 1^\top \operatorname{relu}\left( \vx_i' + a_j \mA_i \vx_i'  \right)  b_j \\
            &= c + \sum_{j=1}^n b_j \sum_{\ell=1}^n \ell \mathbb{1}[\exists \; \ell-\text{clique} \in \mA_i ]  \operatorname{relu}(-1-\ell a_j),
        \end{split}
    \end{equation}
    where $\mathbb{1}[\exists \; \ell-\text{clique} \in \mA_i ]$ is the indicator function for whether or not a $\ell$-clique is in the graph $\mA_i$. The above is clearly a linear function in the vector $\vr_i \in \{0,1\}^n$ where $[\vr_i]_\ell = \mathbb{1}[\exists \; \ell-\text{clique} \in \mA_i ]$. Furthermore, the graph is chosen so that $\vr_i=\vx_i$. Thus, any choice of weights $\va, \vb, c$ corresponds to a halfspace for $\vx_i$. 

    It only remains to be shown that any such halfspace can be written in the form of \Cref{eq:halfspace_reduction_of_GNN}. Note that
    \begin{equation} 
        \begin{split}
            f_{\va, \vb, c}(\vx_i, \mA_i) &= c' + \begin{bmatrix}
                b_1 & b_2 &  \cdots & b_n
            \end{bmatrix} 
            \mM \vr_i,
        \end{split}
    \end{equation}
    where $c'$ is an updated constant and
    \begin{equation}
        \mM = \begin{bmatrix}
                 \operatorname{relu}(-1- a_1) &  2\operatorname{relu}(-1- 2 a_1) & \cdots  & n\operatorname{relu}(-1- n a_1) \\
                 \operatorname{relu}(-1- a_2) & 2\operatorname{relu}(-1- 2 a_2) & \cdots &   n\operatorname{relu}(-1- n a_2) \\
                 \vdots & \vdots & \ddots  & \vdots \\
                 \operatorname{relu}(-1- a_n) & 2\operatorname{relu}(-1- 2 a_n) & \cdots &   n\operatorname{relu}(-1- n a_n)  \\
            \end{bmatrix}.
    \end{equation}
    Setting $a_k = (1/2-k)^{-1}$ makes the above matrix have non-zero entries on the diagonal and every entry above the diagonal. Such a matrix is invertible by Gaussian elimination so every possible halfspace can be constructed by proper choice of the weights $b_1, \dots, b_n, c'$.  
\end{proof}

\section{GNN hardness proofs}\label{app:csq_gnn}

\subsection{Lower bound for Erdős–Rényi distributed graphs}

Here, we will consider a model over graphs of $n$ nodes where adjacency matrices $\mA \sim \operatorname{Unif}(\{0,1\}^{n \times n})$ are drawn uniformly from all possible directed graphs. This distribution can be viewed as an Erdős–Rényi $G(n,p)$ over directed graphs including self edges with $p=0.5$. Node features are fixed in this model to $\vx \in \mathbb{R}^n$ where we set $\vx=\bm 1$ in all instances (i.e., node features are trivial). Note, that we consider Boolean functions here w.l.o.g. as $\{0,1\}$ valued as opposed to $\{-1,+1\}$ valued as in previous sections since using $\{0,1\}$ valued functions in this section makes the story easier to follow. 

Our aim is to prove \Cref{thm:SQ_hardness_ER_graphs} copied below.

\SQhardnessER*

The function class $\mathcal{H}_{ER,n}$ that achieves the desired lower bound are Boolean valued functions that depend on the counts of the degrees or number of neighbors of the nodes. To describe this class, we first introduce some notation. For a given graph $\mA \in \{0,1\}^{n \times n}$, let $\vd_{\mA} \in [n]^n$ be a vector with entry $[\vd_{\mA}]_i$ equal to the number of outgoing edges (or degree) for node $i$. From here, let $\vc_{\mA} \in [n]^{n+1}$ denote the count of the degrees where entry $[\vc_{\mA}]_i$ counts the number of nodes that have $i-1$ outgoing edges in $\mA$ or more formally
\begin{equation} \label{eq:def_of_count_ca}
    [\vc_{\mA}]_i = \sum_{k=1}^n \mathbb{1}\left[ [\vd_{\mA}]_k = i-1 \right].
\end{equation}
Note that $\vc_{\mA}$ is a length $n+1$ vector since the number of outgoing edges can be anything from $\{0, 1, \dots, n\}$. Also note that this vector is an invariant vector as permuting the nodes leaves the counts invariant.
Given a subset $S \subseteq [n+1]$ and a bit $b \in \{0,1\}$, the functions we aim to learn take the form
\begin{equation}
    g_{S,b}(\mA) = b + \sum_{i \in S} [\vc_{\mA}]_i \mod 2,
\end{equation}
which are a sort of parity function supported over the elements of the degree counts $\vc_{\mA}$ in $\mathcal{S}$. Therefore, our hypothesis class $\mathcal{H}$ consists of $2^{n+2}$ functions:
\begin{equation}
    \mathcal{H}_{ER,n} = \{g_{S,b} : S \subseteq [n+1], b \in \{0,1\}\}.
\end{equation}

First, we will show below that the class $\mathcal{H}_{ER,n}$ is exponentially hard to learn in the SQ setting. Later in \Cref{lem:GNN_form_of_fsb}, we show that the functions in $\mathcal{H}$ can be constructed by the GNN architecture thus completing our proof.

\begin{proof}[Proof of \Cref{thm:SQ_hardness_ER_graphs}]
    To simplify the function $g_{S,b}$ even further, let $\widehat{\vc}_{\mA} \in \{0,1\}^{n+1}$ be a boolean valued version of $\vc_{\mA}$ where entry $[\widehat{\vc}_{\mA}]_i = [\vc_{\mA}]_i \mod 2$. Then the function $g_{S, b}$ is equivalently a parity function in $\widehat{\vc}_{\mA}$:
    \begin{equation}
        g_{S,b}(\mA) = b + \sum_{i \in S} [\vc_{\mA}]_i \mod 2 = (-1)^b \prod_{i \in S} (-1)^{[\widehat{\vc}_{\mA}]_i}.
    \end{equation}
    We will prove an SQ lower bound using \Cref{thm:pairwise-ind-sq-lower} based on the $(1-\eta)$-pairwise independence property of the function class $\mathcal{H}_{ER,n}$ in \Cref{def:pairwise_independent_functions}. Namely, note that for two graphs $\mA, \mA' \in \{0,1\}^{n \times n}$, if $\widehat{\vc}_{\mA} \neq \widehat{\vc}_{\mA'}$, then the distribution of $(g_{S,b}(\mA), g_{S,b}(\mA'))$ for $g_{S,b}$ drawn uniformly from $\mathcal{H}_{ER,n}$ is equal to the product distribution $\operatorname{Unif}(\{0,1\}^2)$. This follows from the fact that $g_{S,b}(\mA) = (-1)^b \prod_{i \in S} (-1)^{[\widehat{\vc}_{\mA}]_i}$ is a parity function of the entries in $\widehat{\vc}_{\mA}$. Since $\mathcal{H}_{ER,n}$ spans uniformly over all possible parity functions in $\{0,1\}^{n+1}$, the pairwise independence properties follows from this whenever $\widehat{\vc}_{\mA} \neq \widehat{\vc}_{\mA'}$.

    Thus, the parameter $1-\eta$ in the pairwise independence is equal to the probability that $\widehat{\vc}_{\mA} \neq \widehat{\vc}_{\mA'}$. We will establish that this probability is at least $1-O(\exp(-n^{\Omega(1)}))$ at the end of this proof thus guaranteeing $\eta^{-1} = \Omega\left(\exp(n^{\Omega(1)}) \right)$. Once this is established, we can apply \Cref{thm:pairwise-ind-sq-lower} and note that for a given function $f^* \in \mathcal{H}_{ER,n}$, to distinguish $\mathcal{D}_{f^*}$ from $\mathcal{D}_{\operatorname{Unif}(\mathcal{H}_\rho)}$ as in \Cref{thm:pairwise-ind-sq-lower}, it suffices to return a function $f$ that has classification error at most $1/4$ with respect to $f^*$. Since distinguishing the two distributions requires $\frac{ \tau^2}{2\eta}$ queries, we arrive at the final result by noting that $\eta^{-1} = \Omega\left(\exp(n^{\Omega(1)}) \right)$.

    To finish this proof, we must establish that 
    \begin{equation}
        \mathbb{P}_{\mA \sim \operatorname{Unif}(\{0,1\}^{n \times n})}[\widehat{\vc}_{\mA} \neq \widehat{\vc}_{\mA'}] \geq 1-O(\exp(-n^{\Omega(1)})).
    \end{equation}
    We will show this result by proving that for some constant $C>0$ and $p<1/2$, for large enough $n$, there are at least $Cn^p$ entries of $\widehat{\vc}_{\mA'}$ such that the probability of those entry being even or odd is at most $1/2 + o(1)$. To continue, let us denote this region 
    \begin{equation}
        \Delta_p = \{0,1,\dots, n\} \cap [n/2 - Cn^p, n/2+ Cn^p], \quad p < 1/2,
    \end{equation}
    as the set of integers within the range $[n/2 - Cn^p, n/2+ Cn^p]$. This range is chosen to coincide with the largest entries of $\vc_{\mA}$. 

    The degree of any node is independently distributed as binomial with distribution $\operatorname{bin}(n, 0.5)$. To asymptotically expand $i$ around the peak of the binomial coefficient, let us index $i=n/2+r$ for given $r$ such that $i \in \Delta_p$. Therefore, for any such $n/2+r=i \in \Delta_p$, we have that any given node $j \in [n]$ has degree $\vd_k = i$ with probability 
    \begin{equation}
        \mathbb{P}[\vd_k = i] = 2^{-n} {\binom{n}{n/2+r}} = \frac{2}{\sqrt{\pi n}}(1 + O(1/n)).
    \end{equation}
    In the above, we use the asymptotic approximation \citep{spencer2014asymptopia} for the binomial coefficient around its peak $\binom{n}{n/2}$ of
    \begin{equation} \label{eq:binomial_expansion_at_peak}
        {\binom{n}{\frac{n+r}{2}}} = (1 + O(1/n)) \sqrt{\frac{2}{\pi n}}   2^{n}\exp\left(-\frac{r^2}{2n} + O(r^3 / n^2)  \right).  
    \end{equation}

    Thus, the marginal distribution of any entry $i \in \Delta_p$ of $[\vc_{\mA}]_i \sim \operatorname{bin}\left(n,\frac{2}{\sqrt{\pi n}}(1 + O(1/n))\right)$ is also binomial distributed with probability converging to $\frac{2}{\sqrt{\pi n}}$. For entry $i \in \Delta_p$, this implies that $\E [\vc_{\mA}]_i = O(\sqrt{n})$. From the Chernoff concentration bound for the binomial distribution, we also have that for any $i \in \Delta_p$ and for any $\delta>0$:
    \begin{equation}
        \mathbb{P}\left[ \left| [\vc_{\mA}]_i - \E [\vc_{\mA}]_i \right| \geq n^{1/4+\delta} \right] \leq 2\exp \left( -\frac{ n^{1/2+2\delta}(\E [\vc_{\mA}]_i)^{-1}}{2} \right) = 2\exp \left( -\Omega(n^{2\delta}) \right).
    \end{equation}
    By a union bound over all $i \in \Delta_p$, we therefore have that with exponentially high probability, $\left| [\vc_{\mA}]_i  - \E [\vc_{\mA}]_i \right| \leq n^{1/4+\delta}$ for all $i \in \Delta_p$. 

    Summarizing the previous results, we have that there are $2C n^p$ entries of $[\vc_{\mA}]$ where for any entry $i \in \Delta_p$, it holds that for some arbitrarily small $\delta>0$
    \begin{equation} \label{eq:convergence_guarantee_binomial}
         \frac{2}{\sqrt{\pi}} \sqrt{n} (1 + O(1/n)) - n^{1/4+\delta}  \leq [\vc_{\mA}]_i \leq \frac{2}{\sqrt{\pi}} \sqrt{n} (1 + O(1/n)) + n^{1/4+\delta}.
    \end{equation}
    Since the probability that $[\vc_{\mA}]_i$ is equal to any number $m$ is at most $O(n^{-1/4})$, the probability that $[\vc_{\mA}]_i$ is odd is at most $1/2 + O(n^{-1/4})$. We now aim to show that conditioned on other entries of $\vc_{\mA}$, the probability that a given entry of $\vc_{\mA}$ is odd or even is roughly independent of this conditioning.
    
    Given a set of known values $S_{known} \subset \Delta_p$ equal to $[\vc_{\mA}]_k = m_k$ for $k \in S_{known}$, the marginal distribution of any entry of $[\vc_{\mA}]_j$ for $j \notin S_{known}$ conditioned on $[\vc_{\mA}]_k$ for $k \in S_{known}$ is also binomial. We now estimate the parameters of this binomial distribution. Given the convergence guarantee in \Cref{eq:convergence_guarantee_binomial}, $\sum_{k \in S_{known}} [\vc_{\mA}]_k = O(|S_{known}|n^{1/2}) = O(n^{1/2+p})$ which is $o(n)$ since $p<1/2$. Therefore for any set $S_{known} \subseteq \Delta_p$, there are at least $n - o(n)$ remaining nodes whose degree or number of outgoing edges are left to distribute. These remaining nodes cannot have degrees whose values fall in $S_{known}$, but note that since $|S_{known}| \leq O(n^p)$ and $p<1/2$, there are still $\Omega(\sqrt{n})$ buckets (values of the possible degrees) remaining each with probability proportional to $\Theta(1/\sqrt{n})$ by \Cref{eq:binomial_expansion_at_peak}. Therefore, from the convergence guarantee of \Cref{eq:convergence_guarantee_binomial}, we have
    \begin{equation}
        ([\vc_{\mA}]_j \; | \; [\vc_{\mA}]_k = m_k, k \in S_{known}) \sim \operatorname{bin}\left(n - o(n), \Theta(1/\sqrt{n})  \right).
    \end{equation}
    Thus, the conditional probability that $[\vc_{\mA}]_j$ is equal to any number $m$ is at most $O(n^{-1/4})$, and this implies that the probability that $[\vc_{\mA}]_j$ is odd is at most $1/2 + O(n^{-1/4})$ even when conditioned on entries in $S_{known}$.
    
    Finally, let us calculate the desired probability that $\widehat{\vc}_{\mA} = \widehat{\vc}_{\mA'}$. For ease of notation, let us index the elements of $\Delta_p$ as $r_1, \dots, r_{N}$. We have for large enough $n$ that
    \begin{equation}
        \begin{split}
            \mathbb{P}[\widehat{\vc}_{\mA} = \widehat{\vc}_{\mA'}] 
            &\leq \mathbb{P}\left[[\widehat{\vc}_{\mA}]_{r_1} = [\widehat{\vc}_{\mA'}]_{r_1}, \dots, [\widehat{\vc}_{\mA}]_{r_N} = [\widehat{\vc}_{\mA'}]_{r_N} \right] \\
            &= \mathbb{P}\left[ [\widehat{\vc}_{\mA}]_{r_1} = [\widehat{\vc}_{\mA'}]_{r_1} \right] \; \mathbb{P}\bigl[ [\widehat{\vc}_{\mA}]_{r_2} = [\widehat{\vc}_{\mA'}]_{r_2} \;|\; [\widehat{\vc}_{\mA}]_{r_1} =  [\widehat{\vc}_{\mA'}]_{r_1} \bigr] \times  \\
            & \quad \times \cdots \mathbb{P}\bigl[ [\widehat{\vc}_{\mA}]_{r_{N}} = [\widehat{\vc}_{\mA'}]_{r_N} \;|\; [\widehat{\vc}_{\mA}]_{r_1} =  [\widehat{\vc}_{\mA'}]_{r_1}, \dots, [\widehat{\vc}_{\mA}]_{r_{N-1}} =   [\widehat{\vc}_{\mA'}]_{r_{N-1}} \bigr]\\
            &\leq \left(\frac{1}{2} + O(n^{-1/4}) \right)^{|\Delta_p|} \\
            &= O(\exp(-\Omega(|\Delta_p|))) \\
            &= O(\exp(-\Omega(n^p))).
        \end{split}
    \end{equation}
    
    Since $\mathbb{P}_{\mA \sim \operatorname{Unif}(\{0,1\}^{n \times n})}[\widehat{\vc}_{\mA} \neq \widehat{\vc}_{\mA'}] = 1-\mathbb{P}_{\mA \sim \operatorname{Unif}(\{0,1\}^{n \times n})}[\widehat{\vc}_{\mA} = \widehat{\vc}_{\mA'}]$, this completes the proof.
\end{proof}

Finally, we need to show that the functions $g_{S,b}$ can be constructed as ReLU networks. Before proceeding to give this construction, we should note that it is known that any Boolean function that can be computed in time $O(T(n))$ can also be expressed by a neural network of size $O(T(n^2))$ \citep{parberry1994circuit,shamir2018distribution}. Our  construction will be specific to the GNN class we consider and show that the boolean functions in $g_{S,b}$ are similarly efficiently constructible.

\begin{lemma}[$g_{S,b}$ as GNN] \label{lem:GNN_form_of_fsb}
    For a GNN $f(\cdot)$ of the form of \Cref{eq:GNN_for_boolean_form} with hidden layer widths equal to $k_1 = O(n)$ and $k_2 = O(n)$, there exist weights $\va, \vb \in \mathbb{R}^{k_1}, \vu, \vv \in \mathbb{R}^{k_2}, \mW \in \mathbb{R}^{k_2 \times k_1}$ such that for any $S \subseteq [n+1]$, $b \in \{0,1\}$, $g_{S,b}(\mA) = f(\mA; \va, \vb, \vu, \vv, \mW) $ for all inputs $\mA \in \{0,1\}^{n \times n}$.
\end{lemma}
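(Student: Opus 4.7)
The plan is to explicitly construct weights so that the first layer computes the degree-count vector $\vc_\mA \in \{0, 1, \ldots, n\}^{n+1}$, and the second layer computes $b + \sum_{i \in S} [\vc_\mA]_i \pmod 2$. Since $\vx = \bm 1$, the quantity $\mA\vx$ is simply the vector of out-degrees, whose entries all lie in the integer set $\{0, 1, \ldots, n\}$. For any integer $j$ and any integer $t$, one has the ``hat'' identity $\mathbb{1}[t = j] = \sigma(t - j + 1) - 2\sigma(t - j) + \sigma(t - j - 1)$, where $\sigma$ is the ReLU. Summing this identity over all nodes $k$ recovers $[\vc_\mA]_{j+1}$ via the outer $\bm 1^\top$ in \Cref{eq:GNN_for_boolean_form}.

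For the first layer I would therefore allocate three channels to each value $j \in \{0,1,\ldots,n\}$: one with $b_i = 1$, $a_i = -(j-1)$; one with $b_i = 1$, $a_i = -j$; and one with $b_i = 1$, $a_i = -(j+1)$. Writing the resulting output as $\vh \in \mathbb{R}^{k_1}$ with $k_1 = 3(n+1) = O(n)$, the combination $\vh_{3j+1} - 2\vh_{3j+2} + \vh_{3j+3}$ exactly recovers $[\vc_\mA]_{j+1}$. Consequently, for any fixed $S \subseteq [n+1]$, there is a fixed vector $\vw_S \in \mathbb{R}^{k_1}$ such that $\langle \vw_S, \vh \rangle = \sum_{i \in S}[\vc_\mA]_i$, which is an integer in $\{0, 1, \ldots, n\}$.

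For the second layer I would similarly implement a parity via hat functions. For $t \in \{0,1,\ldots,n\}$, $t \bmod 2 = \sum_{j \text{ odd},\, 1 \le j \le n} [\sigma(t - j + 1) - 2\sigma(t - j) + \sigma(t - j - 1)]$, which uses $3\lceil n/2 \rceil = O(n)$ ReLUs. Each ReLU takes the form $\sigma(\langle \vw_S, \vh\rangle + v_i)$ with $v_i \in \{-j+1, -j, -j-1\}$ and coefficient $u_i \in \{+1,-2,+1\}$; I set $\mW_{:,i} = \vw_S$ and choose $\vu, \vv$ accordingly. Finally I add the constant $b$ by appending a single extra channel with $\mW_{:, k_2} = \bm 0$, $v_{k_2} = 1$, $u_{k_2} = b$, so that $u_{k_2}\sigma(0 + 1) = b$. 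The total width is $k_2 = O(n)$. The main technical step is only the verification that the hat-function identity holds on integers (which it does by direct case analysis), and that the sum $\sum_{i \in S}[\vc_\mA]_i$ is itself integer-valued (immediate), so no rounding or smoothing is required and the construction produces exactly $g_{S,b}(\mA) \in \{0,1\}$ on all inputs.
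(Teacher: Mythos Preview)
Your proposal is correct and follows essentially the same two-stage strategy as the paper: the first layer extracts the degree-count vector $\vc_\mA$ from $\mA\bm 1$ via shifted ReLUs, and the second layer computes the parity of $\sum_{i\in S}[\vc_\mA]_i$ via a piecewise-linear ``hat'' construction. The bookkeeping differs only slightly: the paper uses $k_1=n+1$ channels with $a_i=2-i$, $b_i=1$, obtaining $\vh=\mM\vc_\mA$ for an upper-triangular (hence invertible) $\mM$, and then absorbs $\mM^{-1}$ into $\mW$; you instead use $k_1=3(n+1)$ channels so that each $[\vc_\mA]_{j+1}$ is a fixed $\{+1,-2,+1\}$ combination of three channels, avoiding the inversion. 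For the parity, the paper builds a single triangle-wave $h(x)=\sigma(x+b)-2\sigma(x+b-1)+2\sigma(x+b-2)-\cdots$, whereas you sum individual hats at odd integers and add $b$ via a constant channel; both realize $x\mapsto (x+b)\bmod 2$ on $\{0,\ldots,n\}$ with $O(n)$ units.
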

\begin{proof}
    Let us fix a given $S \subseteq [n+1]$, $b \in \{0,1\}$. The function we want to represent is 
    \begin{equation}
        g_{S,b}(\mA) = b + \sum_{i \in S} [\vc_{\mA}]_i \mod 2,
    \end{equation}
    where as a reminder, $[\vc_{\mA}]_i$ counts the number of nodes that have $i-1$ outgoing edges (see \Cref{eq:def_of_count_ca}).
    
    For the construction, we will set $k_1 = k_2 = n+1$. As a reminder, the two layers of the GNN take the form:
    \begin{equation} 
    \begin{split}
        [f_{\va, \vb}^{(1)}(\mA)]_i &= \bm{1}_n^\top \sigma\left( a_i + b_i \mA\vx  \right) \quad \text{(output of channel $i \in [k_1]$)} \\
        f_{\vu, \vv, \mW}^{(2)}(\vh) &= \sum_{i=1}^{k_2} u_i\sigma(\langle \mW_{:,i} , \vh \rangle + v_i).
    \end{split}    
    \end{equation}
    Throughout this construction, the node features $\vx=\bm 1$ are constant.

    For the first layer, we will choose $a_i=2-i$ and $b_i=1$ for all $i$. This first layer can only be a function of $\vd_{\mA}= \mA \bm 1$ where entry $i$ is a count of the number of outgoing edges. Since entries in $\vd_{\mA}$ take values in $\{0,1,\dots, n\}$, we can equivalently write this first layer output as a function of the counts $\vc_{\mA}$. In fact, this first layer is linear as a function of $\vc_{\mA}$ and equal to
    \begin{equation}
        \vh = \mM \vc_{\mA} = \begin{bmatrix}
            1 & 2 & \cdots & n & n+1 \\
            0 & 1 & \cdots & n-1 & n \\
            \vdots & \vdots & \ddots & \vdots & \vdots \\
            0 & 0 & \cdots & 1 & 2 \\
            0 & 0 & \cdots & 0 & 1 \\
        \end{bmatrix} \vc_{\mA} ,
    \end{equation}
    where the matrix $\mM$ is clearly full rank and invertible since it strictly positive along the diagonal and in entries above the diagonal.

    \begin{figure}
        \centering
        \includegraphics{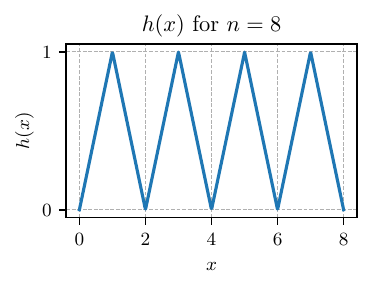}
        \caption{Sample form of function $h(x)$ used in constructing $g_{S,b}(\mA)$ as a GNN. For the construction, we will have $x = \sum_{i \in S} [\vc_{\mA}]_i$.}
        \label{fig:h_form}
    \end{figure}

    Our aim in the second layer is to take the sum $b+ \sum_{i \in S} [\vc_{\mA}]_i$ and take its parity (i.e. $\mod 2$). To achieve this via sums of ReLUs $\sigma$ we can consider the function $h(\cdot)$ of the form below:
    \begin{equation}
        h(x) = \sigma(x+b) -2 \sigma(x+b-1) + 2 \sigma(x+b-2) - \cdots + (-1)^n \sigma(x+b-n).
    \end{equation}
    A sample version of this plot is in \Cref{fig:h_form}. Setting $x = b+ \sum_{i \in S} [\vc_{\mA}]_i$ achieves the end goal with $O(n)$ hidden nodes.

    To provide actual values for the second layer parameters, let $\bm 1_S \in {0,1}^{n+1}$ denote the vector where entry $i$ is equal to $1$ if $i \in S$ and equal to $0$ otherwise. Then, in the second layer, we have
    \begin{itemize}
        \item $\mW_{:,i} = (\mM^{-1})^\top\bm 1_S$ for all columns $i \in [n+1]$ 
        \item $a_1=1$ and $a_i = 2 (-1)^{i-1}$ for all $i \in \{2, \dots, n+1\}$
        \item $v_i = b-i+1$ for all $i \in [n+1]$
    \end{itemize}
    Using the values above, we have
    \begin{equation}
    \begin{split}
        f_{\vu, \vv, \mW}^{(2)}(\vh) &= \sigma(\langle (\mM^{-1})^\top \bm 1_S , \mM \vc_{\mA} \rangle + b) +  \sum_{i=2}^{k_2} (-1)^{i-1}2\sigma(\langle (\mM^{-1})^\top \bm 1_S , \mM \vc_{\mA} \rangle + b - i + 1) \\
        &= \sigma\left( b+ \sum_{i \in S} [\vc_{\mA}]_i \right) + \sum_{i=2}^{k_2} (-1)^{i-1}2\sigma\left( \sum_{i \in S} [\vc_{\mA}]_i + b - i + 1\right) \\
        &= h\left( [\vc_{\mA}]_i \right). 
    \end{split}
    \end{equation}
    As $\sum_{i \in S} [\vc_{\mA}]_i$ only takes values in $\{0, 1, \dots, n\}$, the above is supported on all of its inputs and this completes the proof.
\end{proof}

\subsection{Invariant Hermite polynomial}\label{app:invariant_hermite}

\paragraph{Multivariate Hermite polynomials} 
To handle the graph setting with scaling feature dimension, we first extend the technical machinery related to Hermite polynomials to the invariant settings we consider.

Recall from \cite{diakonikolas2020algorithms} that the $d$-variate normalized Hermite polynomial is defined for every multi-index  $J = (J_1, \ldots, J_d)\in \mathbb{N}^{d}$ as:
\begin{equation}
  H_J: \mathbb{R}^{d} \to \mathbb{R}: \avector{x}{d} \mapsto \prod_{j = 1}^{d} H_{J_j}(x_j), \qquad H_i: \R \to \R: x \mapsto \frac{H_{e_i}(x)}{\sqrt{i!}}, i \in \mathbb{N},
\end{equation}
where $H_{e_i}$ is the usual probabilist Hermite polynomial. 

\paragraph{Graph-invariant Hermite polynomial}
Fix a graph $\gG$ on $n$ vertices and let $\mA := \mA(\gG) \in \mathbb{R}^{n \times n}$ be a graph shift operator (say the Laplacian). Define, for any multi-index $J \in \mathbb{N}^{d}$:
 \begin{equation}
   H^{\mA}_{J}: \mathbb{R}^{n \times d} \to \R: \mX \mapsto \frac{1}{\sqrt{n}} \sum_{v = 1}^{n} H_J(\left( \mA\mX\right)_{v}).
\end{equation}

We have the following orthogonality property:

\begin{lemma}[Orthogonality of invariant Hermite polynomial]\label{lemma:gnn_invariant_hermite_orthogonal}
    For a fixed $\mA \in \R^{n \times n}$ such that $\sum_{u = 1}^n \mA_{vu}^2 = 1$ for all $v \in [n]$, $\langle H^{\mA}_{J}, H^{\mA}_{J'}\rangle_{L^2(\mathcal{N})} = \delta_{J = J'} \cdot c_J$ for all $J, J' \in \mathbb{N}^{d} \backslash \{0\}$ for some $c_J \in \Theta(1)$ that depends only on $\mA$. 
\end{lemma}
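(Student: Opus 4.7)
The plan is to reduce this to the classical orthogonality identity for one-dimensional normalized Hermite polynomials under jointly Gaussian arguments, namely that for $X, Y \sim \mathcal{N}(0,1)$ with correlation $\rho$, Mehler's identity gives $\E[H_i(X)H_j(Y)] = \delta_{ij}\rho^i$. Expanding the definition of $H^{\mA}_J$, one has
\begin{equation*}
\langle H^{\mA}_J, H^{\mA}_{J'}\rangle_{L^2(\mathcal{N})} = \frac{1}{n}\sum_{v,v'=1}^{n} \E_{\mX\sim\mathcal{N}}\bigl[H_J\bigl((\mA\mX)_v\bigr)\, H_{J'}\bigl((\mA\mX)_{v'}\bigr)\bigr],
\end{equation*}
so the proof reduces to understanding each pairwise expectation.

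The first step is a marginal calculation: since $\mX$ has i.i.d.\ standard Gaussian entries, the coordinate $(\mA\mX)_{v,j} = \sum_u \mA_{vu}\mX_{uj}$ is Gaussian with variance $\sum_u \mA_{vu}^2 = 1$ (by hypothesis) and covariance
\begin{equation*}
\E[(\mA\mX)_{v,j}(\mA\mX)_{v',j'}] = \delta_{jj'} \sum_u \mA_{vu}\mA_{v'u} =: \delta_{jj'}\,\rho_{vv'}.
\end{equation*}
In particular, the coordinates inside a single row $(\mA\mX)_v$ are i.i.d.\ $\mathcal{N}(0,1)$, and different coordinates across rows are independent; only the matched coordinate $j=j'$ couples rows $v, v'$ (with correlation $\rho_{vv'}$).

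The second step is to combine the product structure of the multivariate Hermite polynomial $H_J(\vx)=\prod_j H_{J_j}(x_j)$ with the coordinate-wise independence above. This factors the expectation as $\prod_{j=1}^d \E[H_{J_j}((\mA\mX)_{v,j})\,H_{J'_j}((\mA\mX)_{v',j})]$; applying Mehler's identity to each factor yields $\prod_j \delta_{J_j=J'_j}\,\rho_{vv'}^{J_j} = \delta_{J=J'}\,\rho_{vv'}^{|J|}$, where $|J|=\sum_j J_j$. Substituting back, we obtain
\begin{equation*}
\langle H^{\mA}_J, H^{\mA}_{J'}\rangle_{L^2(\mathcal{N})} = \delta_{J=J'}\cdot c_J,\qquad c_J := \frac{1}{n}\sum_{v,v'} \rho_{vv'}^{|J|} = \frac{1}{n}\,\mathbf{1}_n^\top (\mA\mA^\top)^{\circ |J|}\mathbf{1}_n,
\end{equation*}
where $\circ$ denotes the entrywise (Hadamard) power. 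This proves orthogonality for $J\neq J'$.

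The remaining task, and the only mildly delicate step, is the $c_J = \Theta(1)$ bound. The upper bound follows from Cauchy--Schwarz applied to the PSD Gram matrix $R=\mA\mA^\top$: $|\rho_{vv'}|\le\sqrt{R_{vv}R_{v'v'}}=1$, so $c_J \le n$, a quantity depending only on $\mA$. For the lower bound, the Schur product theorem ensures $R^{\circ|J|}$ is PSD (as an integer Hadamard power of a PSD matrix), so $c_J \ge 0$; and since its diagonal entries are all $R_{vv}^{|J|}=1$, the diagonal of the sum contributes exactly $1$, giving $c_J \ge 1$ whenever the off-diagonal contributions are nonnegative (e.g.\ even $|J|$, or when $R$ has nonnegative entries). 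The main subtlety is handling odd $|J|$ for general $\mA$; here one invokes the PSD structure of $R^{\circ|J|}$ to rule out cancellation down to zero, giving a strictly positive constant depending only on $\mA$ (and $J$) as required.
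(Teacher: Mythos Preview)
Your orthogonality argument is correct and essentially identical to the paper's: both expand the double sum over $v,v'$, identify the covariance structure $\E[(\mA\mX)_{v,j}(\mA\mX)_{v',j'}]=\delta_{jj'}\rho_{vv'}$, factor across coordinates, and apply Mehler's identity termwise to obtain $c_J=\frac{1}{n}\sum_{v,v'}\rho_{vv'}^{|J|}$.

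The gap is in your treatment of the lower bound $c_J=\Theta(1)$ for general $\mA$. Your appeal to the PSD structure of $R^{\circ|J|}$ only yields $c_J\ge 0$; it does \emph{not} ``rule out cancellation down to zero.'' Indeed, take $n=2$ and $\mA=\frac{1}{\sqrt{2}}\bigl(\begin{smallmatrix}1&1\\-1&-1\end{smallmatrix}\bigr)$: the rows have unit norm, $R=\mA\mA^\top=\bigl(\begin{smallmatrix}1&-1\\-1&1\end{smallmatrix}\bigr)$, and for $|J|=1$ one gets $c_J=\frac{1}{2}\mathbf{1}^\top R\,\mathbf{1}=0$. So the strictly positive lower bound simply fails under the bare hypothesis $\sum_u \mA_{vu}^2=1$.

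The paper does not attempt to prove $c_J\in\Theta(1)$ inside the proof either; it establishes the orthogonality and the formula for $c_J$, and then in a separate Remark restricts to the intended normalization $\mA=(\mD^\dag)^{1/2}\mM$ (adjacency with self-loops), under which $\rho_{vv'}\in[0,1]$. In that regime your own observation already suffices: all off-diagonal terms are nonnegative and the diagonal contributes $1$, so $c_J\ge 1$. Your proof is fine once you adopt that assumption explicitly rather than claiming the bound for arbitrary row-normalized $\mA$.
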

\begin{proof}
  Let $\Sigma = \Sigma(\rho) = \begin{bmatrix} 1& \rho \\ \rho & 1 \end{bmatrix}$ for some $\rho \in (-1,1)$. We will also use Mehler's formula \citep{kibble1945extension} which states:
  \begin{equation}
    \frac{p_{\mathcal{N}(0,\Sigma)}(x,y)}{p_{\mathcal{N}}(x) p_{\mathcal{N}}(y)} = \sum_{m = 0}^{\infty} \rho^{m} H_m(x) H_m(y),
  \end{equation}
  where $p_{\mathcal{N}(0,\Sigma)}$ is the pdf of the bivariate normal distribution with mean $0$ and covariance  $\Sigma$ and  $p_{\mathcal{N}}$ is the pdf of the univariate standard normal distribution.
  
  We first show orthogonality of Hermite polynomials evaluated on dependent random variables. For all $i, j \in \mathbb{N}$:
 \begin{align}
   \E_{(x,y) \sim \mathcal{N}_2(0, \Sigma)}[H_{i}(x) H_{j}(y)] &= \int_{\R} \int_{\R} H_i(x) H_j(y)  p_{\mathcal{N}(0,\Sigma)}(x,y) \D x \D y\\ 
                                                               &= \int_{\R} \int_{\R} H_i(x) H_j(y) p_{\mathcal{N}}(x) p_{\mathcal{N}}(y) \cdot \frac{p_{\mathcal{N}(0 ,\Sigma)}(x,y)}{p_{\mathcal{N}}(x)p_{\mathcal{N}}(y)}  \D x \D y\\
                                                               &= \int_{\R} \int_{\R} H_i(x) H_j(y) \sum_{m =0}^{\infty} \rho^m H_m(x) H_m(y) \D P(x) \D P(y)\\
                                                               &= \sum_{m = 0}^{\infty} \rho^m \left(\int_{\R} H_i(x) H_m(x) \D P(x) \right)\left( \int_{\R} H_j(y) H_m(y) \D P(y) \right)\\
                                                               &=\sum_{m = 0}^{\infty} \rho^m \delta_{i = m} \delta_{j = m} \\
                                                               &= \rho^i \delta_{i = j}.
 \end{align}

 We are now ready to present the proof of \Cref{lemma:gnn_invariant_hermite_orthogonal}. Pick $J, J' \in \mathbb{N}^d \backslash \{0\}$. We have:
 \begin{align}
   \langle H_J^{\mA}, H_{J'}^{\mA} \rangle_{L^2(\mathcal{N})} &= \frac{1}{n}\sum_{v,v' \in [n]} \E_{\mX \sim \mathcal{N}^{n \times d}}[H_J((\mA\mX)_v) H_{J'}((\mA\mX)_{v'})] \\
    &= \frac{1}{n}\sum_{v,v' \in [n]} \E_{\mZ \sim P}[H_J(\vz_v) H_{J'}(\vz_{v'})],
 \end{align}
 where $\vz_v$ is the $v$-th row of $\mZ \in \mathbb{R}^{n \times d}$ and $P$ is a multivariate normal distribution over  $\R^{n \times d}$ with mean $0$ and covariance:
 \begin{align}
   \Cov(\mZ_{vj}, \mZ_{v'j'}) = 
        \begin{cases}
            0, &\text{ if } j \neq j'\\
            \sum_{u = 1}^n \mA_{vu} \mA_{v'u} =: \rho_{vv'}, &\text{ otherwise.} 
        \end{cases}
 \end{align}
 Furthermore, for any $v$ and any $j \neq j'$, $\mZ_{vj}$ is independent of $\mZ_{v'j'}$. This is because columns of $\mX$ are independent from one another. Note that when $v = v'$ and $j = j'$ the variance is $\sum_{u = 1}^n \mA_{vu}^2 = 1$ by our assumption on $\mA$. 

 By the dependence structure of $P$, one can write:
  \begin{align}
    \E_{\mZ \sim P}[H_J(\vz_v) H_{J'}(\vz_{v'})] &= \prod_{j = 1}^{d}\E_{\mZ \sim P}[H_{J_j}(\mZ_{vj}) H_{J'_j}(\mZ_{v'j})]\\
                                           &= \prod_{j = 1}^{d} \E_{(\mZ_{vj}, \mZ_{v'j}) \sim \mathcal{N}(0,\Sigma(\rho_{vv'}))}[H_{J_j}(\mZ_{vj}) H_{J'_j}(\mZ_{v'j})]\\
                                           &= \prod_{j = 1}^{d} \rho_{vv'}^{J_i} \delta_{J_i = J'_{i}} \\
                                           &= \rho_{vv'}^{|J|} \delta_{J = J'}.\label{eqn:orthogonality_dependent_vector}
 \end{align}

 Therefore,
 \begin{align}
   \langle H_J^{\mA}, H_{J'}^{\mA} \rangle_{L^2(\mathcal{N})} &= \delta_{J = J'} \cdot \frac{1}{n}\sum_{v, v' \in [n]} \rho_{vv'}^{|J|} = \delta_{J = J'} \cdot c_J,
 \end{align}
 where $c_J = \frac{1}{n}\sum_{v, v' \in [n]} \rho_{vv'}^{|J|}$.
\end{proof}

\begin{remark}
    The quantity $\rho_{vv'}$ deserves an explanation. Essentially, $\rho_{vv'}$ measures how similar two nodes $v$ and  $v'$ are, in term of composition of its incoming edges. Algebraically, it is the  $(vv')$-th entry of the gram matrix  $\mA \mA^{\top}$. We can get the correct normalization of $\mA$ in the premise of \Cref{lemma:gnn_invariant_hermite_orthogonal} by starting with a graph $\gG$, adding a self-loop to each node of $\gG$ and set $\mA = (\mD^{\dag})^{\frac{1}{2}} \mM$ where $\mM$ is the adjacency matrix of $\gG$ (with self-loops) and  $\mD$ is the in-degree matrix, we can ensure that  $\rho_{vv'} \in [0,1]$ for each $v \neq v'$:
\begin{equation}
  \rho_{vv'} = 
        \sum_{u = 1}^n \mA_{vu} \mA_{v'u} = \frac{|N_v \cap N_{v'}|}{\sqrt{\deg(v)\deg(v')}} \leq 1,
\end{equation}
where $N_v$ is the set of vertices with an incoming edge to $v \in [n]$.

This immediately gives a bound $\max_J c_J \leq n^2$. Finally, it is also worth  noting that $c_J$ does not vanish under the above normalization, since adding self-loops before the normalization to $\gG$ ensures that $\rho_{vv} = 1$ for all $v \in [n]$ even if $v$ was an isolated vertex in the original graph. Therefore, without loss of generality, we can assume that $c_J \in \Theta(1)$. 

It is possible to use a different normalization, such as the Laplacian, as long as one can ensure that $\rho_{vv'}, c_J = \Theta(1)$. The requirement that rows of $\mA$ sums to $1$ is also not strictly necessary but does greatly simplify our exposition. 
\end{remark}

Note that we are not claiming a complete orthonormal basis for the space $L^{2}(\R^{n \times d}, \mathcal{N}^{n \times d})$. However, all of the functions we care about will lie in $\mathcal{H}^{(d)} := \textup{span}\left( H_{J}^{\mA} \right)_{J \in \mathbb{N}^{d}}$. For a fixed $n$ and a function $f \in \mathcal{H}^{(d)}$, we define $\hat{f}_J = \langle f, H_{J}^{\mA}\rangle_{L^2(\mathcal{N})}$ and the degree $m$ part of  $f$: $f^{[m]}(\mX) = \sum_{|J|=m} \hat{f}_J H_{J}^{\mA}(\mX)$. 

Facts about gradients of Hermite polynomial also carry over to invariant Hermite polynomials:
\begin{lemma}
  For a fixed graph $\gG$ on $n$ vertices and corresponding $\mA$, we have:
  \begin{enumerate}
    \item $\frac{\partial}{\partial \mX_{vj}} H_M^{\mA}(X) = \frac{1}{\sqrt{n}} \sum_{v' = 1}^{n} \sqrt{M_j} H_{M - E_j} ((\mA\mX)_{v'}) \mA_{v'v}$, for all $v \in [n], j \in [d], M \in \mathbb{N}^{d}$, where $E_{j}$ is the vector with $1$ at the  $j'$-th position and  $0$ everywhere else. 
    \item $\E_{\mX \sim \mathcal{N}^{n \times d}}[\langle \grad H_M^{\mA}(\mX), \grad H_{L}^{\mA}(\mX) \rangle] = \delta_{M = L}\cdot |M| c_M$. 
    \item $\E_{\mX \sim \mathcal{N}^{n \times d}}[\langle \grad^{l} H_M^{\mA}(\mX), \grad^{l} H_{L}^{\mA}(\mX) \rangle] = \delta_{M = L}\cdot |M|(|M| - 1)\ldots(|M| - l + 1) c_M$. 
  \end{enumerate}
  In the above, $c_M = n^{-1} \sum_{v', v'' = 1}^{n}  \rho_{v'v''}^{|M|}$.
\end{lemma}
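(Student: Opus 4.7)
The plan is to establish the three items in order, each reducing to a short computation combined with one key algebraic identity from the proof of \Cref{lemma:gnn_invariant_hermite_orthogonal}. \textbf{Item~1} follows directly from the chain rule: expanding $H_M^{\mA}(\mX) = \tfrac{1}{\sqrt{n}}\sum_{v'=1}^n \prod_{k=1}^d H_{M_k}((\mA\mX)_{v'k})$, the quantity $(\mA\mX)_{v'k}$ depends on $\mX_{vj}$ only when $k = j$, with coefficient $\mA_{v'v}$. Hence only the factor $H_{M_j}((\mA\mX)_{v'j})$ is differentiated, and the standard identity $H_i'(x) = \sqrt{i}\,H_{i-1}(x)$ for the normalized univariate Hermite polynomial yields the claimed formula immediately.

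\textbf{For Item~2}, I would plug Item~1 into the gradient inner product to obtain
\begin{equation*}
\E\bigl[\langle \grad H_M^{\mA}, \grad H_L^{\mA}\rangle\bigr]
= \sum_{v=1}^n\sum_{j=1}^d \frac{\sqrt{M_j L_j}}{n}\sum_{v',v''}\mA_{v'v}\mA_{v''v}\;\E\bigl[H_{M-E_j}((\mA\mX)_{v'})\,H_{L-E_j}((\mA\mX)_{v''})\bigr].
\end{equation*}
The inner expectation is precisely the two-row computation already carried out in the proof of \Cref{lemma:gnn_invariant_hermite_orthogonal}: it equals $\rho_{v'v''}^{|M|-1}\delta_{M=L}$. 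Using the definitional identity $\sum_{v}\mA_{v'v}\mA_{v''v} = \rho_{v'v''}$ absorbs one additional $\rho$ factor, and then $\sum_j M_j = |M|$ collapses the sums to $\delta_{M=L}\cdot|M|\cdot c_M$, matching the statement.

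\textbf{For Item~3}, I would iterate Item~1 to obtain the explicit formula
\begin{equation*}
\frac{\partial^l H_M^{\mA}(\mX)}{\partial \mX_{v_1 j_1}\cdots \partial \mX_{v_l j_l}} = \frac{C_{M,\vec j}}{\sqrt{n}}\sum_{v'} H_{M-\sum_i E_{j_i}}((\mA\mX)_{v'})\,\prod_{i=1}^l \mA_{v'v_i},
\end{equation*}
where $\vec j = (j_1,\ldots,j_l)$ and $C_{M,\vec j}^2 := \prod_{i=1}^l \bigl(M - \sum_{i'<i} E_{j_{i'}}\bigr)_{j_i}$. The orthogonality step from Item~2 again enforces $M=L$ and produces $\rho_{v'v''}^{|M|-l}$ inside the expectation, while the graph-side factors separate cleanly as $\sum_{v_1,\ldots,v_l}\prod_i \mA_{v'v_i}\mA_{v''v_i} = \rho_{v'v''}^l$, reconstituting $\rho_{v'v''}^{|M|}$. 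The one step requiring care is the combinatorial sum $\sum_{\vec j} C_{M,\vec j}^2$; I would handle it by telescoping from the innermost index, noting that $\sum_{j_i}\bigl(M - \sum_{i'<i} E_{j_{i'}}\bigr)_{j_i} = |M|-(i-1)$ regardless of earlier choices, since the coordinate-sum of the decremented multi-index is preserved (and any negative coordinates appearing along the way contribute consistently because they arise only after a zero factor has already appeared in $C_{M,\vec j}^2$). Iterating produces the falling factorial $|M|(|M|-1)\cdots(|M|-l+1)$, which combined with $\tfrac{1}{n}\sum_{v',v''}\rho_{v'v''}^{|M|} = c_M$ yields the claimed identity.
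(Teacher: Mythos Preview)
Your proposal is correct and follows essentially the same approach as the paper: chain rule for Item~1, the dependent-Gaussian orthogonality identity from \Cref{lemma:gnn_invariant_hermite_orthogonal} plus $\sum_v \mA_{v'v}\mA_{v''v}=\rho_{v'v''}$ for Item~2, and iteration of Item~1 together with the telescoping sum $\sum_{\vec j} C_{M,\vec j}^2 = |M|(|M|-1)\cdots(|M|-l+1)$ for Item~3. Your treatment of the combinatorial sum in Item~3 is in fact slightly more explicit than the paper's, which simply asserts the falling-factorial identity.
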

\begin{proof}
\begin{enumerate}
  \item We have:
    \begin{align}
      \frac{\partial}{\partial \mX_{vj}} H_M^{\mA}(\mX) &= \frac{1}{\sqrt{n}} \sum_{v' = 1}^{n} (\grad H_M ((\mA\mX)_{v'} ))_{j} \mA_{v'v}\\
                                                  &= \frac{1}{\sqrt{n}} \sum_{v' = 1}^{n}  \sqrt{M_{j}} H_{M - E_{j}}( (\mA\mX)_{v'}) \mA_{v'v},
    \end{align}
    where $E_{j'}$ is the vector with $1$ at the  $j'$-th position and  $0$ everywhere else. The second equality is a known fact for the multivariate Hermite polynomials (See Appendix C of \cite{diakonikolas2020algorithms}).

  \item Sum the above over all pairs $v,v'$ and take expectation to get:
     \begin{align}
       &\E_{\mX \sim \mathcal{N}^{n \times d}}[\langle \grad H_M^{\mA}(\mX), \grad H_{L}^{\mA}(\mX) \rangle] \\
       = &\sum_{v = 1}^{n} \sum_{j = 1}^{d} \frac{1}{n} \E_{\mX \sim \mathcal{N}^{n\times d}} \left[ \sum_{v',v'' = 1}^{n} \left( \grad H_M((\mA\mX)_{v'}) \right)_{j} \mA_{v'v} \left( \grad H_L((\mA\mX)_{v''}) \right)_{j} \mA_{v''v} \right].
    \end{align}

    By the orthogonality property of Hermite multivariate polynomial under dependent multivariate Gaussian (\eqref{eqn:orthogonality_dependent_vector}), we can compute:
    \begin{align}
      \E_{\mX \sim \mathcal{N}^{n \times d}}[ H_{M - E_{j}}((\mA\mX)_{v'}) H_{L-E_{j}}((\mA\mX)_{v''})] = \rho_{v'v''}^{|M|-1} \delta_{M = L},
    \end{align}
    and,
    \begin{align}
      \E_{\mX \sim \mathcal{N}^{n \times d}}[\langle \grad H_M^{\mA}(\mX), \grad H_{L}^{\mA}(\mX) \rangle] &= \frac{1}{n}\sum_{v, v', v'' = 1}^{n} \mA_{v'v}\mA_{v''v}  \rho_{v'v''}^{|M|-1} \delta_{M = L}\sum_{j = 1}^{d} M_j\\
                                                                                                 &=  \delta_{M = L}\cdot \frac{|M|}{n}\sum_{v', v'' = 1}^{n}  \rho_{v'v''}^{|M|-1} \sum_{v = 1}^{n} \mA_{v'v}\mA_{v''v}\\
                                                                                                 &= \delta_{M = L}\cdot \frac{|M|}{n}\sum_{v', v'' = 1}^{n}  \rho_{v'v''}^{|M|}.
    \end{align}
  \item Recursively apply the first part to get:
    \begin{align}
      &\frac{\partial^l}{\partial \mX_{v_1j_1},\ldots, \mX_{v_lj_l}} H_M^G(\mX)\\
      = &\frac{1}{\sqrt{n}} \sum_{v' = 1}^{n} \left(\prod_{p = 1}^l \mA_{v'v_{p}} \sqrt{\left(M - \sum_{p' = 1}^p E_{j_{p'}}\right)_{j_p}}\right) H_{M - \sum_{p = 1}^l e_{j_{p}}}  ((\mA \mX)_{v'}). 
    \end{align}

    Summing indices and passing through the expectation to get:
\begin{align}
  &\E_{\mX \sim \mathcal{N}^{n \times d}}[\langle \grad^{l} H_M^{\mA}(\mX), \grad^{l} H_{L}^{\mA}(\mX) \rangle] \\
  = & \frac{\delta_{M = L}}{n}\left(\sum_{j_1,\ldots,j_l}\left(M - \sum_{p' = 1}^p E_{j_{p'}}\right)_{j_p}\right)\sum_{v',v''}\left( \sum_{v = 1}^{n} \mA_{v'v}\mA_{v''v}\right)^{l} \rho_{v'v''}^{|M| - l}\\
                                                                                                     = &\delta_{M = L} |M| (|M| - 1) \ldots (|M| - l) \cdot \frac{1}{n} \sum_{v', v'' = 1}^{n}\rho_{v'v''}^{|M|}.
\end{align}
\end{enumerate} 
\end{proof}

\begin{corollary}\label{cor:inner_prod_of_grad_equals_expectation}
  Fix $k,d,e \in \mathbb{N}$. Let $p,q$ be linear combinations of $H_{M}^{\mA}: \R^{d} \to \R$ with $M \in \mathbb{N}^{d}$ varying among all multi-indices of size $k=|M|$. Then,
   \begin{equation}
     \left\langle \grad^k p(\mX), \grad^k q(\mX) \right\rangle = k!  \cdot \E_{\mX \sim \mathcal{N}^{n\times d}}[p(\mX) q(\mX)].
  \end{equation}
\end{corollary}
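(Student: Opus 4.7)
The core observation is that if $p$ is a linear combination of invariant Hermite polynomials $H_M^{\mA}$ of fixed total degree $|M|=k$, then $p$ itself is a polynomial of total degree exactly $k$ in the entries of $\mX$. Applying the $k$-th total derivative $\grad^k$ therefore produces a constant symmetric tensor (i.e.\ one not depending on $\mX$), so $\langle \grad^k p(\mX), \grad^k q(\mX)\rangle$ is a constant scalar. In particular it equals its own expectation under $\mX \sim \mathcal{N}^{n\times d}$. This is the step that makes the identity even formable, and I would state it first.

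Once this reduction to an expectation is in place, the proof becomes a direct bilinear expansion. Write $p=\sum_{|M|=k} a_M H_M^{\mA}$ and $q=\sum_{|L|=k} b_L H_L^{\mA}$. Then by linearity of $\grad^k$ and of the inner product,
\begin{equation}
    \langle \grad^k p(\mX), \grad^k q(\mX)\rangle
    = \E_{\mX \sim \mathcal{N}^{n\times d}}\!\left[\langle \grad^k p(\mX), \grad^k q(\mX)\rangle\right]
    = \sum_{|M|=|L|=k} a_M b_L \,\E\!\left[\langle \grad^k H_M^{\mA}, \grad^k H_L^{\mA}\rangle\right].
\end{equation}
I would then invoke part~3 of the preceding lemma with $l=k$: since $|M|=k$, the falling factorial $|M|(|M|-1)\cdots(|M|-k+1)$ collapses to $k!$, giving
\begin{equation}
    \E\!\left[\langle \grad^k H_M^{\mA}, \grad^k H_L^{\mA}\rangle\right] = \delta_{M=L}\cdot k!\cdot c_M.
\end{equation}
Hence the sum reduces to $k!\sum_{|M|=k} a_M b_M c_M$.

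On the other side of the asserted equality, the orthogonality property from \Cref{lemma:gnn_invariant_hermite_orthogonal} gives $\E[H_M^{\mA} H_L^{\mA}] = \delta_{M=L}\, c_M$, so bilinearly $\E[p(\mX)q(\mX)] = \sum_{|M|=k} a_M b_M c_M$. Comparing the two expressions yields
$\langle \grad^k p, \grad^k q\rangle = k!\,\E[p q]$, as claimed.

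I do not anticipate a genuine obstacle: the whole argument is a bilinear unpacking. The only point that deserves a sentence of care is the initial assertion that $\grad^k$ of a pure degree-$k$ polynomial is a constant tensor, justifying the ``$=\E[\cdot]$'' step; everything after that is a mechanical application of results already proved in the excerpt.
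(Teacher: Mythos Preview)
Your proof is correct and follows essentially the same route as the paper: bilinearly expand both sides, apply part~3 of the preceding lemma with $l=k$ to get $k!\,\delta_{M=L}\,c_M$, and match against the orthogonality relation $\E[H_M^{\mA}H_L^{\mA}]=\delta_{M=L}\,c_M$. The one difference is that you make explicit the ``$\grad^k$ of a degree-$k$ polynomial is a constant tensor'' step to justify replacing the pointwise inner product by its expectation, whereas the paper simply writes $\langle\grad^k p,\grad^k q\rangle=\E[\langle\grad^k p,\grad^k q\rangle]$ without comment.
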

\begin{proof}
  Let $p(\mX) = \sum_{M:|M| = k} b_M H_M^{\mA}(\mX)$ and $q(X) = \sum_{K:|K| = k} f_K H_K^{\mA}(\mX)$. Then:
  \begin{equation}
    \E_{\mX \sim \mathcal{N}^{n \times d}} [p(\mX) q(\mX)] = \sum_{M:|M| = k} f_M b_M c_M,
  \end{equation}
  where as before, $c_M = n^{-1} \sum_{v', v'' = 1}^{n}  \rho_{v'v''}^{|M|}$.

  From the previous lemma,
  \begin{align}
    \langle \grad^k p(\mX), \grad^k q(\mX) \rangle &= \E_{\mX \sim \mathcal{N}^{n \times d}} \left[\left\langle \sum_{M:|M| = k} b_M \grad^k H_M^{\mA}(\mX), \sum_{K:|K|=k} f_M \grad^k H_K^{\mA}(\mX)\right\rangle\right]\\
                                               &= \sum_{M,K:|M|=|K|=k} b_M f_M k! \delta_{M = K} c_M\\
                                               &= k! \sum_{M:|M| = k} b_M f_M c_M \\
                                               &= k! \cdot  \E_{\mX \sim \mathcal{N}^{n\times d}} [p(\mX)q(\mX)].
  \end{align}
\end{proof}

\subsection{Proof of Theorem \ref{thm:gnn_exp_lower_bound}}
The proof of \Cref{thm:gnn_exp_lower_bound} is derived from the construction in \cite{diakonikolas2020algorithms}. Hence, much of the setup will be taken from their construction. However, care must be taken to use the proper invariant Hermite polynomial basis we derived in the previous subsection. For completeness, we extend their formalism here as needed and note where proofs are directly derived from their work.

We first verify that the orthogonal system of invariant Hermite polynomial subsumes $1$-hidden-layer GNNs. We formally show this below to continue the proof later on in the basis of the invariant Hermite polynomials. As a reminder, we defined $\mathcal{H}^{(d)} := \textup{span}\left( H_{J}^{\mA} \right)_{J \in \mathbb{N}^{d}}$. 
\begin{lemma}[Inner product decomposition]\label{lemma:gnn_inner_product_decomposition}
  For every $k \in \mathbb{N}$ and $\gG \in \mathbb{G}_n$, $F^{d,k}_{\gG} \subset \mathcal{H}^{(d)}$. 
\end{lemma}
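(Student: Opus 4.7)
The plan is to Hermite-expand each scalar ridge activation $\sigma(\langle \vw, \cdot \rangle)$ node-wise and then recognize the node-sum as a graph-invariant Hermite polynomial $H_J^{\mA}$.

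First, I would fix an arbitrary $f \in F^{d,k}_{\gG}$ with parameters $\mW \in \R^{d\times 2k}$, $\va \in \R^{2k}$, denote by $\vw_j \in \R^d$ the $j$-th column of $\mW$, and rewrite
\begin{equation}
    f(\mX) \;=\; \bm{1}_n^\top \sigma(\mA \mX \mW)\va \;=\; \sum_{j=1}^{2k} a_j \sum_{v=1}^n \sigma\bigl(\langle \vw_j, (\mA \mX)_v\rangle\bigr),
\end{equation}
where $(\mA\mX)_v \in \R^d$ is the $v$-th row of $\mA\mX$. Under the normalization assumption $\sum_u \mA_{vu}^2 = 1$ (used throughout \Cref{app:invariant_hermite}), each marginal $(\mA\mX)_v$ is distributed as $\mathcal{N}(\bm 0, \mI_d)$, so the scalar function $\vy \mapsto \sigma(\langle \vw_j, \vy\rangle)$ lies in $L^2(\R^d, \mathcal{N}(\bm 0, \mI_d))$ (for the activations we care about, e.g.\ ReLU) and therefore admits a convergent multivariate Hermite expansion
\begin{equation}
    \sigma(\langle \vw_j, \vy\rangle) \;=\; \sum_{J \in \mathbb{N}^d} c_{j,J}\, H_J(\vy), \qquad c_{j,J} := \E_{\vy \sim \mathcal{N}(\bm 0,\mI_d)}\!\left[\sigma(\langle \vw_j, \vy\rangle) H_J(\vy)\right].
\end{equation}

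Next, I would substitute this expansion back in, interchange the (finite outer and $L^2$-convergent inner) sums, and use the definition $H^{\mA}_J(\mX) = n^{-1/2}\sum_{v=1}^n H_J((\mA\mX)_v)$ to obtain
\begin{equation}
    f(\mX) \;=\; \sum_{j=1}^{2k} a_j \sum_{J \in \mathbb{N}^d} c_{j,J} \sum_{v=1}^n H_J((\mA\mX)_v) \;=\; \sum_{J \in \mathbb{N}^d} \sqrt{n}\!\left(\sum_{j=1}^{2k} a_j c_{j,J}\right)\! H_J^{\mA}(\mX).
\end{equation}
This is by construction an element of $\mathcal{H}^{(d)} = \textup{span}(H_J^{\mA})_{J \in \mathbb{N}^d}$ (interpreted as the closed linear span in $L^2(\mathcal{N}^{n\times d})$, consistent with the use of $\mathcal{H}^{(d)}$ elsewhere to host Hermite series like $f = \sum_m f^{[m]}$).

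The only genuinely subtle point is the interchange of summations and the $L^2$ interpretation of ``span'': the Hermite series for $\sigma(\langle \vw_j,\cdot\rangle)$ converges in $L^2(\mathcal{N}(\bm 0,\mI_d))$ rather than pointwise, so to push it through the sum over nodes I would work in $L^2(\mathcal{N}^{n\times d})$, use the marginal Gaussianity of $(\mA\mX)_v$ to control each summand, and invoke the triangle inequality on finite truncations to conclude $L^2$-convergence of the series $\sum_J \sqrt{n}(\sum_j a_j c_{j,J}) H_J^{\mA}$ to $f$. This is the main (though mild) obstacle; the rest is bookkeeping. The analogous statement for $F_n^{d,k}$ follows the same argument applied conditionally on $\gG \sim \mathcal{E}$.
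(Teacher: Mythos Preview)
Your proposal is correct and essentially identical to the paper's proof: the paper first bundles the channel sum into a single function $g(\vy)=\sum_{j} a_j \sigma(\langle \vw_j,\vy\rangle)$, Hermite-expands $g$ once, and then sums over nodes to recognize $H_J^{\mA}$, whereas you expand each channel separately and then sum --- a distinction without a difference by linearity. Your extra care about $L^2$-convergence and the swap of sums is a nice addition that the paper leaves implicit.
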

\begin{proof}
  Pick $f \in F^{d,k}_{\gG}$, with a corresponding $\mW$ and  $\va$.  Let  $g:  \R^{d} \to \R: \vx \mapsto \sum_{c = 1}^{2k} 
  a_c \sigma(\langle \vx, \vw_c \rangle)$ where $\vw_c$ is the $c$-th column of $\mW$. Then we have $f(\mX) = \sum_{v = 1}^{n} g((\mA\mX)_{v})$ where the subscript represents the row of the corresponding matrix. Since $\rho \in L^{2}(\mathcal{N})$, $g$ is in  $L^{2}(\mathcal{N})$ and we can write uniquely:
  \begin{equation}
    g(\vx) = \sum_{J \in \mathbb{N}^{d}} \hat{g}_J H_J(\vx).
  \end{equation}

  Therefore, 
  \begin{equation}
    f(\mX) = \sum_{v = 1}^{n} g((\mA \mX )_{v}) = \sum_{v = 1}^{n} \sum_{J \in \mathbb{N}^{d}} \hat{g}_J H_J((\mA\mX)_{v}) = \sum_{J \in \mathbb{N}^{d}} (\hat{g}_J\sqrt{n})  H_{J}^{\mA}(\mX) \in \mathcal{H}^{(d)}.
  \end{equation}
\end{proof}

Now we verify that the special function $f_{\gG}$ in our construction of the family of hard functions has vanishing low degree moments:
\begin{lemma}[Low degree moment vanishes]\label{lemma:gnn_low_degree_moment}
    For any $k \in \mathbb{N}$, for any $\gG$ and appropriate $\mA$, $f_{\gG}  \in \mathcal{H}^{(2)}$. Furthermore, $(\hat{f_{\gG} })_J = 0$ for every $|J| < k$.
\end{lemma}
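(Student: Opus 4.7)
The plan is to combine the preceding Lemma~\ref{lemma:gnn_inner_product_decomposition} with a direct Hermite-coefficient computation for the specific choice of $\va^*$ and $\mW^*$. Membership $f_{\gG} \in \mathcal{H}^{(2)}$ is immediate from Lemma~\ref{lemma:gnn_inner_product_decomposition} applied with $d = 2$, which moreover gives the explicit expansion
\begin{equation}
    f_{\gG}(\mX) \;=\; \sum_{J \in \mathbb{N}^2} \sqrt{n}\, \hat{g}_J\, H_J^{\mA}(\mX), \qquad g(\vx) = \sum_{c = 1}^{2k} (-1)^c \sigma(\vx^\top \vw_c^*),
\end{equation}
where $\vw_c^* = (\cos(\pi c / k), \sin(\pi c / k))^\top$ is the $c$-th column of $\mW^*$. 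Since $\hat{f}_J$ is proportional to $\hat{g}_J$, it suffices to show $\hat{g}_J = 0$ whenever $|J| < k$.

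To compute $\hat{g}_J$, I will expand $\sigma$ in the univariate Hermite basis, $\sigma(y) = \sum_{m \ge 0} \hat\sigma_m H_m(y)$, and then use the standard identity that, for any unit vector $\vw \in \R^2$,
\begin{equation}
    H_m(\vx^\top \vw) \;=\; \sum_{|J| = m} \sqrt{\tbinom{m}{J}}\, \vw^J\, H_J(\vx),
\end{equation}
which follows from the generating-function identity $e^{t(\vx^\top \vw) - t^2/2} = \prod_i e^{t w_i x_i - t^2 w_i^2/2}$ after matching coefficients in $t$. Substituting this and collecting the coefficient of $H_J(\vx)$ yields
\begin{equation}
    \hat g_J \;=\; \hat\sigma_{|J|}\, \sqrt{\tbinom{|J|}{J}}\, S_J, \qquad S_J \;:=\; \sum_{c = 1}^{2k} (-1)^c \cos^{J_1}(\pi c / k)\, \sin^{J_2}(\pi c / k).
\end{equation}
Thus everything reduces to showing $S_J = 0$ for $|J| < k$.

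The key calculation is a roots-of-unity filter on $\mathbb{Z}_{2k}$. Writing $\cos \theta$ and $\sin \theta$ as complex exponentials, a product $\cos^{J_1}\theta \sin^{J_2}\theta$ expands as a linear combination of $e^{i \ell \theta}$ with $\ell \in \{-m, -m+2, \dots, m\}$ where $m = |J|$. Using $(-1)^c = e^{i \pi c}$ and $\theta = \pi c / k$, the typical term becomes $e^{i \pi c (k + \ell)/k}$, and the sum $\sum_{c = 1}^{2k} e^{i \pi c (k + \ell)/k}$ vanishes unless $k + \ell \equiv 0 \pmod{2k}$, i.e.\ unless $\ell \in \{-k, k\}$. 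Since $|\ell| \le m < k$, no such $\ell$ appears, so every term of $S_J$ sums to zero. Therefore $\hat g_J = 0$ for all $|J| < k$, which finishes the proof.

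The only nontrivial step is the Fourier cancellation: the alternating signs $(-1)^c$ effectively shift the relevant frequency by $k$, and the angular resolution $\pi/k$ of the columns of $\mW^*$ guarantees that frequencies of magnitude less than $k$ cannot resonate. The rest is bookkeeping: checking that the coefficients $\sqrt{\binom{m}{J}}$ and the proportionality $\hat{f}_J \propto \hat g_J$ (coming from Lemma~\ref{lemma:gnn_inner_product_decomposition} and the scalars $c_J$ from Lemma~\ref{lemma:gnn_invariant_hermite_orthogonal}) do not affect vanishing.
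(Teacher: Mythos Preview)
Your argument is correct and matches the paper's approach in substance: both invoke Lemma~\ref{lemma:gnn_inner_product_decomposition} for membership in $\mathcal{H}^{(2)}$ and reduce to the vanishing of the low-degree Hermite coefficients of the scalar function $g$. The paper simply cites \cite{diakonikolas2020algorithms} for the latter (identifying $g$ with their $f_{\varphi,\sigma}$), whereas you unpack that citation by carrying out the roots-of-unity cancellation $\sum_{c=1}^{2k}(-1)^c e^{i\pi c\ell/k}=0$ for $|\ell|<k$ explicitly; the two are the same computation, so there is no genuine difference in route.
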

\begin{proof}
  The first part follows from specializing \Cref{lemma:gnn_inner_product_decomposition} for $d = 2$. The second part follows from the fact that the corresponding $g$ function in the proof of \Cref{lemma:gnn_inner_product_decomposition} for $f$ is the function  $f_{\varphi,\sigma}$ in \cite{diakonikolas2020algorithms}, whose low degree moments vanish.
\end{proof}

It is also easy to verify that $C_{\gG}^\mathcal{B} \subset F^{d,k}_{\gG}$:
\begin{lemma}
  For any $k,d,n \in \mathbb{N}$, for some activation $\sigma$ that is not a low-degree polynomial, $C_{\gG}^\mathcal{B} \subset F^{d,k}_{\gG}$.
\end{lemma}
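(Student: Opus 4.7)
The plan is to verify the inclusion by direct construction: given any $g^\mB_\gG \in C_{\gG}^{\mathcal{B}}$, exhibit explicit weights $(\mW,\va) \in \R^{d\times 2k}\times \R^{2k}$ realizing $g^\mB_\gG$ as an element of $F_{\gG}^{d,k}$. Unfolding the definition and using associativity of matrix multiplication $\mA(\gG)(\mX\mB)\mW^{*} = \mA(\gG)\mX(\mB\mW^{*})$,
\begin{equation}
g^\mB_\gG(\mX) \;=\; \frac{1}{\|f_{\gG}\|_{\mathcal{N}}}\,\bm{1}_n^\top \sigma\!\bigl(\mA(\gG)(\mX\mB)\mW^{*}\bigr)\,\va^{*} \;=\; \bm{1}_n^\top \sigma\!\bigl(\mA(\gG)\mX\,\mW\bigr)\,\va,
\end{equation}
with the choices $\mW := \mB\mW^{*} \in \R^{d\times 2k}$ and $\va := \va^{*}/\|f_{\gG}\|_{\mathcal{N}} \in \R^{2k}$. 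This is exactly the form defining $F_{\gG}^{d,k}$, so $g^\mB_\gG \in F_{\gG}^{d,k}$, provided the normalizing constant is well-defined.

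The only nontrivial step, which I expect to be the main obstacle, is verifying that $\|f_{\gG}\|_{\mathcal{N}} > 0$ so the construction is not degenerate. The argument goes through the invariant Hermite expansion developed earlier: by \Cref{lemma:gnn_low_degree_moment}, $f_{\gG}$ lies in $\mathcal{H}^{(2)} = \mathrm{span}(H_{J}^{\mA})_{J\in\mathbb{N}^{2}}$, and by \Cref{lemma:gnn_invariant_hermite_orthogonal} these basis functions are orthogonal with squared norms $c_J = \Theta(1)$ (using the normalization of $\mA$ from the remark after that lemma, which presumes $\gG$ is not the empty graph, as guaranteed by the standing hypothesis $\gG \not\equiv \emptyset$ noted in the footnote of \Cref{thm:gnn_exp_lower_bound}). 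Hence $\|f_{\gG}\|_{\mathcal{N}}^{2} = \sum_{J}(\hat{f}_{\gG})_J^{2}\,c_J$, so it suffices to exhibit one nonzero Hermite coefficient of $f_{\gG}$. Tracing the decomposition in \Cref{lemma:gnn_inner_product_decomposition}, these coefficients are (up to the factor $\sqrt{n}$) the Hermite coefficients of the low-dimensional building block $f_{\varphi,\sigma}$ from \cite{diakonikolas2020algorithms}, and Remark 12 of that paper (which applies whenever $\sigma$ is not a low-degree polynomial, e.g., $\mathrm{ReLU}$) guarantees that at least one coefficient at some degree $m \geq k$ is nonzero. This yields $\|f_{\gG}\|_{\mathcal{N}} > 0$ and completes the proof.
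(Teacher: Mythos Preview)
Your proof is correct and follows essentially the same approach as the paper: both use associativity $\mA(\gG)(\mX\mB)\mW^{*} = \mA(\gG)\mX(\mB\mW^{*})$ to absorb $\mB$ into the weight matrix, then invoke the non-vanishing argument from \cite{diakonikolas2020algorithms} to ensure the normalization is well-defined. Your treatment of the non-vanishing step is somewhat more detailed than the paper's (which simply cites the same argument), but the substance is identical.
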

\begin{proof}
  Pick $\mB \in \mathcal{B}$ arbitrarily and let $h(\mX) = f_{\gG}(\mX\mB)$. We have:
  \begin{align}
    h(\mX) = f_{\gG}(\mX\mB) = \bm{1}_n^\top \sigma\left( \mA (\mX\mB)\mW^* \right)\va^* =  \bm{1}_n^\top \sigma\left( \mA\mX(\mB\mW^*) \right)\va^*.
  \end{align}
  
  Since $\mB\mW^*$ is in $\R^{d \times 2k}$ and $\va^*$ is in $\R^{2k}$, we have $h \in F^{d,k}_{\gG}$. Functions in $\mathcal{F}_k^{\mathcal{B}}$ also do not vanish, as long as $\mA$ does not have vanishing rank and the activation function $\sigma$ is not a low-degree polynomial, by the same argument as in \cite{diakonikolas2020algorithms}. 
\end{proof}

From \cite{diakonikolas2020algorithms}, we know the following facts about matrices in $\R^{d \times 2}$:
\begin{lemma}[Lemma 4.7 in \cite{diakonikolas2020algorithms}]\label{lemma:exp_lb_csq_graph_small_matrix_operator_norm}
  Fix $c \in (0, \frac{1}{2})$. Then there is a set $S \subset \R^{d \times 2}$ of size at least $2^{\Omega(d^c)}$ such that for each $\mA, \mB \in S$, $\|\mA^\top \mB\|_2 \leq O(d^{c - \frac{1}{2}})$.
\end{lemma}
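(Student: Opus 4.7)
The plan is to construct $S$ by the probabilistic method, sampling i.i.d.\ matrices with orthonormal columns and then pruning away a small number of ``bad'' pairs. I would sample $N$ matrices $\mA_1,\ldots,\mA_N$ independently from the uniform (Haar) distribution on the Stiefel manifold $V_2(\R^d) = \{\mA \in \R^{d\times 2} : \mA^\top \mA = I_2\}$. By rotation invariance of this distribution, the marginal law of each column of each $\mA_i$ is $\operatorname{Unif}(S^{d-1})$, and for distinct $i \neq j$ I may condition on $\mA_i$ and treat $\mA_j$ as an independent uniform Stiefel sample with columns marginally uniform on $S^{d-1}$. The statement is implicitly about distinct $\mA,\mB \in S$; otherwise $\mA^\top\mA = I_2$ has spectral norm $1$.

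The key analytical step is a concentration bound on inner products of random unit vectors. Each of the four entries of $\mA_i^\top \mA_j$ is of the form $\langle \vu, \vv\rangle$ where $\vv$ is fixed (after conditioning on $\mA_i$) and $\vu \sim \operatorname{Unif}(S^{d-1})$. The standard spherical cap estimate gives
\begin{equation}
    \Pr_{\vu \sim \operatorname{Unif}(S^{d-1})}\bigl[|\langle \vu,\vv\rangle| \geq t\bigr] \leq 2 \exp(-(d-1)t^2/2).
\end{equation}
Setting $t = d^{c-1/2}$ yields a failure probability of at most $2\exp(-\Omega(d^{2c}))$. A union bound over the four entries of $\mA_i^\top \mA_j$ shows that $\|\mA_i^\top \mA_j\|_F \leq 2 d^{c-1/2}$ with probability at least $1 - 8\exp(-\Omega(d^{2c}))$; since the spectral norm of a $2\times 2$ matrix is bounded by its Frobenius norm, the same tail bound applies to $\|\mA_i^\top\mA_j\|_2$.

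Finally, I apply the probabilistic method. Choose $N = \lfloor 2^{d^c} \rfloor$. The expected number of ``bad'' pairs $(i,j)$ with $\|\mA_i^\top \mA_j\|_2 > C d^{c-1/2}$ (for the absolute constant $C$ implicit above) is at most
\begin{equation}
    \binom{N}{2} \cdot 8 \exp(-\Omega(d^{2c})).
\end{equation}
Because $c \in (0, 1/2)$ and therefore $d^{2c} = \omega(d^c \log 2)$ as $d \to \infty$, this quantity is smaller than $N/2$ for all sufficiently large $d$. By Markov's inequality there is a realization of $\{\mA_i\}$ achieving at most $N/2$ bad pairs; deleting one matrix from each such pair leaves a subset $S$ of size at least $N/2 = 2^{\Omega(d^c)}$ in which every pair of distinct matrices satisfies the required bound.

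The main obstacle is simply the concentration inequality on the sphere, which is classical, together with the routine bookkeeping that $d^{2c}$ dominates $\log\binom{N}{2}$ for the target $N = 2^{\Omega(d^c)}$. This is automatic since $c > 0$; there is no genuine tension between the size of $S$ and the norm bound. A purely deterministic alternative via Gilbert--Varshamov--style code constructions on the Grassmannian is possible but unnecessary for the stated $2^{\Omega(d^c)}$ rate.
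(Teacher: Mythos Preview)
Your proof is correct and follows essentially the same approach the paper takes in its own extension of this lemma (\Cref{lem:extended_projection_set}): sample random unit directions, invoke the spherical cap concentration bound $\Pr[|\langle \vu,\vv\rangle|\ge n^{c-1/2}]\le\exp(-\Omega(n^{2c}))$, and union bound over all $O(N^2)$ pairs with $N=2^{\Omega(d^c)}$. The only cosmetic differences are that you sample directly from the Stiefel manifold (giving exact $\mA^\top\mA=I_2$) whereas the paper pairs up independent sphere samples, and your Markov/deletion step is actually unnecessary since the union bound already drives the expected number of bad pairs to $o(1)$.
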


The following Lemma generalizes Lemma 4.6 in \cite{diakonikolas2020algorithms}:
\begin{lemma}\label{lemma:exp_lb_csq_graph_chain_rule}
  Fix a function $\mathcal{H}^{(2)} \ni p: \R^{n \times 2} \to \R$ and $\mU, \mV \in \R^{d \times 2}$ such that $\mU^\top \mU = \mV^\top \mV = \mathbf{1}_2$. Then:
  \begin{equation}
    \left|\langle p(\cdot \mU) p(\cdot \mV) \rangle_{L^2(\mathcal{N})}\right| \leq  \sum_{m = 0}^\infty  \left( 2\sqrt{n} \|\mU^\top \mV\|_2 \right)^{m}\cdot \|p^{[m]}\|^2_{L^2(\mathcal{N})},
  \end{equation}
  where $p^{[m]}$ is the degree $m$ part of  $p$.  
\end{lemma}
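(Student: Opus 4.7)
The plan is to follow the structure of the univariate lemma (Lemma 4.6 in \cite{diakonikolas2020algorithms}), decomposing $p$ by degree in the invariant Hermite basis and applying a Mehler-type identity to the jointly Gaussian row variables. First I would write $p = \sum_m p^{[m]}$, where $p^{[m]}$ lies in the span of $\{H_J^{\mA} : |J|=m\}$. Collecting Hermite coefficients, each $p^{[m]}$ takes the form $p^{[m]}(\mY) = n^{-1/2}\sum_{v=1}^n g_m((\mA\mY)_v)$ for a fixed degree-$m$ polynomial $g_m: \R^2 \to \R$ whose bivariate Hermite coefficients are exactly the $\hat p_J$. The inner product then expands as $\langle p(\cdot \mU), p(\cdot \mV)\rangle_{L^2(\mathcal{N})} = \sum_{m,l} \E[p^{[m]}(\mX\mU)\, p^{[l]}(\mX\mV)]$.

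Next I would establish cross-degree orthogonality. Repeating the covariance calculation of \Cref{lemma:gnn_invariant_hermite_orthogonal} for the pair $(\mX\mU, \mX\mV)$, the joint vector $(\vz_v, \vz'_{v'}) := ((\mA\mX\mU)_v, (\mA\mX\mV)_{v'}) \in \R^2 \times \R^2$ is centered Gaussian, with each marginal of covariance $I_2$ (using $\sum_u \mA_{vu}^2 = 1$) and cross-covariance $\rho_{vv'}\,\mU^\top \mV$. The bivariate Hermite generating identity $\E[e^{\vt\cdot \vz_v - |\vt|^2/2}\,e^{\vs\cdot \vz'_{v'} - |\vs|^2/2}] = \exp(\rho_{vv'}\vt^\top \mU^\top \mV \vs)$ then forces $\E[H_J(\vz_v) H_K(\vz'_{v'})]$ to vanish unless $|J|=|K|$, so $\E[p^{[m]}(\mX\mU)\, p^{[l]}(\mX\mV)] = 0$ whenever $m \neq l$.

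For the diagonal terms $m=l$, the same generating identity yields the tensor bound $|\E[g_m(\vz_v)\, g_m(\vz'_{v'})]| \leq \|g_m\|_{\mathcal{N}}^2\, |\rho_{vv'}|^m \|\mU^\top \mV\|_2^m$, obtained by identifying $g_m$ with a symmetric order-$m$ tensor in the orthonormal Hermite basis and applying $|\langle T, \mR^{\otimes m} T\rangle| \leq \|T\|^2 \|\mR\|_2^m$. Averaging over row pairs gives $|\E[p^{[m]}(\mX\mU)\, p^{[m]}(\mX\mV)]| \leq n^{-1}\|g_m\|^2 \|\mU^\top \mV\|_2^m \sum_{v,v'}|\rho_{vv'}|^m$, whereas \Cref{lemma:gnn_invariant_hermite_orthogonal} gives $\|p^{[m]}\|_{\mathcal{N}}^2 = n^{-1}\|g_m\|^2 \sum_{v,v'}\rho_{vv'}^m$.

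The main obstacle is converting this to the claimed $(2\sqrt n \,\|\mU^\top \mV\|_2)^m\,\|p^{[m]}\|_{\mathcal{N}}^2$ bound, which amounts to controlling the ratio $R_m := \sum_{v,v'}|\rho_{vv'}|^m/\sum_{v,v'}\rho_{vv'}^m$. This ratio is trivially one when $\rho_{vv'}\geq 0$ (e.g., the normalized adjacency with self-loops discussed after \Cref{lemma:gnn_invariant_hermite_orthogonal}, where $\rho_{vv'}\in[0,1]$), but may be inflated when $\mA$ is Laplacian-type and $\rho_{vv'}$ takes mixed signs. Using $|\rho_{vv'}| \leq \|\mA_v\|\|\mA_{v'}\| \leq 1$ together with a H\"older / Schatten-norm bound on the Gram matrix $\mA\mA^\top$ should yield $R_m \leq (2\sqrt n)^m$, absorbing constants into the factor $2^m$. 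Summing the per-degree bounds over $m$ then completes the proof.
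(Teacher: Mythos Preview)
Your decomposition and Mehler computation are correct, and you have located the right obstacle, but the proposed resolution fails. The ratio $R_m = \sum_{v,v'}|\rho_{vv'}|^m \big/ \sum_{v,v'}\rho_{vv'}^m$ is \emph{not} bounded by $(2\sqrt n)^m$ in general: take any $\mA$ with $\mA\mA^\top$ having unit diagonal and all off-diagonal entries equal to $-1/(n-1)$ (this Gram matrix is positive semidefinite). Then for $m=1$ the denominator $\sum_{v,v'}\rho_{vv'} = \mathbf{1}^\top\mA\mA^\top\mathbf{1} = 0$ while the numerator equals $2n$, and a small perturbation of $\mA$ makes $R_1$ finite but arbitrarily large. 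No H\"older or Schatten argument can repair this, because the obstruction is cancellation in a signed sum, not the size of the entries.

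The loss is self-inflicted and occurs when you pass to absolute values pairwise. Since the cross-covariance of $(\vz_v,\vz'_{v'})$ is the scalar multiple $\rho_{vv'}\,\mU^\top\mV$, and the degree-$m$ Hermite moment is homogeneous of degree $m$ in that cross-covariance, one has the \emph{exact} factorization $\E[g_m(\vz_v)g_m(\vz'_{v'})] = \rho_{vv'}^{m}\cdot q$, where $q$ depends only on $g_m$ and $\mU^\top\mV$ and already satisfies your tensor bound $|q|\le \|g_m\|_{\mathcal{N}}^2\,\|\mU^\top\mV\|_2^m$. Summing over $v,v'$ \emph{before} taking absolute values, and using $\sum_{v,v'}\rho_{vv'}^m \ge 0$ together with $n^{-1}\|g_m\|_{\mathcal{N}}^2\sum_{v,v'}\rho_{vv'}^m = \|p^{[m]}\|_{\mathcal{N}}^2$, gives $\bigl|\E[p^{[m]}(\mX\mU)p^{[m]}(\mX\mV)]\bigr| \le \|p^{[m]}\|_{\mathcal{N}}^2\,\|\mU^\top\mV\|_2^m$, in fact sharper than the stated lemma. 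The paper takes a different route that sidesteps $R_m$ entirely: it invokes \Cref{cor:inner_prod_of_grad_equals_expectation} to rewrite each degree-$m$ correlation as $\tfrac{1}{m!}\langle\grad^m p^{[m]}(\mX\mU),\grad^m p^{[m]}(\mX\mV)\rangle$, expands by the chain rule into a contraction against $(\mU^\top\mV)^{\otimes m}$, and bounds this via a multi-index Cauchy--Schwarz directly in terms of $\|\grad^m p^{[m]}\|^2 = m!\,\|p^{[m]}\|_{\mathcal{N}}^2$; the factor $(2\sqrt n)^m$ is an artifact of that Cauchy--Schwarz step.
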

\begin{proof}
    Let $f,g: \R^{n \times d}\to \R$ be defined as $f(\mX) = p(\mX\mU)$ and $g(\mX) := p(\mX\mV)$. Let $f^{[m]}, g^{[m]}$ and $p^{[m]}$ be the degree $m$ part of the corresponding function. 

  From \Cref{cor:inner_prod_of_grad_equals_expectation}, 
  \begin{align}
    \sum_{m = 0}^{\infty}\frac{1}{m!} \langle \grad^{m} f^{[m]}(\mX), \grad^{m} g^{[m]}(\mX) \rangle &= \sum_{m = 0}^{\infty} \E_{\mX \sim \mathcal{N}^{n \times d}}[f^{[m]}(\mX) g^{[m]}(\mX)]\\
                                                    &= \E_{\mX \sim \mathcal{N}^{n\times d}} [f(\mX) g(\mX)]\\
                                                    &= \E_{\mX \sim \mathcal{N}^{n \times 2}}[p(\mX\mU) p(\mX\mV)]\\ 
                                                    &= \sum_{m = 0}^{\infty} \E_{\mX \sim \mathcal{N}^{n \times 2}}[p^{[m]}(\mX\mU) p^{[m]}(\mX\mV)]\\
                                                    &= \sum_{m = 0}^{\infty}\frac{1}{m!} \langle \grad_{\mX}^{m} p^{[m]}(\mX\mU), \grad_{\mX}^{m} p^{[m]}(\mX\mV) \rangle.
  \end{align}
  Fix $m \in \mathbb{R}$, for each indices $v_1,\ldots,v_m \in [n], s_1,\ldots,s_m \in [d]$, in Einstein notation, 
  \begin{align}
    &(\partial_{v_1,s_1} \ldots \partial_{v_m, s_m} f^{[m]})(\mX) (\partial_{v_1,s_1} \ldots \partial_{v_m, s_m} g^{[m]})(\mX) \\
    = &  \partial_{v_1,j_1} \ldots\partial_{v_m,j_m} p^{[m]}(\mX\mU) \partial_{v_1,j'_1} \ldots\partial_{v_m,j'_m} p^{[m]}(\mX\mV) \mU_{s_1j_1}  \ldots \mU_{s_mj_m} \mV_{s_1j'_1}\ldots \mV_{s_m j'_m}\\  
  \end{align}

  Taking the sum over all $v_1,\ldots,v_m \in [n], s_1,\ldots,s_m \in [d]$, while still in Einstein notation, to get:
  \begin{align}
    &|\langle \grad_{\mX}^{m} p^{[m]}(\mX\mU), \grad_{\mX}^{m} p^{[m]}(\mX\mV) \rangle |\\
    =&|\partial_{v_1,j_1} \ldots\partial_{v_m,j_m} p^{[m]}(\mX\mU) \partial_{v_1,j'_1} \ldots\partial_{v_m,j'_m} p^{[m]}(\mX\mV) (\mU^{\top}\mV)_{j_1j'_1}  \ldots (\mU^{\top}\mV)_{j_mj'_m}|\\
    \leq & 2^{m}\|\grad_{\mX}^{m} p^{[m]}(\mX\mU)\|^{2}_2 \cdot \sqrt{n^{m}} \|\mU^\top \mV\|_2^{m}\\
    = &m! \cdot \E_{\mX \sim  \mathcal{N}^{n \times d}}[(p^{[m]}(\mX))^{2}] \cdot \left( 2\sqrt{n} \|\mU^\top \mV\|_2 \right)^{m},
  \end{align}
  where we used $3$-variable Cauchy-Schwarz inequality  $|a_{ijk} b_{ijk} c_{ijk}| \leq \sqrt{a_{i'j'k'}^{2} b_{i''j''k''}^{2}c_{i'''j'''k'''}^{2}}$ to obtain the inequality and \Cref{cor:inner_prod_of_grad_equals_expectation} to obtain the last line.
\end{proof}

Notice that in the previous derivation, inequality only occurs at the Cauchy-Schwarz inequality step. We can do a more careful analysis to get:
\begin{corollary}\label{cor:sharper_decomp}
    In the same setting as \Cref{lemma:exp_lb_csq_graph_chain_rule}, we have:
    \begin{align}
        & \langle p(\cdot \mU) p(\cdot \mV) \rangle_{L^2(\mathcal{N})} \\
        =  &\sum_{m = 0}^\infty \frac{1}{m!}\sum_{v_1,\ldots,v_m = 1}^n \sum_{j_1,j'_1,\ldots,j_m, j'_m = 1}^2 \partial_{v_1,j_1} \ldots\partial_{v_m,j_m} p^{[m]}(\mX\mU) \partial_{v_1,j'_1} \ldots\partial_{v_m,j'_m} p^{[m]}(\mX\mV) (\mU^{\top}\mV)_{j_1j'_1}  \ldots (\mU^{\top}\mV)_{j_mj'_m}.
    \end{align}
\end{corollary}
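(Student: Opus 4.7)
The plan is to trace through the derivation already given in the proof of \Cref{lemma:exp_lb_csq_graph_chain_rule} but stop one step earlier, right before Cauchy--Schwarz is invoked. Every manipulation in that proof up to (but not including) the Cauchy--Schwarz bound is an equality, so the corollary merely records the expression at that intermediate point. Concretely, I would present the proof as a chain of exact identities, one for the degree decomposition, one for the gradient/expectation correspondence, and one for the chain-rule contraction.

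First, I would perform the Hermite degree decomposition. Since $p \in \mathcal{H}^{(2)}$, write $p = \sum_m p^{[m]}$ with $p^{[m]}$ a linear combination of $\{H_M^{\mA}\}_{|M|=m}$. Composition with right-multiplication by $\mU$ or $\mV$ turns each $H_M^{\mA}(\mX\mU)$ into a degree-$m$ polynomial in $\mX$; expanding $H_M((\mA\mX\mU)_v)$ in the $d$-variate Hermite basis and resumming shows that $p^{[m]}(\cdot\mU),\,p^{[m]}(\cdot\mV) \in \mathrm{span}\{H_J^{\mA}:|J|=m\}$ inside $\mathcal{H}^{(d)}$. By the invariant-Hermite orthogonality result \Cref{lemma:gnn_invariant_hermite_orthogonal}, all cross-degree expectations vanish, yielding
\[
\E_{\mX \sim \mathcal{N}^{n \times d}}[p(\mX\mU)\,p(\mX\mV)] = \sum_{m=0}^{\infty} \E\bigl[p^{[m]}(\mX\mU)\,p^{[m]}(\mX\mV)\bigr].
\]

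Next, each summand is converted to a gradient inner product via \Cref{cor:inner_prod_of_grad_equals_expectation}, which is an equality, giving $\E[p^{[m]}(\mX\mU)\,p^{[m]}(\mX\mV)] = \tfrac{1}{m!}\langle \grad^m p^{[m]}(\mX\mU),\grad^m p^{[m]}(\mX\mV)\rangle$ (the right-hand side is constant in $\mX$ since the $m$-th derivative of a degree-$m$ homogeneous polynomial is a constant tensor). The final step is the chain-rule expansion: iterating $\partial_{v,s}[F(\mX\mU)] = \sum_{j} (\partial_{v,j}F)(\mX\mU)\,\mU_{sj}$ a total of $m$ times on both factors, writing out the tensor inner product over all indices $v_1,\dots,v_m$ and $s_1,\dots,s_m$, and contracting the $s_i$-sums by $\sum_{s_i\in[d]} \mU_{s_i j_i}\mV_{s_i j'_i} = (\mU^\top\mV)_{j_i j'_i}$ reproduces exactly the double sum in the corollary. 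The slightly delicate bookkeeping here was already carried out in the original lemma's proof, so the only genuinely new observation is that Cauchy--Schwarz is the unique inequality in that derivation and can be dropped to recover the exact identity claimed.
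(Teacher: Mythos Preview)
Your proposal is correct and matches the paper's own argument essentially verbatim: the paper simply observes that in the proof of \Cref{lemma:exp_lb_csq_graph_chain_rule} the only inequality is the Cauchy--Schwarz step, so retaining the chain of equalities up to that point yields the stated identity. Your write-up just makes those equalities more explicit.
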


As the last ingredient, we need the following lemma that connects the family of hard functions to lower bounds on CSQ algorithms:
\begin{lemma}[Lemma 15 of \cite{diakonikolas2020algorithms}]\label{lemma:sda}
Let $\mathcal{X}$ be the input space and $\mathcal{D}$ some distribution over $\mathcal{X}$. For a finite set of functions $C = \{f: X \to \R\}$, define:
\begin{equation}
    \rho(C) := 1/|C|^2 \sum_{f,g \in C} \langle f,g\rangle_{L^2(\mathcal{D})}.
\end{equation}
The correlational statistical query dimension is defined as:
\begin{equation}
    \rm{SDA}(C, \mathcal{D}, \tau) := \max \{m \mid \rho(C') \leq \tau, \forall C' \subset C, |C'|/|C| \geq 1/m\}.
\end{equation}

Suppose that $\rm{SDA}(C, \mathcal{D}, \tau) \geq m$ for some $C, \mathcal{D}, \tau$ and $m$, then there is a small enough $\epsilon$ such that any CSQ algorithm that queries from an oracle of $f \in C$ and learn a hypothesis $h \in C$ with $\langle f - h \rangle_{L^2(\mathcal{D})} \leq \epsilon$ either requires $\Omega(m)$ queries or at least one query with precision more than $\sqrt{\tau}$. 
\end{lemma}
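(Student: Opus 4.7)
The plan is to establish the claim via an adversarial oracle argument that converts the SDA bound into an explicit lower bound on the number of CSQ queries required. By Yao's principle, it suffices to show that when the target concept $f^*$ is drawn uniformly at random from $C$, any deterministic CSQ algorithm making $o(m)$ queries, each of precision at most $\sqrt{\tau}$, fails to output a hypothesis $h\in C$ with $\|f^*-h\|_{L^2(\mathcal{D})}\leq\epsilon$ with constant probability.

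The core combinatorial claim is: for every CSQ query function $\phi:\mathcal{X}\to[-1,1]$, the ``bad'' set
\[
B_\phi := \{f\in C : |\langle \phi, f\rangle_{L^2(\mathcal{D})}| > \sqrt{\tau}\}
\]
satisfies $|B_\phi|/|C| < 1/m$. I would prove this by contradiction: suppose $|B_\phi|/|C|\geq 1/m$, and choose signs $s_f\in\{\pm 1\}$ with $s_f\langle \phi,f\rangle > \sqrt{\tau}$ for each $f\in B_\phi$. Writing $\bar g := |B_\phi|^{-1}\sum_{f\in B_\phi} s_f f$, I obtain $\langle\phi,\bar g\rangle>\sqrt{\tau}$. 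Combining with $\|\phi\|_{L^2(\mathcal{D})}\leq 1$ via Cauchy--Schwarz gives $\|\bar g\|_{L^2}^2>\tau$, and expanding this squared norm yields a \emph{signed} average correlation of $B_\phi$ exceeding $\tau$. Identifying the signed set with a subset of the symmetrized class $C\cup(-C)$ of density at least $1/(2m)$, this contradicts the SDA hypothesis (after passing to the symmetrized class, a standard adjustment absorbed into the $\Omega(\cdot)$ in the final bound).

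Next, I construct an adversarial oracle that simulates the CSQ interaction: for every query $\phi$, the oracle returns the value $0$. By the claim above, this answer lies within tolerance $\sqrt{\tau}$ of $\langle\phi,f\rangle_{L^2(\mathcal{D})}$ for every $f\in C\setminus B_\phi$. After $q$ adaptive queries $\phi_1,\dots,\phi_q$ (each with precision at most $\sqrt{\tau}$), the set of concepts consistent with the observed transcript contains $C\setminus (B_{\phi_1}\cup\cdots\cup B_{\phi_q})$, whose size is at least $|C|(1-q/m)$ by the union bound. Choosing $\epsilon < \tfrac{1}{2}\min_{f\neq f'\in C}\|f-f'\|_{L^2(\mathcal{D})}$ guarantees that any output hypothesis $h\in C$ can be $\epsilon$-close to at most one element of $C$. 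Taking $q\leq m/2$, at least $|C|/2$ concepts remain consistent, so when $f^*$ is uniform over this surviving set the probability that the deterministic output $h$ equals $f^*$ is at most $2/|C|$, and the algorithm fails on all but a negligible fraction of targets.

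The main obstacle is the signed correlation step, since the $\rho(C')$ in the stated SDA definition does not take absolute values: pushing the $\bar g$ argument through cleanly requires implicitly working with the symmetrized class $C\cup(-C)$ and absorbing a factor of $2$ into the asymptotic density bound. A secondary subtlety is justifying that adaptivity of the learner provides no advantage, which follows because the adversary's answer $0$ depends only on the current query and not on the prior transcript, so any adaptive tree collapses to a non-adaptive one against this oracle. Finally, the ``sufficiently small $\epsilon$'' in the statement is necessarily governed by the minimum pairwise separation of $C$; without this hypothesis one cannot hope to distinguish nearly-duplicate concepts, which is precisely why the lemma is existential in $\epsilon$.
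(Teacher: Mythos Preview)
The paper does not prove this lemma at all; it is quoted verbatim as Lemma~15 of \cite{diakonikolas2020algorithms} and used as a black box, so there is no ``paper's proof'' to compare against. Your argument is essentially the standard SDA lower-bound proof from that literature (adversary answers $0$, each query eliminates at most a $1/m$-fraction of concepts, union bound, then a separation-based choice of $\epsilon$), and it is correct.

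One small remark: your handling of the sign issue via the symmetrized class $C\cup(-C)$ is fine but slightly roundabout. The usual route is simply to split $B_\phi$ into $B_\phi^+=\{f:\langle\phi,f\rangle>\sqrt{\tau}\}$ and $B_\phi^-=\{f:\langle\phi,f\rangle<-\sqrt{\tau}\}$; one of them has density at least $|B_\phi|/(2|C|)$, and on that subset the unsigned average $\bar g$ already satisfies $|\langle\phi,\bar g\rangle|>\sqrt{\tau}$, so $\rho$ of that subset exceeds $\tau$ directly from the stated definition (since $\rho(C')=\|\,|C'|^{-1}\sum_{f\in C'}f\,\|^2$). This avoids introducing $-C$ and keeps the argument inside the hypothesis as written, at the same factor-of-$2$ cost you already absorb into $\Omega(m)$.
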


We can now proceed to the proof of the main theorem:
\begin{theorem}\label{thm:gnn_lower_bound_proof}
  Any correlational SQ algorithm that, for every concept $g \in C^{\mathcal{B}}_{\gG}$ learns a hypothesis $h$ such that  $\|g - h\|_{L^2(\mathcal{N})} \leq \eps$ for some small $\eps > 0$ requires either $2^{d^{\Omega(1)}}$ queries or queries of tolerance at most $d^{-k\Omega(1)} + 2^{-d^{\Omega(1)}}$.
\end{theorem}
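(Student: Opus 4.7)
The plan is to instantiate the statistical dimension framework of \Cref{lemma:sda} for the concept class $C^{\mathcal{B}}_{\gG}$, using the nearly-orthogonal family of projection matrices from \Cref{lemma:exp_lb_csq_graph_small_matrix_operator_norm} and the degree-$k$ vanishing moments of $f_{\gG}$ from \Cref{lemma:gnn_low_degree_moment}. Concretely, fix some $c \in (0, 1/2)$ and apply \Cref{lemma:exp_lb_csq_graph_small_matrix_operator_norm} to extract a set $\mathcal{B} \subset \R^{d \times 2}$ of size $|\mathcal{B}| \geq 2^{\Omega(d^c)}$ such that $\|\mB^\top \mB'\|_2 \leq O(d^{c - 1/2})$ for any two distinct $\mB, \mB' \in \mathcal{B}$, with $\mB^\top \mB = \mathbf{1}_2$ for each member. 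This is the same set that indexes $C^{\mathcal{B}}_{\gG}$.

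Next, I would verify that every $g^{\mB}_{\gG}$ has unit $L^2(\mathcal{N})$ norm (which follows because $\mB^\top \mB = \mathbf{1}_2$ makes the marginal distribution of $\mX\mB$ equal to $\mathcal{N}^{n \times 2}$, so the normalization by $\|f_{\gG}\|_{\mathcal{N}}$ is exactly right) and then bound pairwise inner products. For $\mB \neq \mB' \in \mathcal{B}$, \Cref{lemma:exp_lb_csq_graph_chain_rule} applied to the function $f_{\gG} \in \mathcal{H}^{(2)}$ gives
\begin{equation}
\bigl|\langle f_{\gG}(\cdot\mB),\, f_{\gG}(\cdot\mB')\rangle_{\mathcal{N}}\bigr|
  \leq \sum_{m \geq 0} \bigl(2\sqrt{n}\,\|\mB^\top\mB'\|_2\bigr)^m \|f_{\gG}^{[m]}\|^2_{\mathcal{N}}.
\end{equation}
By \Cref{lemma:gnn_low_degree_moment} all terms with $m < k$ vanish, and since $n$ is independent of $d$ and $k$, factoring out the lowest-degree factor yields $|\langle g^{\mB}_{\gG}, g^{\mB'}_{\gG}\rangle_{\mathcal{N}}| \leq O\bigl((2\sqrt{n})^k \|\mB^\top\mB'\|_2^k\bigr) \leq d^{-\Omega(k)}$ after dividing by $\|f_{\gG}\|^2_{\mathcal{N}}$.

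With this pairwise near-orthogonality in hand, the final step is a standard statistical dimension computation. For any $C' \subseteq C^{\mathcal{B}}_{\gG}$ with $|C'|/|C^{\mathcal{B}}_{\gG}| \geq 1/m$,
\begin{equation}
\rho(C') = \tfrac{1}{|C'|^2}\!\sum_{g,g' \in C'}\! \langle g, g'\rangle_{\mathcal{N}}
  \leq \tfrac{1}{|C'|} + \max_{\mB \neq \mB'} \bigl|\langle g^{\mB}_{\gG}, g^{\mB'}_{\gG}\rangle_{\mathcal{N}}\bigr|
  \leq \tfrac{m}{|\mathcal{B}|} + d^{-\Omega(k)}.
\end{equation}
Setting $m = 2^{\Omega(d^c)}$ (so that $m/|\mathcal{B}| \leq 2^{-d^{\Omega(1)}}$) and $\tau = d^{-\Omega(k)} + 2^{-d^{\Omega(1)}}$ gives $\mathrm{SDA}(C^{\mathcal{B}}_{\gG}, \mathcal{N}, \tau) \geq m$. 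Invoking \Cref{lemma:sda} then yields the claim: any CSQ learner either issues $2^{d^{\Omega(1)}}$ queries or uses some query of tolerance at most $\sqrt{\tau} = d^{-\Omega(k)} + 2^{-d^{\Omega(1)}}$.

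The main obstacle I foresee is a bookkeeping one rather than a conceptual one: keeping the dependencies on $n$, $k$, and $d$ straight in the bound from \Cref{lemma:exp_lb_csq_graph_chain_rule}. The chain-rule bound carries a $(2\sqrt{n})^m$ factor that, if $n$ were allowed to grow with $d$, could overwhelm the $\|\mB^\top\mB'\|_2^k$ decay; one must use the hypothesis that $n$ is independent of $d$ and $k$ (or invoke the sharper decomposition in \Cref{cor:sharper_decomp} to absorb the $n$ factor into the already-vanishing low-degree part) to ensure the pairwise correlation is genuinely $d^{-\Omega(k)}$. Everything else is a direct translation of the argument of \citet{diakonikolas2020algorithms} into the invariant Hermite basis developed in \Cref{app:invariant_hermite}.
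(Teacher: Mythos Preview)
Your proposal is correct and follows essentially the same route as the paper's proof: instantiate \Cref{lemma:exp_lb_csq_graph_small_matrix_operator_norm} to get the large family $\mathcal{B}$, use \Cref{lemma:exp_lb_csq_graph_chain_rule} together with the vanishing low-degree moments from \Cref{lemma:gnn_low_degree_moment} to bound the pairwise correlations by $O(d^{k(c-1/2)})$, and plug into the SDA framework of \Cref{lemma:sda}. You also correctly flag the one bookkeeping issue---the $(2\sqrt{n})^m$ prefactor from the chain-rule lemma---and resolve it the same way the paper implicitly does, by using that $n$ is a constant independent of $d$ so that $2\sqrt{n}\,\|\mB^\top\mB'\|_2 < 1$ for $d$ large enough.
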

\begin{proof}
  This proof mirrors that in \cite{diakonikolas2020algorithms}. Fix $c \in (0, 1/2)$. To upper bound pairwise correlation between different functions in $C^{\mathcal{B}}_{\gG}$, consider for any $\mB_i \neq \mB_j \in \mathcal{B}$:
  \begin{align}
    \left|\langle f_{\gG}(\cdot \mB_i) , f_{\gG}(\cdot \mB_j) \rangle_{L^2(\mathcal{N})}\right| &\leq \sum_{m = 0}^{\infty} (2\sqrt{n}\|\mB_i^{\top} \mB_j\|_2)^{m} \cdot \|f_{\gG}^{[m]}\|^2_{L^2(\mathcal{N})}\\
                                                 &=  \sum_{m > k}^{\infty} (2\sqrt{n}\|\mB_i^{\top} \mB_j\|_2)^{m} \cdot  \|f_{\gG}^{[m]}\|^2_{L^2(\mathcal{N})}\\
                                                 &\leq (2\sqrt{n}\|\mB_i^{\top} \mB_j\|_2)^k \sum_{m > k}^{\infty}  \|f_{\gG}^{[m]}\|^2_{L^2(\mathcal{N})}\\
                                                 &\leq O(d^{k(c- 1 /2)}) \|f_{\gG}\|^2_{L^2(\mathcal{N})}.\label{eq:gnn_correlation_upper_bound}
  \end{align}

  In the above derivation, the first line uses \Cref{lemma:exp_lb_csq_graph_chain_rule}; the second line uses  \Cref{lemma:gnn_low_degree_moment}; and the last two lines use \Cref{lemma:exp_lb_csq_graph_small_matrix_operator_norm} and the fact that we can choose an appropriate parameter to make $\|\mB_i^\top \mB_j\|_2 < 1$. Dividing both sides by $\|f_{\gG}\|_{L^2(\mathcal{N})}$ gives 
  \begin{equation}
      \left|\langle g_{\gG}^{\mB_i}, g_{\gG}^{\mB_j} \rangle_{L^2(\mathcal{N})}\right| \leq  O(d^{k(c- 1 /2)}).
  \end{equation}

  Now, since $\mB_i^\top \mB_i = \mathbf{1}_2$, we have $\|f_{\gG}(\cdot \mB_i)\|_{L^2(\mathcal{N})} = \|f_{\gG}\|_{L^2(\mathcal{N})}$ and therefore $\|g_{\gG}^{\mB_i}\|_{L^2(\mathcal{N})} =1$.

  We are in a position to apply \Cref{lemma:sda}. Let $\tau := \max_{\mB_i \neq \mB_j} \langle g_{\gG}^{\mB_i}, g_{\gG}^{\mB_j} \rangle_{L^2(\mathcal{N})}$. Computing average pairwise correlation of $C_{\gG}^{\mathcal{B}}$ yields:
  \begin{equation}
      \rho(C_{\gG}^{\mathcal{B}}) = 1/|\mathcal{B}| + (|\mathcal{B}| - 1)\tau /(2|\mathcal{B}|) \leq 1/|\mathcal{B}| +  \tau.
  \end{equation}
  Finally, setting $\tau' := 1/|\mathcal{B}| +  \tau$ gives $\rm{SDA}(C^{\mathcal{B}}_{\gG}, \mathcal{N}, \tau') = 2^{\Omega(d^c)}$ and an application of \Cref{lemma:sda} completes the proof for $F^{d,k}_{\gG}$.

  To get the proof for $F^{d,k}_{n}$, by law of total expectation:
  \begin{align}
      &\left|\E_{(\mX, \gG) \sim \mathcal{N} \times \mathcal{E}} [f_n(\mX \mB_i, \gG) f_n(\mX \mB_j, \gG)]\right|\\
      = &\left|\E_{\mathcal{E}} \E_{\mathcal{N} \times \mathcal{E}}[f_n(\mX \mB_i, \gG) f_n(\mX \mB_j, \gG) | \gG]\right|\\
      \leq &\sum_{\gG \in \mathbb{G}_n} \operatorname{Pr}_{\mathcal{E}}(\gG) \left|\E_{\mathcal{N}}[f_{\gG}(\mX \mB_i) f_{\gG}(\mX \mB_j)]\right|\\
      \leq &\sum_{\gG \in \mathbb{G}_n} \operatorname{Pr}_{\mathcal{E}}(\gG) \cdot \left(\max_{\mB_i \neq \mB_j \in \mathcal{B}} (2\sqrt{n}\|\mB_i^{\top} \mB_j\|_2)^k\right) \cdot \|f_{\gG}\|^2_{L^2(\mathcal{N})}\\
      \leq &O(d^{k(c- 1 /2)}) \|f_{n}\|^2_{L^2(\mathcal{N} \times \mathcal{E})},
  \end{align}
  where in the third line, we use the fact that $f_{\gG}(\mZ) = f_n(\mZ, \gG)$ and triangle inequality; in the fourth line, we use \Cref{eq:gnn_correlation_upper_bound} and in the last line, we use another instance of total expectation to get $\|f_n\|_{L^2(\mathcal{N} \times \mathcal{E})} = \E_{\gG \sim \mathcal{E}} \|f_{\gG}\|_{L^2(\mathcal{N})}$. It is worth pointing out that the only instance of $\gG$ for which we cannot apply \Cref{eq:gnn_correlation_upper_bound} in the fourth line (assuming we use the normalized Laplacian as $\mA(\gG)$) is when $\gG$ is the empty graph. However, in that case, $\|f_{\gG}\|_{L^2(\mathcal{N})} = 0$ since $f_{\gG} \equiv 0$ and the inequality in the fourth line still holds. 
  
  Lastly, proceed identically to the proof for $F^{d,k}_{\gG}$ to use \Cref{lemma:sda} and get the final bound. 
\end{proof}

\newcommand{\ip}[2]{\langle #1, #2 \rangle}

\section{Frame-Averaging hardness proofs}\label{app:csq_fa}

\paragraph{Frame averaging.} Let $G$ be a subgroup of the permutation group $S_n$ that acts on $\R^{n \times d}$ by permuting the rows and $e$ its identity element. Let $\rho: G \to \textup{GL}(n)$ with $\rho(g)$ the representation of $g$ as a matrix in $\R^{n \times n}$. For any group element $g \in G$ and $\mX \in \R^{n \times d}$, we will write $g\mX$ as a shorthand for $\rho(g) \mX$ where the product is matrix multiplication. 
A frame is a function $\mathcal{F}: \R^{n \times d} \to 2^G \backslash \{\emptyset\}$ satisfying $\mathcal{F}(g \mX) = g \mathcal{F}(\mX)$ set-wise. For given frames $\mathcal{F}$, we will show a hardness result for the following class of functions:
\begin{align}
    \mathcal{H}_{\mathcal{F}} := \Bigg\{f: \R^{n \times d} \to \R \text{ with } f(\mX) = \frac{1}{\sqrt{|\mathcal{F}(\mX)|}} \sum_{g \in \mathcal{F}(\mX)} \va^\top \sigma&(\mW^\top (g ^{-1} \mX)) \mathbf{1}_d\mid \nonumber\\
    &\mW \in \R^{n \times k},\va \in \R^{k}\Bigg\},
\end{align} for given nonlinearity $\sigma$ that is not a polynomial of low degree.

\subsection{Exponential lower bound for constant frame and Reynolds operator}\label{app:subsec_exp_lower_bound_frames}
\begin{lemma}[Constant frame must be whole group]\label{app:lemma_constant_frame}
    If $\mathcal{F}(\mX)$ is a constant function, i.e. $\mathcal{F}(\mX)$ is independent of $\mX$, then $\mathcal{F}(\mX)=G \: \forall \mX$.
\end{lemma}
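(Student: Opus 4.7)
The plan is to use the defining equivariance property of a frame, namely $\mathcal{F}(g\mX) = g\mathcal{F}(\mX)$ set-wise, together with the assumption that $\mathcal{F}$ does not depend on its argument. Suppose $\mathcal{F}(\mX) = H$ for all $\mX$, where $H \subseteq G$ is some nonempty subset (nonempty since the codomain is $2^G \setminus \{\emptyset\}$). Applying the frame equivariance to an arbitrary $g \in G$ and arbitrary $\mX$, I get $H = \mathcal{F}(g\mX) = g\mathcal{F}(\mX) = gH$, so $H$ is invariant under left multiplication by every element of $G$.

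From here, the argument is a one-line group-theoretic observation: pick any fixed $h_0 \in H$ (possible since $H \neq \emptyset$); then for every $g \in G$, the element $g h_0 = g h_0 \cdot e$ lies in $gH = H$. Since left multiplication by $h_0^{-1}$ is a bijection of $G$, the set $\{g h_0 : g \in G\}$ equals $G$, so $G \subseteq H$, and hence $H = G$ as desired.

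I do not anticipate any real obstacle here — this is essentially just the observation that a nonempty subset closed under left multiplication by the full group must be the whole group. The only subtlety to mention in the write-up is the nonemptyness of $\mathcal{F}(\mX)$, which is guaranteed by the codomain $2^G \setminus \{\emptyset\}$ appearing in the definition of a frame. No results from earlier in the paper are needed beyond the definition of a frame itself.
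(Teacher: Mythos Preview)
Your proposal is correct and follows essentially the same approach as the paper: both use the frame equivariance to deduce $gH = H$ for all $g \in G$, then invoke nonemptiness of $H$ to conclude $H = G$. The only cosmetic difference is that the paper first shows $e \in H$ (via $f^{-1}H = H$ for some $f \in H$) and then $g = g\cdot e \in gH = H$, whereas you go directly via $\{gh_0 : g \in G\} = G \subseteq H$; these are the same one-line group-theoretic observation.
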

\begin{proof}
When $\mathcal{F}(\mX)$ is a constant function, say $F \subseteq G$ for all $\mX$, then the frame property mandates that $F \neq \emptyset$ and there is some $f \in F$. The frame property also tells us that since $f^{-1} \in G$ (and $G$ is a group), $f^{-1} F = F$ and thus $e \in F$. Pick any element $g \in G$ and observe that since $e\in F$ and $gF = F$, $g$ must also be in $F$. As this is true for all elements $g \in G$, $F$ must be the whole group $G$ itself.
\end{proof}

In the following, we will therefore assume that $G$ is a polynomial-sized (in $n$) subgroup of $S_n$.
We will slightly modify Lemma 17 of \citep{diakonikolas2020algorithms} to show:

\begin{lemma}\label{lem:extended_projection_set}
    For any $0 < c < 1/2$, and small $\eps \in (0,1)$, for any polynomial-sized subset $F \subset S_n$, there exists a set $\mathcal{B}'$ of at least $2^{\Omega(n^c)}/|F|^2$ matrices in $\R^{n \times 2}$ such that for each pair $\mB, \mB' \in \mathcal{B}'$ and for each $g, g' \in F$, it holds that:
    \begin{enumerate}
        \item $(g \mB)^\top (g \mB) = \mI$; 
        \item $\|(g\mB)^\top (g'\mB')\|_2 \leq O(n^{c - 1/2})$ when $\mB \neq \mB'$; and
        \item when $g \neq g'$, $(g\mB)^\top (g'\mB))_{ij} =: \begin{bmatrix}
            a_1 & b_1\\
            b_2 & a_2
        \end{bmatrix}$ where $b_i \leq O(n^{c - 1/2})$ and,
        \begin{align}
            a_i - \frac{F_{g,g'}}{n\sqrt{1-\eps}} &\leq O(n^{c - 1/2}),\\
            a_i - \sqrt{1-\eps}\frac{F_{g,g'}}{n} &\geq -O(n^{c - 1/2}),
        \end{align} for $i \in [2]$. Here $F_{g,g'}$ is the number of fixed points of $g^{-1}g'$ (number of cycles of length one). 
    \end{enumerate}
\end{lemma}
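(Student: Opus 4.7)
The plan is to extend the probabilistic construction behind Lemma 17 of \citet{diakonikolas2020algorithms} with an extra union bound over elements of $F \cdot F^{-1}$. I would begin by sampling $N = 2^{\Omega(n^c)}$ matrices $B \in \R^{n \times 2}$ whose columns are drawn independently and uniformly from the sphere $S^{n-1}$ and then Gram-Schmidt orthonormalized; this last step introduces the $\sqrt{1 \pm \eps}$ distortion that will surface in property 3. Property 1 is immediate, since permutations are orthogonal transformations and so $(gB)^\top(gB) = B^\top B = I_2$. For property 2, rewrite $(gB)^\top(g'B') = B^\top h B'$ with $h := g^{-1}g' \in F \cdot F^{-1}$; when $B \neq B'$, each of the four entries of this $2 \times 2$ matrix is an inner product between two independent random unit vectors (one of them permuted), and standard sphere concentration (as in the original proof) gives entrywise magnitude $O(n^{c-1/2})$ with probability $1 - 2^{-\Omega(n^c)}$, which controls the operator norm.

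For property 3, I would fix $B$ and a non-identity $h = g^{-1}g'$ and decompose
\begin{equation}
(B^\top h B)_{ij} \;=\; \sum_{k \in \mathrm{Fix}(h)} B_{ki} B_{kj} \;+\; \sum_{k \notin \mathrm{Fix}(h)} B_{ki} B_{h(k),j}.
\end{equation}
On the diagonal ($i = j$) the first sum is $\|B_{:,i}|_{\mathrm{Fix}(h)}\|^2$, the squared norm of the $F_{g,g'}$-dimensional restriction of a uniform sphere vector, which by a Laurent--Massart-style chi-squared tail lies in $[\sqrt{1-\eps}\, F_{g,g'}/n,\; F_{g,g'}/(\sqrt{1-\eps}\, n)]$ with high probability; the second sum is a mean-zero, bounded sum over $n - F_{g,g'}$ terms that concentrates to $O(n^{c-1/2})$. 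The off-diagonal entries ($i \neq j$) are mean zero in both pieces and again concentrate to $O(n^{c-1/2})$ by near-orthogonality of the columns of $B$. Finally, I would union-bound the three families of bad events over all $B \in \mathcal{B}$, all pairs $(B, B') \in \mathcal{B}^2$, and all $h \in F \cdot F^{-1}$: there are at most $|\mathcal{B}|^2 |F|^2$ events, each of probability $2^{-\Omega(n^c)}$, and the bound closes comfortably because $|F|$ is polynomial in $n$. Keeping the surviving matrices and, if needed, thinning further to avoid accidental collisions of the form $gB \approx g'B'$ (which would violate property 2 even after pruning), gives $|\mathcal{B}'| \geq 2^{\Omega(n^c)}/|F|^2$.

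The main technical obstacle is the regime-dependent sharpness of property 3. When $F_{g,g'}$ is as small as $O(\sqrt n)$, the multiplicative sphere-concentration is weak and the claimed lower bound $\sqrt{1-\eps}\,F_{g,g'}/n - O(n^{c-1/2})$ can become vacuous (which is harmless but must be acknowledged); when $F_{g,g'}$ is close to $n$, a tight chi-squared tail is needed to preserve the multiplicative $\sqrt{1\pm\eps}$ structure rather than an additive error. Choosing the concentration inequality carefully in each regime, and verifying that the union bound over the $|F|^2$ permutations in $F \cdot F^{-1}$ does not cost more than the claimed $|F|^2$ factor in $|\mathcal{B}'|$, is the principal bookkeeping step, but it closely parallels the per-pair analysis already developed in \citet{diakonikolas2020algorithms}.
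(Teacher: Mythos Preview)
Your high-level strategy matches the paper's: draw many random unit-sphere columns, union-bound the relevant inner products over all pairs of matrices and all pairs $(g,g')\in F^2$, and keep what survives. Two points of execution differ and are worth noting. First, the paper does \emph{not} Gram--Schmidt: it draws $T=2^{\Omega(n^c)}/|F|^2$ i.i.d.\ unit vectors and simply pairs them into $2$-column matrices, so your explicit orthonormalization is actually more careful about property~1 than the paper is. Second, and more substantively, for property~3 the paper does \emph{not} split $\langle g\vx,g'\vx\rangle$ into a fixed-point chi-squared piece plus a cross piece. Instead it writes $\vx=\vz/\|\vz\|$ for $\vz\sim\mathcal N(0,I_n)$, observes that $f(\vz)=\langle g\vz,g'\vz\rangle/\|\vz\|$ is $3$-Lipschitz, and applies Gaussian Lipschitz concentration to get an additive $O(n^{c})$ deviation with failure probability $\exp(-\Omega(n^{2c}))$ \emph{uniformly in $F_{g,g'}$}; the $\sqrt{1\pm\eps}$ factors then enter only through the separate concentration of $\|\vz\|$ around $\sqrt n$. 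This is where your attribution is off: the $\sqrt{1\pm\eps}$ does not come from Gram--Schmidt (which only perturbs by $O(n^{c-1/2})$) but from the Gaussian-norm step. The practical payoff of the paper's route is that the ``regime-dependent sharpness'' obstacle you flag disappears---the Lipschitz bound does not see $F_{g,g'}$ at all---whereas your Laurent--Massart/Beta argument genuinely weakens when $F_{g,g'}$ is small and the dependent cross-sum $\sum_{k\notin\mathrm{Fix}(h)}B_{ki}B_{h(k),i}$ requires extra care (the summands are coordinates of the \emph{same} unit vector, so it is not a sum of independent terms). Your plan still goes through, but swapping in the Gaussian--Lipschitz device would remove the case analysis you anticipate.
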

The proof is delayed to \Cref{app:proof_extra_stuff_frame} for a more streamlined read.

\paragraph{Family of hard functions.} Recall the low-dimensional function from the proof of \Cref{thm:gnn_exp_lower_bound}:
\begin{equation}
    f^*_{\exp}: \R^{2 \times d} \to \R \text{ with } f(\mX) = (\va^*)^\top \sigma((\mW^*)^\top\mX) \mathbf{1}_d
\end{equation}
for some special parameter $\va^* \in \R^{2k}$ and $\mW^* \in \R^{2 \times 2k}$ in \Cref{eq:hard_w_a}. We now define the family of hard functions, indexed by a set of matrices $\mathcal{B} \subset \R^{n \times 2}$ obtained from \Cref{lem:extended_projection_set}:
\begin{align}
    C^{\mathcal{B}}_{\mathcal{F}} = \Big\{g_{\mB}: \R^{n \times d} \to \R \text{ with } g_{\mB}(\mX) = \frac{\sum_{g \in G} f^*_{\exp}(\mB^\top g^{-1}\mX)}{\sqrt{|G|\cdot\|f^*_{\exp}\|_{\mathcal{N}}}} \mid
    \mB \in \mathcal{B}' &\subset \R^{n \times 2}\Big\}.
\end{align}

\begin{theorem}

        For any feature dimension $d = \Theta(1)$ 
        and width parameter $k = \Theta(n)$. 
        Let $G$ be a subgroup of $S_n$ with $|G| = \Theta(\textup{poly}(n))$. Pick a set $\mathcal{B}'\footnote{The different notation highlights the fact that we are using a slightly different set of matrices from \Cref{lemma:exp_lb_csq_graph_small_matrix_operator_norm} and \citep{diakonikolas2020algorithms}.} \subset \R^{n \times 2}$ of size at least $2^{\Omega(d^{\Omega(1)})} / |G|^2$ according to \Cref{lem:extended_projection_set} and normalize each function in the hard function family $C^{\mathcal{B}'}_{\mathcal{F}}$ to have $\|\cdot\|_{\mathcal{N}}$ norm $1$. 
        Let $\epsilon' > 0$ be a sufficiently small error constant.
        Then for any $g \in C^{\mathcal{B}'}_{\mathcal{F}}$, any CSQ algorithm that queries from oracles of concept $f \in C^{\mathcal{B}'}_{\mathcal{F}}$ and outputs a hypothesis $h$  with $\|f - h\|_{\mathcal{N}} \leq \epsilon'$ requires either $2^{n^{\Omega(1)}}/|G|^2$ queries or at least one query with precision $|G|2^{-n^{\Omega(1)}} + \sqrt{|G|} n^{-\Omega(k)}$.
\end{theorem}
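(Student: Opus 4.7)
The plan is to mirror the argument behind \Cref{thm:gnn_exp_lower_bound}, with the refined set $\mathcal{B}'$ from \Cref{lem:extended_projection_set} playing the role of the original projection set of \citet{diakonikolas2020algorithms}. First, because $g \in G \leq S_n$ acts as a permutation matrix, $\mB^\top g^{-1} = (g\mB)^\top$, so the cross-correlation of two (un-normalized) hard functions expands as a sum over $G \times G$:
\begin{equation*}
\left\langle \sum_{g \in G} f^*_{\exp}(\mB^\top g^{-1}\cdot),\, \sum_{g' \in G} f^*_{\exp}((\mB')^\top (g')^{-1}\cdot)\right\rangle_{\mathcal{N}} = \sum_{g,g' \in G} \langle f^*_{\exp}((g\mB)^\top \cdot), f^*_{\exp}((g'\mB')^\top \cdot)\rangle_{\mathcal{N}}.
\end{equation*}
By item~1 of \Cref{lem:extended_projection_set}, every $g\mB$ has orthonormal columns; by \Cref{lemma:gnn_low_degree_moment}, $f^*_{\exp}$ has vanishing Hermite coefficients in all degrees below $k$, so each summand is ready for the chain-rule decomposition.

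Next, I apply \Cref{lemma:exp_lb_csq_graph_chain_rule} (specialized to two rows, since $f^*_{\exp}$ lives on $\R^{2 \times d}$) to each term, yielding
\begin{equation*}
\left|\langle f^*_{\exp}((g\mB)^\top\cdot), f^*_{\exp}((g'\mB')^\top\cdot)\rangle_{\mathcal{N}}\right| \leq \|(g\mB)^\top(g'\mB')\|_2^k \cdot \|f^*_{\exp}\|_{\mathcal{N}}^2.
\end{equation*}
When $\mB \neq \mB'$, item~2 of \Cref{lem:extended_projection_set} bounds this operator norm by $O(n^{c-1/2})$ uniformly in $g,g' \in G$, so each of the $|G|^2$ contributions is at most $n^{-\Omega(k)}\|f^*_{\exp}\|_{\mathcal{N}}^2$. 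After dividing by the unit-norm normalization of $g_\mB$ and $g_{\mB'}$ (each of order $\sqrt{|G|}\,\|f^*_{\exp}\|_{\mathcal{N}}$, using items~1 and~3 to absorb the diagonal $g = g'$ contributions and show the off-diagonal $g \neq g'$ contributions are lower-order), the pairwise correlation between two distinct normalized members of $C^{\mathcal{B}'}_{\mathcal{F}}$ is at most $|G| \cdot n^{-\Omega(k)}$.

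Finally, I feed this into \Cref{lemma:sda} with $C = C^{\mathcal{B}'}_{\mathcal{F}}$. Including the $\mB = \mB'$ diagonal (which contributes $1$ per normalized function), the average pairwise correlation of the full class --- and of any subfamily of density at least $|G|^2/2^{\Omega(n^c)}$ --- is at most $1/|\mathcal{B}'| + |G|\,n^{-\Omega(k)} \leq |G|^2\,2^{-n^{\Omega(1)}} + |G|\,n^{-\Omega(k)}$, since $|\mathcal{B}'| \geq 2^{\Omega(n^c)}/|G|^2$. Setting $\tau$ to this quantity, \Cref{lemma:sda} forces either $\Omega(2^{n^{\Omega(1)}}/|G|^2)$ CSQ queries or one query with precision exceeding $\sqrt{\tau} \leq |G|\,2^{-n^{\Omega(1)}} + \sqrt{|G|}\,n^{-\Omega(k)}$, which matches the claim.

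The principal obstacle, and the reason the argument does not collapse to \Cref{thm:gnn_exp_lower_bound}, is the need to bound $\|(g\mB)^\top(g'\mB')\|_2$ \emph{simultaneously} over all $g, g' \in G$, rather than only for the trivial pair $g = g' = e$. This is exactly what \Cref{lem:extended_projection_set} delivers: starting from a near-orthogonal family \`a la \citet{diakonikolas2020algorithms}, permuting rows by any fixed pair $(g, g')$ preserves near-orthogonality with high probability (inner products between row-subsampled orthonormal vectors concentrate around zero), and a union bound over the $|G|^2$ permutation pairs shrinks the usable set by the same factor. Carrying out this union-bound construction with sharp enough concentration, while preserving the operator-norm guarantees in items~2 and~3, is the only genuinely new step relative to the non-invariant analysis; once $\mathcal{B}'$ is in hand, the Hermite-expansion machinery and SDA argument transfer essentially verbatim.
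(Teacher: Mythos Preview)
Your treatment of the cross-correlation for $\mB \neq \mB'$ is correct and matches the paper: item~2 of \Cref{lem:extended_projection_set} controls $\|(g\mB)^\top(g'\mB')\|_2$ uniformly over $g,g'$, the chain-rule lemma plus the vanishing low-degree moments of $f^*_{\exp}$ give $n^{-\Omega(k)}\|f^*_{\exp}\|^2_{\mathcal{N}}$ per pair, and summing over $|G|^2$ pairs and dividing by the normalization yields the stated $|G|\,n^{-\Omega(k)}$ bound. The SDA conclusion then follows exactly as you describe.

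The gap is in your norm lower bound. You assert that item~3 shows the off-diagonal $g \neq g'$ contributions to $\|g_\mB\|^2_{\mathcal{N}}$ are ``lower-order'', but item~3 does \emph{not} say that $\|(g\mB)^\top(g'\mB)\|_2$ is small: it says the diagonal entries of $(g\mB)^\top(g'\mB)$ are close to $F_{g,g'}/n$, where $F_{g,g'}$ is the number of fixed points of $g^{-1}g'$. For a general polynomial-sized $G \leq S_n$ this ratio can be arbitrarily close to~$1$ (e.g.\ if $g^{-1}g'$ is a transposition, $F_{g,g'}=n-2$), in which case the chain-rule bound $\|(g\mB)^\top(g'\mB)\|_2^k$ gives no decay at all, and the cross term could a priori be as large as $\|f^*_{\exp}\|^2_{\mathcal{N}}$ with an unknown sign. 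The paper handles this by splitting the off-diagonal pairs into two regimes: pairs with $\max(|\langle g\vx,g'\vx\rangle|,|\langle g\vy,g'\vy\rangle|)$ below a fixed threshold $c_1<1/\sqrt{3}$, where a Frobenius-norm version of your argument does give $o(1)$; and the remaining pairs, where instead of bounding the cross term in absolute value, the paper uses the exact Hermite decomposition (\Cref{cor:sharper_decomp}) to write each contribution as $\vp^\top(\mB_g^\top\mB_{g'})^{\otimes m}\vp$ and then invokes item~3 to show $\mB_g^\top\mB_{g'}$ satisfies $\vz^\top C\vz \geq 0$, whence by \Cref{lemma:psd_tensor_power} these cross terms are \emph{nonnegative} and can be dropped for a lower bound on $\|g_\mB\|^2_{\mathcal{N}}$. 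Without this positivity argument (or some alternative control on fixed-point-rich pairs), the claim $\|g_\mB\|_{\mathcal{N}} = \Omega(1)$ is not established, and the normalization step in your correlation bound is unjustified.
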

\begin{proof}
    First we review properties of the constructed objects. By \Cref{lem:extended_projection_set}, $g\mB$ and $g'\mB'$ are almost orthogonal for any $\mB \neq \mB' \in \mathcal{B}'$ and $g,g' \in G$.
    
    Then by \Cref{thm:gnn_lower_bound_proof} with matrix $\mA = \mI_d$, for any $\mB \neq \mB' \in \mathcal{B}'$ and any $g, g' \in G$, noting that $\mB^\top (g^{-1}\mX) = (g\mB)^\top \mX$ since a permutation matrix is orthonormal, we have
    \begin{align}
        |\E_{\mX \sim \mathcal{N}} [f^*_{\exp}((g\mB)^\top \mX)f^*_{\exp}((g'\mB')^\top \mX )]| \leq O(n^{k(c- 1 /2)}) \|f^*_{\exp}\|^2_\mathcal{N}, 
    \end{align}
    for the same $c$ that was chosen in the statement of \Cref{lem:extended_projection_set}. 
    
    Now we verify that $\|g_\mB\|_\mathcal{N} = \Omega(1)$ for each $g_{\mB} \in C^{\mathcal{B}'}_{FA(G)}$.
    Let $\mB = \begin{bmatrix} \vx & \vy \end{bmatrix}$ and $\mB_g := g\mB, \mB_{g'} := (g')\mB$.
    \begin{align}
        &\|g_\mB\|_\mathcal{N}^2 \\
        = &\frac{1}{\|f^*_{\exp}\|^2_{\mathcal{N}}}\E_{\mX \sim \mathcal{N}}  \left[\sum_{g \in G}\frac{(f^*_{\exp}(\mB_g^\top \mX))^2}{|G|} + \sum_{g \neq g' \in G} \frac{f^*_{\exp}((\mB_g^\top\mX)f^*_{\exp}(\mB_{g'}^\top \mX)}{|G|}\right]\\
        = &1 + \frac{1}{|G|\|f^*_{\exp}\|^2_{\mathcal{N}}}\sum_{g \neq g' \in G} \E_{\mX \sim \mathcal{N}} \left[f^*_{\exp}(\mB_g^\top\mX)f^*_{\exp}(\mB_{g'}^\top \mX)\right]
    \end{align}

    Now we split the sum into two parts. In the first part,  we examine $g \neq g'$ such that $\max(|\ip{g\vx}{g'\vx}|,|\ip{g\vy}{g'\vy}|) < c_1$ for some constant $0 < c_1 < 1/\sqrt{3}$, and leave the remaining cases for the other part. Collect the pairs $g,g'$ in the first part into a set $G_1$. Then
    \begin{align}
         &\left|\frac{1}{|G|\|f^*_{\exp}\|^2_{\mathcal{N}}}\sum_{g \neq g' \in G_1} \E_{\mX \sim \mathcal{N}} \left[f^*_{\exp}(\mB_g^\top\mX)f^*_{\exp}(\mB_{g'}^\top \mX)\right]\right|\\
         \leq &\frac{1}{|G|\|f^*_{\exp}\|^2_{\mathcal{N}}}\sum_{g \neq g' \in G_1} \left|\E_{\mX \sim \mathcal{N}}  \left[f^*_{\exp}(\mB_g^\top\mX)f^*_{\exp}(\mB_{g'}^\top \mX)\right]\right|\\
        \leq &\frac{1}{|G|}\sum_{g \neq g' \in G}  \|\mB_g^\top\mB_{g'} \|_2^k ,
    \end{align}
    where $k$ is the number of neurons in the hidden layer of $f_{\exp}^*$. The second line comes from the triangle inequality and the last line comes from \Cref{thm:gnn_lower_bound_proof}.

    We have,
    \begin{align}
        \|\mB_g^\top \mB_{g'} \|_2^k &\leq \|\mB_g^\top \mB_{g'} \|_F^k \\
        &= \left(2z^2 + \langle g\vx, (g') \vx \rangle^2 + \langle g\vy, (g') \vy \rangle^2\right)^{k/2} \\
        &\leq 3^{k/2 - 1}\left((2z)^k +  |\langle g\vx, (g') \vx \rangle|^k + |\langle g\vy, (g') \vy \rangle|^k\right).
    \end{align}

    The last line uses power mean inequality. By our assumptions, 
    \begin{align}
        3^{k/2 - 1}\sum_{g \neq g' \in G}  |\langle g\vx, (g') \vx \rangle|^k  < |G|(|G|-1)\frac{(c_1\sqrt{3})^{k}}{3} < |G|(|G|-1)(c_1\sqrt{3})^{k}
    \end{align}
    and similarly $3^{k/2 - 1}\sum_{g \neq g' \in G}  |\langle g\vy, (g') \vy \rangle|^k \leq |G|(|G|-1)(c_1\sqrt{3})^{k}$.

    Plugging back,  \begin{align}
        \frac{1}{|G|\|f^*_{\exp}\|^2_{\mathcal{N}}} \left|\sum_{g \neq g' \in G_1} \E_{\mX \sim \mathcal{N}} \left[f^*_{\exp}(\mB_g^\top\mX)f^*_{\exp}(\mB_{g'}^\top \mX)\right] \right|\leq (|G| - 1)(c_1\sqrt{3})^{k}.
    \end{align}

    Since $|G| = \textup{poly}(n)$ and $k = \Theta(n)$, the final expression is $o(1)$ in $n$.

    We now consider the remaining part which is reproduced here:
    \begin{align}
        \frac{1}{|G|\|f^*_{\exp}\|^2_{\mathcal{N}}}\sum_{g \neq g' \in G^2 \backslash G_1} \E_{\mX \sim \mathcal{N}} \left[f^*_{\exp}((g\mB)^\top\mX)f^*_{\exp}((g'\mB)^\top \mX)\right]
    \end{align}

    Zooming in, we have:
    \begin{align}
        &\sum_{g\neq  g' \in G^2\backslash G_1} \E_{\mX \sim \mathcal{N}}  \left[f^*_{\exp}(\mB_g^\top\mX)f^*_{\exp}(\mB_{g'}^\top \mX)\right]\\
        = &\sum_{g\neq g' \in G^2\backslash G_1} \sum_{m = 0}^\infty  \frac{1}{m!}\sum_{v_1,\ldots,v_m = 1}^d \sum_{j_1,j'_1,\ldots,j_m, j'_m = 1}^2 \nonumber\\
        &\qquad \qquad \partial_{v_1,j_1} \ldots\partial_{v_m,j_m} (f^*_{\exp})^{[m]}(\cdot) \partial_{v_1,j'_1} \ldots\partial_{v_m,j'_m} (f^*_{\exp})^{[m]}(\cdot) (\mB_g^{\top}\mB_{g'})_{j_1j'_1}  \ldots (\mB_g^{\top}\mB_{g'})_{j_mj'_m}\\
        = &\sum_{g\neq g' \in G^2\backslash G_1} \sum_{m = k}^\infty  \frac{1}{m!}\sum_{v_1,\ldots,v_m = 1}^d \sum_{j_1,j'_1,\ldots,j_m, j'_m = 1}^2 \nonumber\\
        &\qquad \qquad \partial_{v_1,j_1} \ldots\partial_{v_m,j_m} (f^*_{\exp})^{[m]}(\cdot) \partial_{v_1,j'_1} \ldots\partial_{v_m,j'_m} (f^*_{\exp})^{[m]}(\cdot) (\mB_g^{\top}\mB_{g'})_{j_1j'_1}  \ldots (\mB_g^{\top}\mB_{g'})_{j_mj'_m},
    \end{align}
    where the second line comes from \Cref{cor:sharper_decomp} and the last line is because all low-degree moment of $f^*_{\exp}$ vanishes.
    
    Further zooming in, for each $g\neq g' \in G^2 \backslash G_1$, each $m \geq k$ and each $v_1, \ldots, v_m \in [2]$, denote $\vp_J$ as $\partial_{v_1,J_1} \ldots\partial_{v_m,J_m} (f^*_{\exp})^{[m]}(\cdot)$ for each $J \in [2]^m$, and the corresponding vector $\vp \in \R^{2^m}$. We have, using Einstein notation to sum over $j_1, \ldots j_m, j'_1, \ldots, j'_m$:
    \begin{align}
        \partial_{v_1,j_1} \ldots\partial_{v_m,j_m} (f^*_{\exp})^{[m]}(\cdot) \partial_{v_1,j'_1} \ldots\partial_{v_m,j'_m} (f^*_{\exp})^{[m]}(\cdot) (\mB_g^{\top}\mB_{g'})_{j_1j'_1}  \ldots (\mB_g^{\top}\mB_{g'})_{j_mj'_m},
        =  \vp^\top  (\mB_g^{\top}\mB_{g'})^{\otimes m}\vp 
    \end{align}

    By \Cref{lemma:psd_tensor_power}, it suffices to show that $\mC := \mB_g^{\top}\mB_{g'}$ satisfies $\vz^\top \mC \vz \geq 0$ for all $\vz \in \R^2$. However, note that by concentration results of \Cref{lem:extended_projection_set}, for any small $\eps \in (0,1)$,
    \begin{align}
        \ip{g\vx}{g'\vx} &\in \left[\sqrt{1-\eps} \bar{F}_{g,g'} - \delta_{\vx},  \frac{\bar{F}_{g,g'}}{\sqrt{1-\eps}} + \delta_\vx\right]\\
        \ip{g\vy}{g'\vy} &\in \left[\sqrt{1-\eps} \bar{F}_{g,g'} - \delta_{\vy},  \frac{\bar{F}_{g,g'}}{\sqrt{1-\eps}} + \delta_\vy\right]\\
        \ip{g\vx}{g'\vy} &= \eps_1\\
        \ip{g\vy}{g'\vx} &= \eps_2,
    \end{align} for $\delta_\vx, \delta_\vy \geq 0, \max(\delta_\vx, \delta_\vy,|\eps_1|, |\eps_2|) < O(n^{c -1/2})$ and $\bar{F}_{g,g'} = F_{g,g'}/n$ with $F_{g,g'}$ denotes the number of fixed points of $g^{-1}g'$ as in \Cref{lem:extended_projection_set}. Note that $1 \geq \bar{F}_{g,g'} > c_1$ is bounded away from $0$ (by definition of $G_1$ and concentration result in \Cref{lem:extended_projection_set}). Choose $\eps$ small enough that $\sqrt{1-\eps} \bar{F}_{g,g'} =: c_2, c_2 - \delta_\vx, c_2 - \delta_\vy$ are all bounded away from $0$ when $n$ is large enough. Thus, for any $\vz \in \R^2$,
    \begin{align}
        \vz^\top \mC \vz &\geq \vz_1^2 (c_2 - \delta_\vx) + \vz_2^2(c_2 - \delta_\vy) + \vz_1 \vz_2 (\eps_1 + \eps_2)\\
        &= (c_2 - \delta_\vx)\left(\vz_1 - \frac{\eps_1 +\eps_2}{c_2 - \delta_\vx} \vz_2\right)^2 + \vz^2_2\left(c_2 - \delta_\vy - \frac{(\eps_1 +\eps_2)^2}{c_2 - \delta_\vx}\right),
    \end{align}
    which is clearly nonnegative for large enough $n$ as $c_2$ is bounded away from $0$. We conclude that $\vp^\top (\mB_g^{\top}\mB_{g'})^{\otimes m}\vp$ is nonnegative and thus removing these terms leads to a lower bound on $\|g_{\mB}\|_{\mathcal{N}}^2$.

    For pairwise correlation, we have:
    \begin{align}
        |\langle g_\mB, g_{\mB'}\rangle_\mathcal{N}| &\leq \frac{1}{|G| \cdot \|f^*_{\exp}\|^2_{\mathcal{N}}} \left|\E_{\mX \sim \mathcal{N}} \sum_{g, g' \in G} f^*_{\exp}((g\mB)^\top \mX) f^*_{\exp}((g'\mB')^\top \mX) \right| \\
        &\leq \frac{1}{|G| \cdot \|f^*_{\exp}\|^2_{\mathcal{N}}} \sum_{g, g' \in G}\left|\E_{\mX \sim \mathcal{N}} f^*_{\exp}((g\mB)^\top \mX) f^*_{\exp}((g'\mB')^\top  \mX) \right|\\
        &\leq O(n^{k(c- 1 /2)})|G|.
    \end{align}

Now we replace each $g_{\mB}$ by its normalization to make $\|g_{\mB}\|_{\mathcal{N}}^2 = 1$. Note that this only decreases (or increases by at most a constant factor) the correlation between different hard functions since $\|g_{\mB}\|_{\mathcal{N}}^2 = \Omega(1)$. 
    We are in position to apply \Cref{lemma:sda}. Let $\tau := \max_{\mB_i \neq \mB_j} |\langle g_{\mB_i}, g_{\mB_j} \rangle_{L^2(\mathcal{N})}|$. Computing average pairwise correlation of $C_{\gG}^{\mathcal{B}'}$ yields:
  \begin{equation}
      \rho(C_{\gG}^{\mathcal{B}'}) = 1/|\mathcal{B}'|  + (1 - 1/|\mathcal{B}'|) \tau \leq 2/|\mathcal{B}'| +  \tau.
  \end{equation}
  Finally, setting $\tau' := 2/|\mathcal{B}'| +  \tau$ gives $\rm{SDA}(C^{\mathcal{B}'}_{\gG}, \mathcal{N}, \tau') = 2^{\Omega(n^c)}/|G|^2$ and an application of \Cref{lemma:sda} completes the proof.

\end{proof}

\subsection{Superpolynomial lower bounds}

Throughout, we will use the notation $\subseteq_k$ to denote a subset of size $k$. The technique from this part comes mainly from \citep{goel2020superpolynomial}. 
Define:
\begin{equation}\label{eqn:new_goel_hard_function}
    f_S: \R^{n \times d} \to \R, \text{ where }f_S(\mX) =  \mathbf{1}_d^\top \left(\sum_{\vw \in \{-1,1\}^k}  \chi(\vw) \sigma(\langle \vw, \mX_S\rangle / \sqrt{k})\right), \qquad S \subseteq_k [n].
\end{equation}
where $\chi$ is the parity function $\chi(\vw) = \prod_i w_i$, and $\mX_S \in \mathcal{X}^k$, or alternatively $\mX_S \in \R^{k \times d}$, denotes the rows of $\mX$ indexed by $S$. Here, $\langle \vw, \mX_S \rangle := \sum_{i = 1}^k w_i (\mX_S)_{i,:} = w \cdot \mX_S$ and $\sigma: \R^d \to \R^d$ is an activation function. Throughout, we will set $k\leq \log_2 n$ so that the above network has at most $O(n)$ hidden nodes. For any $\vz \in \{-1,1\}^n$, let $\mX \circ \vz$ be the matrix in $\R^{n\times d}$ whose $v$-th row is $\vz_v \mX_{v,:}$. We will use the following lemma, which is a basic consequence of properties proven in \citet{goel2020superpolynomial}, but include the proofs here to be self-contained.

\begin{lemma}\label{lem:goel_orthogonality}
    For all $S \neq T \subseteq_k [n]$, $g \in G$, and $\vz \in \{-1,1\}^n$, we have:
    \begin{enumerate}
        \item $f_S(g \cdot (\mX \circ \vz)) = \chi_{g^{-1} \cdot S}(\vz) f_S(g \cdot \mX)= \chi_{g^{-1} \cdot S}(\vz) f_{g^{-1} \cdot S}(\mX) $.
        \item $\|f_S\|_{L^2(\mathcal{N})} \geq \Omega(e^{-\Theta(k)})$ and $\langle f_S, f_T \rangle_{L^2(\mathcal{N})} = 0$.
    \end{enumerate}
\end{lemma}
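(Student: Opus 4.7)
The proof naturally splits along the two claims. For part 1 the key observation is the commutation relation $g \cdot (\mX \circ \vz) = (g\mX) \circ (g\vz)$, which follows since both operations act on rows and commute pointwise. I would then analyze the permutation and sign-flip transformations separately. For the permutation, $(g\mX)_S$ picks out rows of $\mX$ indexed by $g^{-1}\cdot S$ (possibly in a permuted order, but this is immaterial since we sum over all $\vw \in \{-1,1\}^k$, which is symmetric in its coordinates). This yields $f_S(g\mX) = f_{g^{-1}\cdot S}(\mX)$. For the sign flip, substituting $\vw' = \vw \circ (\vz|_{S})$ in the defining sum absorbs the signs into $\vw$ but picks up a factor $\chi(\vw \circ \vz|_S) \chi(\vw'^{-1}) = \chi_S(\vz)$ from the parity $\chi(\vw)$, giving $f_S(\mX \circ \vz) = \chi_S(\vz) f_S(\mX)$. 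Chaining these and noting $\chi_S(g\vz) = \chi_{g^{-1}\cdot S}(\vz)$ yields the desired identity.

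For part 2 I would establish orthogonality first, as it is an elegant consequence of part 1. Since the standard Gaussian is invariant under the sign-flip map $\mX \mapsto \mX \circ \vz$ for any $\vz \in \{-1,1\}^n$, the inner product satisfies
\begin{equation*}
    \langle f_S, f_T \rangle_{\mathcal{N}} = \E_{\mX \sim \mathcal{N}}[f_S(\mX \circ \vz) f_T(\mX \circ \vz)] = \chi_S(\vz)\chi_T(\vz)\langle f_S, f_T \rangle_{\mathcal{N}} = \chi_{S \triangle T}(\vz)\langle f_S, f_T \rangle_{\mathcal{N}}.
\end{equation*}
When $S \neq T$ the set $S \triangle T$ is nonempty, so choosing $\vz$ with $\chi_{S \triangle T}(\vz) = -1$ forces $\langle f_S, f_T \rangle_{\mathcal{N}} = 0$.

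For the norm lower bound, I would expand $\sigma$ componentwise in the Hermite basis, say $\sigma_j(t) = \sum_{m \geq 0} \hat{\sigma}_{j,m} H_m(t)$, so that each summand of $f_S$ becomes $\sum_{j \in [d]} \sum_m \hat{\sigma}_{j,m} \chi(\vw) H_m(\langle \vw, (\mX_S)_{:,j}\rangle/\sqrt{k})$. Using the standard identity that projections of Hermite polynomials onto degree-$m$ products yield parities of inputs, the sum over $\vw \in \{-1,1\}^k$ annihilates every degree $m < k$ (because $\sum_{\vw} \chi(\vw) w^{J} = 0$ when $|J| < k$) while the degree-$k$ contribution contains the pure product term $\hat{\sigma}_{j,k} \cdot 2^k \cdot k!/k^{k/2} \cdot \prod_{i \in S} (\mX_{i,j})$ up to a constant. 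Orthogonality across Hermite levels then gives $\|f_S\|_{\mathcal{N}}^2 \geq \Omega((\hat\sigma_k)^2 \cdot 2^k \cdot k!/k^k) = \Omega(e^{-\Theta(k)})$ whenever $\hat\sigma_k \neq 0$ (which holds for $\sigma = \mathrm{ReLU}$ and any non-polynomial activation when $k$ is chosen appropriately).

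The main obstacle will be the norm lower bound: the algebra needs care to track the vector-valued $\sigma$ collapsed by $\mathbf{1}_d^\top$, the $1/\sqrt{k}$ normalization inside the activation, and the combinatorics of Hermite coefficients surviving the $\vw$-average. The orthogonality and part 1 are essentially bookkeeping. Most of this content closely follows \citet{goel2020superpolynomial}, so the cleanest presentation is probably to verify part 1 by direct computation, prove orthogonality by the sign-flip trick, and then cite / lightly adapt the Goel--Klivans norm estimate for the final inequality.
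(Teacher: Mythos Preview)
Your proposal is correct and follows essentially the same approach as the paper: part 1 is proved by direct computation exploiting that $(g\mX)_S = \mX_{g^{-1}S}$ together with the change of variable $\vw \mapsto \vw \circ \vz|_{g^{-1}S}$, orthogonality in part 2 follows from the sign-flip invariance of $\mathcal{N}$ combined with the parity factor extracted in part 1, and the norm lower bound is handled by citing/adapting the Hermite-expansion argument of \citet{goel2020superpolynomial}. The only cosmetic differences are that the paper carries out part 1 in one pass rather than chaining the permutation and sign-flip steps, and for orthogonality it averages over $\vz$ (so the factor $\E_\vz[\chi_S(\vz)\chi_T(\vz)]=0$ appears) rather than fixing a single $\vz$ with $\chi_{S\triangle T}(\vz)=-1$.
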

\begin{proof}
    Fix $S \neq T \subseteq_k [n]$. 
    \begin{enumerate}
        \item By definition:
        \begin{align}
            f_S(\mX) = \sum_{j = 1}^d \sum_{\vw \in \{-1,1\}^k} \chi(\vw) \sigma(\langle \vw,  (\mX_{;,j})_S \rangle / \sqrt{k})
        \end{align}
        where $\mX_{:,j}$ is the $j$-th column of $\mX$. $G$ acts on the rows of $\mX$, i.e. $(g\mX)_{i, :} = \mX_{g^{-1}i,:}$. Therefore, letting the order of $S$ be fixed, we have $(g\mX)_S = \mX_{g^{-1}S, :}$.
        
        We have, for any $g \in G$ and any $\vz \in \{-1,1\}^n$:
        \begin{align}
            f_S(g \cdot (\mX \circ \vz)) &= \sum_{j = 1}^d \sum_{\vw \in \{-1,1\}^k} \chi(\vw) \sigma(\langle \vw,  ((g \cdot (\mX \circ \vz))_{;,j})_S \rangle / \sqrt{k}) \label{eq:factorize_fa_step1}\\
            &= \sum_{j = 1}^d \sum_{\vw \in \{-1,1\}^k} \chi(\vw) \sigma(\langle \vw,  ((\mX \circ \vz)_{;,j})_{g^{-1} \cdot S} \rangle / \sqrt{k})\\
            &= \sum_{j = 1}^d \sum_{\vw \in \{-1,1\}^k} \chi(\vw) \sigma(\langle \vw,  (\mX_{;,j})_{g^{-1} \cdot S} \circ \vz_{g^{-1} \cdot S} \rangle / \sqrt{k})\\
            &= \chi_{g^{-1} \cdot S}(\vz) \sum_{j = 1}^d \sum_{\vw \in \{-1,1\}^k} \chi(\vw \circ \vz_{g^{-1} \cdot S}) \sigma(\langle \vw\circ \vz_{g^{-1} \cdot S},  (\mX_{;,j})_{g^{-1} \cdot S}  \rangle / \sqrt{k})\\
            &= \chi_{g^{-1} \cdot S}(\vz) \sum_{j = 1}^d \sum_{\vw \in \{-1,1\}^k} \chi(\vw) \sigma(\langle \vw,  (\mX_{;,j})_{g^{-1} \cdot S}  \rangle / \sqrt{k})\\
            &= \chi_{g^{-1} \cdot S}(\vz) f_{g^{-1} \cdot S}(\mX) \\
            &= \chi_{g^{-1} \cdot S}(\vz) f_S(g \cdot \mX).
        \end{align}
        In the above derivation, we use the definition of $f_S$ to get the first line. The second line comes from the fact that permuting by $g$, then taking the subset $S$ is equivalent to taking the subset $g^{-1} \cdot S$. The third line comes from the fact that each element of $\vz$ is a scalar. The fourth line comes from the fact that $\chi(\vw) =  \chi_{g^{-1} \cdot S}(\vz) \cdot \chi_{g^{-1} \cdot S}(\vw \circ \vz_{g^{-1} \cdot S})$. The fifth line is a change of variable $\vw \mapsto \vw \circ \vz_{g^{-1} \cdot S}$. Finally, the last line comes from the definition of $f_S$, and noting that the dependence of $f_S$ on $S$ only comes from which row of $\mX$ is selected by $S$; thus,  $f_{g^{-1}\cdot S}(\mX) = f_S(g \cdot \mX)$.
        \item We have, for any $S \neq T$:
        \begin{align}
            \langle f_S, f_T \rangle_{\mathcal{L}^2(\mathcal{N})} &= \E_{\mX\sim \mathcal{L}^2(\mathcal{N})} [f_S(\mX) f_T(\mX)]\\
            &= \E_{\vz \sim \rm{Unif}(\{-1,1\}^n)}\E_{\mX}  [f_S(\mX \circ \vz) f_T(\mX \circ \vz)]\\
            &= \E_{\vz \sim \rm{Unif}(\{-1,1\}^n)}\E_{\mX}[ f_S(\mX) f_T(\mX) \chi_S(\vz) \chi_T(\vz)]\\
            &= \E_\mX [f_S(\mX) f_T(\mX)] \E_{\vz \sim \rm{Unif(\{-1,1\}^n)}} [\chi_S(\vz) \chi_T(\vz)] \\
            &= 0
        \end{align}
        by sign-invariance of $\mathcal{N}$ and orthogonality of parity functions under $\rm{Unif}(\{-1,1\}^n)$. When $S = T$, the same argument as \cite{goel2020superpolynomial} Theorem 3.8 shows that $f_S$ does not vanish. 
    \end{enumerate}
\end{proof}

Now, define the family of hard functions as:
\begin{equation} \label{eq:hard_goel_frame_averaged}
    \mathcal{H}_{\mathcal{S}, \mathcal{F}}:=\left\{\tf_S: \mX \mapsto \frac{1}{\sqrt{|\mathcal{F}(\mX)|}} \sum_{g \in \mathcal{F}(\mX)} f_S(g \cdot \mX) \mid [n] \supseteq_k S \in \mathcal{S} \right\},
\end{equation}
for some set $\mathcal{S}$ depending on $\mathcal{F}$ and $G$. The subsets in $\mathcal{S}$ such that these hard functions are orthogonal will determine the size of the family of hard functions, and therefore the CSQ dimension lower bound.

\begin{definition}[Sign-invariant frame]
    A frame $\mathcal{F}$ is \textbf{sign-invariant} if $\mathcal{F}(\mX)=\mathcal{F}(\mX \circ \vz)$ for almost all $\mX$ (under the given data distribution) and for all $\vz\in\{-1,1\}^n$.
\end{definition} 

\begin{lemma} \label{app:lemma_goel_frame}
If $\mathcal{F}$ is any sign-invariant frame, then there exists $\mathcal{S}$ with $|\mathcal{S}| \geq \Omega(n^{\Omega(\log n)}) / M(n)$ such that the functions in $\mathcal{H}_{\mathcal{S}, \mathcal{F}}$ are pairwise orthogonal, where $M(n)$ is the size of the largest orbit of $\binom{[n]}{m}$ where $m = \Omega(\log n)$ 
under $G$. If moreover $|\mathcal{F}(\mX)|=1$ for almost all $x$, then $|\mathcal{S}| = \Omega(n^{\Omega(\log n)})$.

\end{lemma}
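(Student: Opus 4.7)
The plan is to compute $\langle \tf_S, \tf_T \rangle_{\mathcal{N}}$ directly using a two-fold application of sign-invariance, mirroring the parity-insertion trick of \citet{goel2020superpolynomial}. First, since $\mathcal{N}$ is sign-invariant, I can average over $\vz \sim \mathrm{Unif}(\{-1,1\}^n)$ inside the expectation, replacing $\mX$ by $\mX \circ \vz$ without changing the value of the inner product. Second, the hypothesis that $\mathcal{F}$ is sign-invariant gives $\mathcal{F}(\mX \circ \vz) = \mathcal{F}(\mX)$ almost surely, so the index set of the frame-average is undisturbed by the sign flip. Finally, \Cref{lem:goel_orthogonality}(1) factors $f_S(g \cdot (\mX \circ \vz)) = \chi_{g^{-1} \cdot S}(\vz)\, f_S(g \cdot \mX)$, turning each sign-flip inside an $f_S$ into a multiplicative parity character on $\vz$.

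Putting the three ingredients together and exchanging the order of expectations yields
\begin{equation*}
\langle \tf_S, \tf_T \rangle_{\mathcal{N}} = \mathbb{E}_{\mX}\!\left[ \frac{1}{|\mathcal{F}(\mX)|} \sum_{g, g' \in \mathcal{F}(\mX)} \mathbb{E}_{\vz}\!\left[\chi_{g^{-1} \cdot S}(\vz)\, \chi_{g'^{-1} \cdot T}(\vz)\right] f_S(g \cdot \mX)\, f_T(g' \cdot \mX) \right].
\end{equation*}
The inner $\vz$-expectation is $\chi_{(g^{-1}\cdot S)\, \triangle\, (g'^{-1}\cdot T)}$, which vanishes unless $g^{-1} \cdot S = g'^{-1} \cdot T$, i.e.\ unless $T$ lies in the $G$-orbit of $S$. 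So orthogonality is automatic whenever no pair $(g,g')$ with $g, g' \in \mathcal{F}(\mX) \subseteq G$ realizes $T = g'g^{-1} \cdot S$ --- and in particular whenever $S$ and $T$ belong to distinct $G$-orbits of $\binom{[n]}{k}$.

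This reduces the construction of $\mathcal{S}$ to combinatorics on orbits. I would fix $k = \Theta(\log n)$, let $G$ act on $\binom{[n]}{k}$, and pick one representative from each orbit. By the orbit-stabilizer bound, the number of orbits is at least $\binom{n}{k}/M(n) = n^{\Omega(\log n)}/M(n)$, giving the first claim. For the singleton-frame case, the double sum collapses to the single term with $g = g' $ equal to the unique element of $\mathcal{F}(\mX)$, so the parity condition degenerates to $S=T$; consequently one may take $\mathcal{S} = \binom{[n]}{k}$ itself, producing $|\mathcal{S}| = \binom{n}{\log n} = n^{\Omega(\log n)}$ with no $M(n)$ penalty.

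The step I expect to be delicate is the second one: without sign-invariance of $\mathcal{F}$, the frame index sets for $\tf_S(\mX \circ \vz)$ and $\tf_S(\mX)$ would differ, breaking the clean factorization into parity characters of a fixed $\vz$-linear form, and the parity cancellation argument would fail. The hypothesis that $\mathcal{F}(\mX\circ\vz)=\mathcal{F}(\mX)$ \emph{almost surely} is therefore doing real work, and I must also verify that the measure-zero exceptional set does not affect the $\mathbb{E}_\mX$ integral. Everything else reduces to routine bookkeeping and orbit counting.
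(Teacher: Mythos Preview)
Your proposal is correct and follows essentially the same approach as the paper: insert a sign-flip $\vz$ using invariance of $\mathcal{N}$, use sign-invariance of $\mathcal{F}$ to keep the frame index set fixed, factor out parity characters via \Cref{lem:goel_orthogonality}(1), and then reduce to orbit counting on $\binom{[n]}{k}$ with $k=\Theta(\log n)$. The paper's proof carries out exactly this computation, and your identification of sign-invariance of $\mathcal{F}$ as the delicate hypothesis matches the paper's emphasis.
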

\begin{proof}
    We have for any $S \neq T$:
    \begin{align}
        \langle \tf_S, \tf_T \rangle_{L^2(\mathcal{N})} &= \E_{\mX \sim \mathcal{N}} \frac{1}{|\mathcal{F}(\mX)|} \sum_{g, g' \in \mathcal{F}(\mX)} f_S(g \cdot \mX)f_T(g' \cdot \mX)\\
        &= \E_{\vz \sim \rm{Unif}\{-1,1\}^n} \E_{\mX \sim \mathcal{N}} \frac{1}{|\mathcal{F}(\mX \circ \vz)|} \sum_{g,g' \in \mathcal{F}(\mX \circ \vz)} f_S(g \cdot ( \mX \circ \vz))f_T (g' \cdot (\mX \circ \vz))\\
        &= \E_{\vz \sim \rm{Unif}\{-1,1\}^n} \E_{\mX \sim \mathcal{N}} \frac{1}{|\mathcal{F}(\mX \circ \vz)|} \nonumber \\
        &\qquad \qquad \sum_{g,g' \in \mathcal{F}(\mX \circ \vz)} \chi_{g^{-1} \cdot S}(\vz) f_{g^{-1} \cdot S}(\mX) \chi_{(g')^{-1} \cdot T}(\vz) f_{(g')^{-1} \cdot T}(\mX)\\
        &= \E_{\vz \sim \rm{Unif}\{-1,1\}^n} \E_{\mX \sim \mathcal{N}} \frac{1}{|\mathcal{F}(\mX)|} \nonumber \\
        &\qquad \qquad \sum_{g,g' \in \mathcal{F}(\mX)} \chi_{g^{-1} \cdot S}(\vz) f_{g^{-1} \cdot S}(\mX) \chi_{(g')^{-1} \cdot T}(\vz) f_{(g')^{-1} \cdot T}(\mX) \\
        &= \E_{\mX \sim \mathcal{N}} \frac{1}{|\mathcal{F}(\mX)|} \nonumber \\
        &\qquad \qquad f_{g^{-1} \cdot S}(\mX)f_{(g')^{-1} \cdot T}(\mX)\sum_{g,g' \in \mathcal{F}(\mX)} \E_{\vz \sim \rm{Unif}\{-1,1\}^n}  \chi_{g^{-1} \cdot S}(\vz)  \chi_{(g')^{-1} \cdot T}(\vz) 
    \end{align}

Here, we have used that the frame is sign-invariant to replace $\mathcal{F}(\mX \circ \vz)$ with simply $\mathcal{F}(\mX)$, and rearrange the expectations accordingly. Now, if we have no further assumptions on $\mathcal{F}$, then we require $S$ and $T$ to be $m = \Omega(\log n)$-sized subsets in \textbf{different} orbits (in other words, $S \neq gT$ for all $g \in G$). Then, by \Cref{lem:goel_orthogonality}, $g^{-1}S \neq (g')^{-1}T$ for any $g$ and $g'$, and the inner expectation over $\vz$ (and therefore the entire expression) is 0. This yields a lower bound on $|\mathcal{S}|$ of $\Omega(n^{\Omega(\log n)})) / M(n)$, from choosing one $\log n$-sized subset from each orbit. 

If we moreover have that $\mathcal{F}$ has size one almost everywhere, then $ S \neq T$ suffices to ensure orthogonality, and we can set $|\mathcal{S}|=n^m$ where $m = \Theta(\log n)$ by letting $\mathcal{S}$ be all $\Theta(\log n)$-sized subsets of $[n]$. Note that if subsets are size $\Theta(\log n)$, the hidden width of the network is polynomial and furthermore, if $m \leq \log_2 n$ then the width is $O(n)$.
\end{proof}

\begin{remark} 
    Note that the definition of frame always allows one to choose a singleton frame $\mathcal{F}(\mX)$ for any $\mX$ whose stabilizer is trivial under the action of $G$ on $\mathcal{R}^{n \times d}$. For any other $\mX$ with nontrivial stabilizer, since $G$ is a subgroup of the permutation group, $\mX$ must have at least $2$ equal entries and thus lies in a manifold of dimensions strictly smaller than $n \times d$. Density of $\mathcal{N}$ thus ensures that $\Pr_{\mX \sim \mathcal{N}}[|\mathcal{F}(\mX)| = 1] = 1$ 
    as long as we choose the singleton frame at every $\mX$ without self-symmetry. A practical choice of frame in this case, which is also sign-invariant, is the frame $\mathcal{F}(\mX)$ which is the set of all permutations that lexicographically sort $\mX$ by absolute value. It is not hard to see that with probability $1$ under $\mathcal{N}$, there is only $1$ permutation that sorts $\mX$.
\end{remark}

\begin{remark}
    The case of sign-invariant frames includes $\mathcal{F}(\mX)=G$, the Reynolds operator. 
    \end{remark}

\begin{corollary}
Let $\mathcal{F}$ be a sign-invariant frame. For any $c > 0$, any CSQ algorithm that queries from an oracle of concept $f \in \mathcal{H}_{\mathcal{S}, \mathcal{F}}$ and outputs a hypothesis $h$ with $\|f - h\|_{\mathcal{N}} \leq \Theta(1)$ needs at least $\Omega(n^{\Omega(\log n)}) / M(n))$ queries or at least one query with precision $o(n^{-c/2})$.
\end{corollary}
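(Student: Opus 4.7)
The plan is to combine the orthogonality guarantees of \Cref{app:lemma_goel_frame} with the CSQ dimension framework provided by \Cref{lemma:sda}. First, I would extract from \Cref{app:lemma_goel_frame} a pairwise orthogonal family $C = \{\tilde{f}_S : S \in \mathcal{S}\}$ of size $N := |\mathcal{S}| \geq \Omega(n^{\Omega(\log n)}/M(n))$. Using \Cref{cor:lower_bound_goel} (whose application needs $|\mathcal{F}(\mX)|$ to be polynomial in $n$ almost surely), I would further restrict $\mathcal{S}$ to a super-polynomial-sized subset on which $\|\tilde{f}_S\|_{L^2(\mathcal{N})} \geq \Omega(\textup{poly}(n)^{-1})$. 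Rescaling each $\tilde{f}_S$ by its $L^2(\mathcal{N})$ norm yields an orthonormal family whose cardinality is still $\Omega(n^{\Omega(\log n)}/M(n))$, since any polynomial factor lost in the restriction is absorbed into the super-polynomial $n^{\Omega(\log n)}$ term.

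Next, I would compute the statistical dimension of this orthonormal $C$. For any non-empty $C' \subseteq C$, orthonormality immediately yields $\rho(C') = 1/|C'|$. Fix the constant $c > 0$ from the statement and set $\tau := n^{-c}$; then $\rho(C') \leq \tau$ is equivalent to $|C'|/|C| \geq n^c/N$, so
\begin{equation}
\mathrm{SDA}(C, \mathcal{N}, \tau) \geq N/n^c = \Omega(n^{\Omega(\log n)}/M(n)),
\end{equation}
since any fixed polynomial factor $n^c$ is dominated by $n^{\Omega(\log n)}$. Invoking \Cref{lemma:sda} then gives: for sufficiently small constant error, any CSQ algorithm learning $f \in C$ must issue either $\Omega(n^{\Omega(\log n)}/M(n))$ queries or at least one query of precision more than $\sqrt{\tau} = n^{-c/2}$. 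Undoing the rescaling of the hard class introduces at most polynomial factors in the error and precision, which is accommodated by the $o(n^{-c/2})$ slack in the statement (by choosing $c$ slightly larger if needed).

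The primary obstacle I anticipate is establishing the norm lower bound $\|\tilde{f}_S\|_{L^2(\mathcal{N})} = \Omega(\textup{poly}(n)^{-1})$ for a large enough subset of $\mathcal{S}$: frame-averaging over $\mathcal{F}(\mX)$ may in principle induce cancellations among the translates $f_S(g \mX)$, even though each individual $f_S$ is known to be nonvanishing by \Cref{lem:goel_orthogonality}. Controlling these cancellations requires exploiting the sign-invariance of $\mathcal{F}$ together with the assumption that $|\mathcal{F}(\mX)|$ is polynomial in $n$ almost surely, so that Cauchy--Schwarz over the frame loses at most a polynomial factor relative to $\|f_S\|_{L^2(\mathcal{N})}$. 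This is precisely the content of the referenced \Cref{cor:lower_bound_goel}, and once it is in hand the remainder of the argument above is routine.
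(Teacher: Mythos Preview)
Your proposal is correct and follows essentially the same route as the paper: the paper's proof is a one-liner citing Theorem~4.1 and Lemma~2.6 of \citet{goel2020superpolynomial} applied to the orthogonal family furnished by \Cref{app:lemma_goel_frame}, which is exactly the ``orthogonal family $\Rightarrow$ large CSQ dimension $\Rightarrow$ lower bound'' argument you spell out via \Cref{lemma:sda}. One minor remark: in the paper, the norm lower bound you invoke through \Cref{cor:lower_bound_goel} is established \emph{after} this corollary as a separate concern (and under the extra hypothesis that $|\mathcal{F}(\mX)|$ is polynomial almost surely), so the corollary itself is stated and proved purely from orthogonality without the normalization step; your inclusion of it makes the result non-vacuous but is not strictly part of the paper's proof of this statement.
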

\begin{corollary}
Let $\mathcal{F}$ be a sign-invariant singleton frame. For any $c > 0$, any CSQ algorithm that queries from an oracle of concept $f \in \mathcal{H}_{\mathcal{S}, \mathcal{F}}$ and outputs a hypothesis $h$ with $\|f - h\|_{\mathcal{N}} \leq \Theta(1)$ needs at least $\Omega(n^{\Omega(\log n)})$ queries or at least one query with precision $o(n^{-c/2})$.
\end{corollary}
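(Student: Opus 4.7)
The plan is to apply the CSQ dimension bound of \Cref{lemma:sda} to the normalized, pairwise-orthogonal family of hard functions furnished by the singleton-frame case of \Cref{app:lemma_goel_frame}. In the singleton regime, that lemma already yields a set $\mathcal{S}$ of size $\Omega(n^{\Omega(\log n)})$ (namely all $\Theta(\log n)$-element subsets of $[n]$) with $\langle \tf_S,\tf_T \rangle_{\mathcal{N}} = 0$ for $S \neq T$. The singleton assumption is precisely what removes the orbit factor $M(n)$ that appears in the general sign-invariant corollary: in the $\vz$-expectation of the orthogonality computation, the characters $\chi_{g^{-1}S}$ and $\chi_{(g')^{-1}T}$ collapse to $\chi_{g_\mX^{-1}S}$ and $\chi_{g_\mX^{-1}T}$, so $S \neq T$ (rather than distinct $G$-orbits) is all that is needed for orthogonality.

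Next, I would invoke \Cref{cor:lower_bound_goel}, which shows that a superpolynomial-sized subfamily of $\{\tf_S\}_{S\in\mathcal{S}}$ has $\|\tf_S\|_{\mathcal{N}} = \Omega(\operatorname{poly}(1/n))$ whenever $|\mathcal{F}(\mX)|$ is polynomial in $n$ almost surely, a hypothesis trivially satisfied here since $|\mathcal{F}(\mX)|=1$ almost surely. An upper bound $\|\tf_S\|_{\mathcal{N}} = \operatorname{poly}(n)$ follows from a standard estimate on the hidden sum defining $f_S$ together with sub-Gaussianity of $\sigma(\langle \vw, \mX_S\rangle/\sqrt{m})$. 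Rescaling each $\tf_S$ to unit $L^2(\mathcal{N})$-norm therefore preserves both orthogonality and the size $\Omega(n^{\Omega(\log n)})$, and affects the allowable error $\|f-h\|_{\mathcal{N}}$ only by an inverse-polynomial factor.

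Finally, for the normalized, pairwise-orthogonal family $C$ of size $|C| = \Omega(n^{\Omega(\log n)})$, every subfamily $C' \subseteq C$ satisfies $\rho(C') = 1/|C'|$, so $\operatorname{SDA}(C,\mathcal{N},\tau) \geq \tau|C|$. Setting $\tau = n^{-c}$ and applying \Cref{lemma:sda} then yields either $\Omega(n^{\Omega(\log n)-c}) = \Omega(n^{\Omega(\log n)})$ queries or at least one query of precision $\sqrt{\tau} = n^{-c/2}$, as claimed; the polynomial rescaling from the normalization step is absorbed by slightly enlarging $c$, which is harmless since the conclusion holds for every $c > 0$.

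The main obstacle is the norm bound in the second step: unlike the Reynolds-operator setting of \Cref{thm:hardness_reynolds}, the singleton frame map $\mX \mapsto g_\mX \mX$ is $|G|$-to-$1$ onto a fundamental domain for the $G$-action, so $\|\tf_S\|_{\mathcal{N}}$ is not obviously comparable to $\|f_S\|_{\mathcal{N}}$. Once that lower bound is in place via \Cref{cor:lower_bound_goel}, the rest of the argument is a direct SDA application mirroring the endings of \Cref{thm:gnn_lower_bound_proof} and of the Reynolds-operator version \Cref{thm:hardness_reynolds}.
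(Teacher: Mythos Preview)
Your proposal is correct and follows essentially the same approach as the paper, which simply cites Theorem 4.1 and Lemma 2.6 of \citet{goel2020superpolynomial} applied to \Cref{app:lemma_goel_frame}; you have just unpacked that citation into the explicit SDA computation via \Cref{lemma:sda}. The paper treats the norm lower bound (your second step) as a separate matter handled in \Cref{cor:lower_bound_goel}, rather than folding it into the proof of the corollary itself, but this is a presentational rather than substantive difference.
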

\begin{proof}
    These corollaries are immediate applications of Theorem 4.1 and Lemma 2.6 from \citet{goel2020superpolynomial} to \Cref{app:lemma_goel_frame}.
\end{proof}

It remains to show that the norms of the hard functions in $\mathcal{H}_{\mathcal{S}, \mathcal{F}}$ do not vanish. We show this below for sign-invariance, almost surely polynomial frames.

\begin{lemma}
    Let $\mathcal{H}_{\mathcal{S}, \mathcal{F}}$ be the hard functions derived in \Cref{app:lemma_goel_frame} from the class of hard functions $\mathcal{H}_{\mathcal{S}}$ in \Cref{eqn:new_goel_hard_function} (which comes from \citet{goel2020superpolynomial}, but with an extra input dimension). Let $\left\langle \E[ f_S^2]\right\rangle_{\operatorname{Unif}(\mathcal{H}_{\mathcal{S}})}$ denote the average squared norm of functions uniformly drawn from $\mathcal{H}_{\mathcal{S}}$ and let $M_{\mathcal{H}_{\mathcal{S}, \mathcal{F}}} = \max_{\tilde{f}_S \in \mathcal{H}_{\mathcal{S}, \mathcal{F}}} \E [\tilde{f}_S(\mX)^2 ] $ denote the maximum squared norm of the hard functions in $\mathcal{H}_{\mathcal{S}, \mathcal{F}}$. If $M_{\mathcal{H}_{\mathcal{S}, \mathcal{F}}}/\left\langle \E[ f_S^2]\right\rangle_{\operatorname{Unif}(\mathcal{H}_{\mathcal{S}})} = O(\operatorname{poly}(n))$, then there exists a constant $c>0$ and subset $\mathcal{C} \subset \mathcal{H}_{\mathcal{S}, \mathcal{F}}$ of size $|\mathcal{C}| = \Omega(|\mathcal{H}_{\mathcal{S}, \mathcal{F}}| / \operatorname{poly}(n))$ many functions whose norms are at least $c\left\langle \E[ f_S^2]\right\rangle_{\operatorname{Unif}(\mathcal{H}_{\mathcal{S}})}$.
\end{lemma}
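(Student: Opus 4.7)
The plan is a reverse-Markov argument driven by a lower bound on the \emph{average} squared norm of $\tilde{f}_S$. Write $A := \langle \E[f_S^2]\rangle_{\operatorname{Unif}(\mathcal{H}_{\mathcal{S}})}$ and $M := M_{\mathcal{H}_{\mathcal{S},\mathcal{F}}}$, so the hypothesis gives $M/A = O(\operatorname{poly}(n))$. If I can establish that the average $\frac{1}{|\mathcal{S}|}\sum_{S\in\mathcal{S}}\E[\tilde{f}_S^2]$ is at least $c'A$ for some constant $c' > 0$, then applying reverse Markov to $X_S := \E[\tilde{f}_S^2]$ (which is bounded above by $M$ and has mean at least $c'A$ under uniform $S \in \mathcal{S}$) at threshold $a = (c'/2)A$ yields
\begin{equation*}
\Pr_S\!\left[X_S \geq (c'/2)A\right] \;\geq\; \frac{c'A/2}{M - c'A/2} \;=\; \Omega\!\left(\frac{1}{M/A}\right) \;=\; \Omega\!\left(\frac{1}{\operatorname{poly}(n)}\right),
\end{equation*}
producing the desired $\mathcal{C}$ of size $\Omega(|\mathcal{H}_{\mathcal{S},\mathcal{F}}|/\operatorname{poly}(n))$ with constant $c := c'/2$.

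For the averaged lower bound, I would first replay the sign-flip computation from the proof of \Cref{app:lemma_goel_frame} at the level of a single $S$. Because $\mathcal{F}$ and $\mathcal{N}$ are both sign-invariant and property~1 of \Cref{lem:goel_orthogonality} gives $f_S(g\cdot(\mX\circ\vz)) = \chi_{g^{-1}S}(\vz)\, f_{g^{-1}S}(\mX)$, introducing a uniform $\vz\in\{-1,1\}^n$ and integrating it out kills every pair $(g,g')$ with $g^{-1}S\neq(g')^{-1}S$. Keeping only the diagonal $g=g'$ (and using nonnegativity of the remaining stabilizer terms) yields
\begin{equation*}
\E[\tilde{f}_S^2] \;\geq\; \E_\mX\!\left[\frac{1}{|\mathcal{F}(\mX)|}\sum_{g\in\mathcal{F}(\mX)} f_S(g\mX)^2\right].
\end{equation*}
Averaging over $S\in\mathcal{S}$ and exchanging summation turns the right-hand side into $\E_\mX\!\bigl[\,|\mathcal{F}(\mX)|^{-1}\sum_{g\in\mathcal{F}(\mX)} |\mathcal{S}|^{-1}\sum_{T \in g^{-1}\mathcal{S}}f_T(\mX)^2\bigr]$. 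For each fixed $g$, $g^{-1}\mathcal{S}$ is a set of $|\mathcal{S}|$ distinct size-$k$ subsets each lying in some $G$-orbit of $\mathcal{S}$, and $G$-invariance of $\mathcal{N}$ implies $\E_\mX[f_T^2]=A$ for every such $T$.

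The main obstacle is the final exchange of the outer $\mX$-expectation with the $g$-sum, since $g$ ranges over the $\mX$-dependent set $\mathcal{F}(\mX)$. I would handle this by rewriting $|\mathcal{F}(\mX)|^{-1}\sum_{g\in\mathcal{F}(\mX)}(\cdot) = |\mathcal{F}(\mX)|^{-1}\sum_{g\in G}\mathbb{1}[g\in\mathcal{F}(\mX)]\,(\cdot)$ and, for each fixed $g$, performing the change of variable $\mY := g\mX$ (which preserves $\mathcal{N}$), whereupon the indicator becomes $\mathbb{1}[g^2\in\mathcal{F}(\mY)]$ via the frame equivariance $\mathcal{F}(g^{-1}\mY)=g^{-1}\mathcal{F}(\mY)$. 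The resulting expression is proportional to $\E_\mY\!\bigl[(|\mathcal{S}|\,|\mathcal{F}(\mY)|)^{-1}\,|\{g\in G:g^2\in\mathcal{F}(\mY)\}|\,\sum_T f_T(\mY)^2\bigr]$, which equals $|H_S|A\geq A$ for the Reynolds frame $\mathcal{F}\equiv G$ (recovering the clean identity $\E[\tilde{f}_S^2]=|H_S|\E[f_S^2]$) and which I expect to remain $\Omega(A)$ whenever the frame is sufficiently spread, e.g.\ when the almost-sure polynomial upper bound on $|\mathcal{F}(\mY)|$ is matched by a comparable lower bound on the mass of $\{g:g^2\in\mathcal{F}(\mY)\}$. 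Establishing that $c'=\Omega(1)$ (rather than merely $\Omega(1/\operatorname{poly}(n))$, which would still give the conclusion but with a non-constant $c$) is where the bulk of the work lies.
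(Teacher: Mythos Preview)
Your overall strategy matches the paper's: lower bound the average of $\E[\tilde f_S^2]$ over $S$ by the quantity $A$, then apply a second-moment type inequality. The paper uses Paley--Zygmund rather than reverse Markov, but since $\E[Z^2]\le M^2$ the two yield the same $\Omega(1/\operatorname{poly}(n))$ fraction.

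The gap is in what you call ``the main obstacle.'' It is not an obstacle at all, and your proposed change-of-variable workaround is unnecessary. The paper works with $\mathcal{S}$ equal to \emph{all} size-$k$ subsets of $[n]$ (note $|\mathcal{H}_{\mathcal{S},\mathcal{F}}|=\binom{n}{k}$ in the proof). Using property~1 of \Cref{lem:goel_orthogonality}, for every $g\in G$ and every $\mX$ one has the \emph{pointwise} identity
\[
\sum_{S\subseteq_k[n]} f_S(g\cdot\mX)^2 \;=\; \sum_{S\subseteq_k[n]} f_{g^{-1}S}(\mX)^2 \;=\; \sum_{S'\subseteq_k[n]} f_{S'}(\mX)^2,
\]
because $S\mapsto g^{-1}S$ is a bijection on $\binom{[n]}{k}$. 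Hence the inner sum is already $g$-independent \emph{before} taking the expectation over $\mX$, so the average $\frac{1}{|\mathcal{F}(\mX)|}\sum_{g\in\mathcal{F}(\mX)}(\cdot)$ collapses trivially regardless of how $\mathcal{F}(\mX)$ depends on $\mX$. This gives $c'=1$ exactly, with no frame-spreadness assumption and no change of variable. Your attempt to push $\E_{\mX}$ past the $\mX$-dependent frame, and the ensuing discussion of $\{g:g^2\in\mathcal{F}(\mY)\}$, can be deleted entirely.
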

\begin{proof}
    We use a second moment method argument to prove the above statement. Let the random variable $Z$ denote the squared norm $\E[(\tilde{f}_S(\mX))^2]$ of a function drawn from the distribution $\operatorname{Unif}(\mathcal{H}_{\mathcal{S}, \mathcal{F}})$ which is uniform over all functions in $\mathcal{H}_{\mathcal{S}, \mathcal{F}}$. The Paley-Zygmund inequality states that for any given $\theta \in [0,1]$
    \begin{equation}
        \mathbb{P}( Z > \theta\E[Z] )
        \ge (1-\theta)^2 \frac{\E[Z]^2}{\E[Z^2]}.
    \end{equation}
    The probability above calculates the portion of functions in $\operatorname{Unif}(\mathcal{H}_{\mathcal{S}, \mathcal{F}})$ which have non-vanishing norm.
    
    Now, we calculate the first two moments of $Z$ over the $|\mathcal{H}_{\mathcal{S}, \mathcal{F}}|={n \choose k}$ functions in the class where $k=\Theta(\log n)$ as in the construction in \cite{goel2020superpolynomial}. For the first moment,
    \begin{align}
            \E_{\mX \sim\mathcal{N}}[Z] &= \frac{1}{|\mathcal{H}_{\mathcal{S}, \mathcal{F}}|} \sum_{\tilde{f}_S \in \mathcal{H}_{\mathcal{S}, \mathcal{F}}} \E_{\mX \sim\mathcal{N}}[\tilde{f}_S(\mX)^2] \\
            &= {n \choose k}^{-1} \E_{\mX \sim\mathcal{N}}\left[ \sum_{S \subseteq_k [n]}\frac{1}{|\mathcal{F}(\mX)|} \sum_{g, g' \in \mathcal{F}(\mX)} f_S(g \cdot \mX) f_S(g' \cdot \mX) \right] \\
            &= {n \choose k}^{-1} \E_{\mX \sim\mathcal{N}}\left[\frac{1}{|\mathcal{F}(\mX)|} \sum_{S \subseteq_k [n]}\left( \sum_{g \in \mathcal{F}(\mX)}  f_S (\mX)^2 + \sum_{g\neq g' \in \mathcal{F}(\mX)} f_S(g \cdot \mX) f_S(g' \cdot \mX) \right)\right] \\
            &\geq {n \choose k}^{-1} \E_{\mX \sim\mathcal{N}}\left[ \frac{1}{|\mathcal{F}(\mX)|}\sum_{g \in \mathcal{F}(\mX)}\sum_{S \subseteq_k [n]} f_S(g \cdot \mX)^2  \right] \nonumber\\
            &\qquad \qquad  - {n \choose k}^{-1}\left| \sum_{S \subseteq_k [n]}\E_{\mX \sim\mathcal{N}} \left[ \frac{1}{|\mathcal{F}(\mX)|}\sum_{\substack{g, g' \in \mathcal{F}(\mX) \\ g^{-1}S \neq (g')^{-1}S}} f_{g^{-1}S}(\mX) f_{(g')^{-1}S}(\mX)\right]\right|\\
            &= {n \choose k}^{-1} \E_{\mX \sim\mathcal{N}}\left[ \frac{1}{|\mathcal{F}(\mX)|}\sum_{g \in \mathcal{F}(\mX)}\sum_{S \subseteq_k [n]} f_S(g \cdot \mX)^2  \right] \nonumber \\
            &\qquad \qquad  - {n \choose k}^{-1} \left|\sum_{S \subseteq_k [n]} \sum_{\substack{g, g' \in G \\ g^{-1}S \neq (g')^{-1}S}}\E_{\mX \sim\mathcal{N}} \left[ \frac{\delta_{g,g' \in \mathcal{F}(\mX)}}{|\mathcal{F}(\mX)|} f_{g^{-1}S}(\mX) f_{(g')^{-1}S}(\mX)\right]\right|,
    \end{align}
    where in the third line, we had the inequality $\geq$ since we ignore $g \neq g'$ such that $g^{-1}S = (g')^{-1}S$ (the summands in this case is $f_{g^{-1}S}(\mX)^2 \geq 0$ so ignoring them gives a lower bound) and then use reverse triangle inequality $|a + b| \geq |a| - |b|$. 

    The first term is 
    \begin{equation}
         {n \choose k}^{-1} \E_{\mX \sim\mathcal{N}}\left[ \frac{1}{|\mathcal{F}(\mX)|}\sum_{g \in \mathcal{F}(\mX)}\sum_{S \subseteq_k [n]} f_S(g \cdot \mX)^2  \right] = {n \choose k}^{-1} \E_{\mX \sim\mathcal{N}}\left[\sum_{S \subseteq_k [n]} f_S(\mX)^2  \right] = \left\langle \E[ f_S^2]\right\rangle_{\operatorname{Unif}(\mathcal{H}_{\mathcal{S}})}.
    \end{equation}

    Now we show that the second term is $0$. For a fix $S \subset_k [n], g, g' \in G$ such that $g^{-1}S \neq (g')^{-1}S$, we have:
    \begin{align*}
         &\E_{\mX \sim\mathcal{N}} \left[\frac{\delta_{g,g' \in \mathcal{F}(\mX)}}{|\mathcal{F}(\mX)|} f_{g^{-1}S}(\mX) f_{(g')^{-1}S}(\mX)\right]\\
        =&\E_{\vz \sim \textup{Unif}(\{-1,1\}^n)}\left[\E_{\mX \sim\mathcal{N}} \left[\frac{\delta_{g,g' \in \mathcal{F}(\mX \circ \vz)}}{|\mathcal{F}(\mX \circ \vz)|} f_{g^{-1}S}(\mX \circ \vz) f_{(g')^{-1}S}(\mX \circ \vz)\right]\right]\\
        = &\E_{\mX \sim\mathcal{N}} \left[\frac{\delta_{g,g' \in \mathcal{F}(\mX)}}{|\mathcal{F}(\mX)|} f_{g^{-1}S}(\mX) f_{(g')^{-1}S}(\mX) \E_{\vz} \left[\chi_{g^{-1}S}(\vz) \chi_{(g')^{-1}S}(\vz)\right]\right]\\
        = & 0,
    \end{align*} where we have used the fact that the frame $\mathcal{F}$ is sign-invariant and the decomposition result from Equation (1) of \citep{goel2020superpolynomial}. Thus the second term is $0$.

    
    For the second moment, we have
    \begin{equation}
        \begin{split}
            \E[Z^2] &= \frac{1}{|\mathcal{H}_{\mathcal{S}, \mathcal{F}}|} \sum_{\tilde{f}_S \in \mathcal{H}_{\mathcal{S}, \mathcal{F}}} \E[\tilde{f}_S(\mX)^2]^2 \\
            &\leq \frac{1}{|\mathcal{H}_{\mathcal{S}, \mathcal{F}}|} \sum_{\tilde{f}_S \in \mathcal{H}_{\mathcal{S}, \mathcal{F}}} \max_{\tilde{f}_S \in \mathcal{H}_{\mathcal{S}, \mathcal{F}}} \E [\tilde{f}_S(\mX)^2 ]^2 \\
            &=M_{\mathcal{H}_{\mathcal{S}, \mathcal{F}}}^2.
        \end{split}
    \end{equation}
    Thus, applying Paley-Zygmund, we have
    \begin{equation}
        \mathbb{P}\left( Z > \theta\left\langle \E[ f_S^2]\right\rangle_{\operatorname{Unif}(\mathcal{H}_{\mathcal{S}})} \right)
        \ge (1-\theta)^2 \frac{\E[Z]^2}{\E[Z^2]} \ge (1-\theta)^2 \left\langle \E[ f_S^2]\right\rangle_{\operatorname{Unif}(\mathcal{H}_{\mathcal{S}})}^2 M_{\mathcal{H}_{\mathcal{S}, \mathcal{F}}}^{-2}.
    \end{equation}
    Setting $c=(1-\theta)^2$ completes the proof.
\end{proof}

\begin{corollary}\label{cor:lower_bound_goel}
    Assuming that the activation is ReLU or sigmoid, the subset of non-vanishing functions in $\mathcal{H}_{\mathcal{S}, \mathcal{F}}$ is superpolynomial in $n$.
\end{corollary}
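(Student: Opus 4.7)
The plan is to invoke the preceding lemma, which reduces the claim to showing that the ratio $M_{\mathcal{H}_{\mathcal{S}, \mathcal{F}}} / \langle \E[f_S^2]\rangle_{\operatorname{Unif}(\mathcal{H}_{\mathcal{S}})}$ is polynomial in $n$, and that $|\mathcal{H}_{\mathcal{S}, \mathcal{F}}|$ is superpolynomial. The latter follows directly from Lemma~\ref{app:lemma_goel_frame}. The former splits into a lower bound on the average denominator and an upper bound on the maximum numerator, both polynomial in $n$.

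For the denominator, I would use Lemma~\ref{lem:goel_orthogonality}, which states that $\|f_S\|_{L^2(\mathcal{N})} \geq \Omega(e^{-\Theta(k)})$ for every $S$. Since the construction sets $k = \Theta(\log n)$, this gives $\E[f_S^2] \geq \Omega(n^{-c_0})$ for some constant $c_0$, and in particular $\langle \E[f_S^2] \rangle_{\operatorname{Unif}(\mathcal{H}_{\mathcal{S}})} \geq \Omega(n^{-c_0})$.

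For the numerator, I would start from Cauchy--Schwarz applied to $\tilde f_S(\mX) = |\mathcal{F}(\mX)|^{-1/2} \sum_{g \in \mathcal{F}(\mX)} f_S(g\mX)$ to obtain
\begin{equation}
    \tilde f_S(\mX)^2 \;\leq\; \sum_{g \in \mathcal{F}(\mX)} f_S(g\mX)^2 \;\leq\; |\mathcal{F}(\mX)| \cdot \max_{g \in G} f_S(g\mX)^2.
\end{equation}
Under the standing assumption that $|\mathcal{F}(\mX)|$ is polynomial in $n$ almost surely, it remains to control $\E_{\mX \sim \mathcal{N}}[\max_{g \in G} f_S(g\mX)^2]$. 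This is where the activation-specific argument enters and is the step I expect to require the most care. For the sigmoid, each $|f_S(g\mX)|$ is bounded uniformly by $d \cdot 2^k = \mathrm{poly}(n)$ since $|\sigma| \leq 1$ and the sum ranges over $2^k$ sign patterns. For ReLU, $|f_S(g\mX)| \leq 2^k d\sqrt{k}\,\|g\mX\|_\infty = 2^k d \sqrt{k}\,\|\mX\|_\infty$ using $|\sigma(x)| \leq |x|$ and row-permutation invariance of $\|\cdot\|_\infty$; taking the expectation and using the standard Gaussian maximum bound $\E[\|\mX\|_\infty^2] = O(\log(nd))$ yields a polynomial bound uniformly in $g$. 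Combining gives $M_{\mathcal{H}_{\mathcal{S}, \mathcal{F}}} \leq \mathrm{poly}(n)$.

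Putting the two bounds together, the ratio is $\mathrm{poly}(n)$, so the preceding lemma produces a subset $\mathcal{C} \subset \mathcal{H}_{\mathcal{S}, \mathcal{F}}$ of size $\Omega(|\mathcal{H}_{\mathcal{S}, \mathcal{F}}|/\mathrm{poly}(n))$ with norms at least a constant times $\langle \E[f_S^2]\rangle = \Omega(n^{-c_0})$, hence strictly non-vanishing. Since Lemma~\ref{app:lemma_goel_frame} guarantees $|\mathcal{H}_{\mathcal{S}, \mathcal{F}}| = n^{\Omega(\log n)}/M(n)$ with $M(n)$ polynomial, $|\mathcal{C}|$ remains $n^{\Omega(\log n)}$. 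The main subtlety is the ReLU case, since ReLU is unbounded and one must carefully bound $\max_{g\in G} f_S(g\mX)^2$ even when $|G|$ is superpolynomial; the trick is that the $g$-dependence drops out of $\|g\mX\|_\infty$, replacing a union bound over $G$ with a single Gaussian max bound.
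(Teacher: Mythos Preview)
Your proposal is correct and follows essentially the same approach as the paper: lower-bound the average norm via the Goel et al.\ result (Lemma~\ref{lem:goel_orthogonality}), upper-bound the maximum norm via Cauchy--Schwarz, $\sigma(x)^2\le x^2$, and the Gaussian-maximum bound $\E[\|\mX\|_\infty^2]=O(\log(nd))$, then invoke the preceding Paley--Zygmund lemma. Your version is slightly more streamlined in that you exploit $\|g\mX\|_\infty=\|\mX\|_\infty$ to eliminate the supremum over $g$ in one step and you treat the sigmoid case explicitly, whereas the paper only writes out the ReLU computation.
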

\begin{proof}
    It suffices to show that $M_{\mathcal{H}_{\mathcal{S}, \mathcal{F}}} / \left\langle \E[ f_S^2]\right\rangle_{\operatorname{Unif}(\mathcal{H}_{\mathcal{S}})} $ is $O(\textup{poly}(n))$.
    
    Theorem 3.8 of \cite{goel2020superpolynomial} guarantees that $\left\langle \E[ f_S^2]\right\rangle_{\operatorname{Unif}(\mathcal{H}_{\mathcal{S}})} \geq \Omega\left( e^{-\Theta(k)} \right) = \Omega\left( \operatorname{poly}(n)^{-1} \right)$ (since $k = \Theta(\log n)$). 

    To upper bound $M_{\mathcal{H}_{\mathcal{S}, \mathcal{F}}}$, we uniformly bound, for each $\tilde{f}_S \in \mathcal{H}_{\mathcal{S},\mathcal{F}}$,
    \begin{align}
        \E_{\mX \sim \mathcal{N}}[\tilde{f}_S(\mX)^2] &= \E_{\mX\sim \mathcal{N}}\left[\left(\frac{1}{\sqrt{|\mathcal{F}(\mX)|}} \sum_{g \in \mathcal{F}(\vx)} \sum_{j = 1}^d \sum_{w \in \{-1,1\}^k} \chi(w) \sigma \left(\frac{\langle w, (g\cdot \mX_{:,j})_S \rangle}{\sqrt{k}}\right) \right)^2\right] \\
        &\leq \E_{\mX\sim \mathcal{N}}\left[\sqrt{|\mathcal{F}(\mX)|} 2^k \sum_{g \in \mathcal{F}(\mX)} \sum_{j = 1}^d \sum_{w \in \{-1,1\}^k}\left(\chi(w) \sigma \left( \frac{\langle w, (g\cdot \mX_{:,j})_S \rangle}{\sqrt{k}}\right)\right)^2\right] \\
        &\leq \E_{\mX\sim \mathcal{N}}\left[\sqrt{|\mathcal{F}(\mX)|} 2^k \sum_{g \in \mathcal{F}(\mX)}\sum_{j = 1}^d \sum_{w \in \{-1,1\}^k}\frac{\left(\langle w, (g\cdot \mX_{:,j})_S \rangle\right)^2}{k}\right]\\
        &\leq \E_{\mX\sim \mathcal{N}}\left[\sqrt{|\mathcal{F}(\mX)|} 2^k \sum_{g \in \mathcal{F}(\mX)} \sum_{j = 1}^d\sum_{w \in \{-1,1\}^k} \sum_{s \in S} (g\cdot \mX_{:,j})_s^2\right]\\
        &\leq \E_{\mX\sim \mathcal{N}}\left[|\mathcal{F}(\mX)|^{3/2} 2^{2k} dk \vx_{[1]}^2\right] \leq \textup{poly}(n) \E_{\mX}[\vx_{[1]}^2]
    \end{align}
    where $\vx_{[1]}$ is the largest entry in absolute value of $\mX$. In the above, we use Cauchy-Schwarz inequality for the second and fourth line, while the third line uses the fact that $\textup{ReLU}(x)^2 \leq x^2$ for all $x \in \mathbb{R}$. Since with probability $1$, $|\mathcal{F}(\mX)|$ is at most polynomial in $n$ by our assumption, it suffices to show that the order  (either largest or smallest) statistic $\vx_{[1]}$ has small moment when $\mX$ is a iid Gaussian random vector of length $n\times d$. 

    Bounding this moment is a classical study in concentration inequalities. By an inequality of Talagrand \citep{chatterjee2014superconcentration,talagrand1994russo}, we know that $\E_{\vx}[\vx_{[1]}^2]  - (\E_{\vx}[\vx_{[1]}])^2 \leq C/\log(nd)$ for some constant $C$. The first moment is also bounded by $O(\sqrt{\log (nd)})$ and thus $\E_{\vx}[\vx_{[1]}^2] \leq O(\log (nd))$.

    Thus $M_{\mathcal{H}_{\mathcal{S}, \mathcal{F}}} = O(\operatorname{poly}(n))$ and our conclusions follow.
\end{proof}

\subsection{Proof of extra tools}
\label{app:proof_extra_stuff_frame}

\begin{lemma}[Sub-Gaussian concentration for inner-product with permuted vector]\label{lemma:concentration_of_permuted_gaussian_vectors}
    Let $G \leq S_n$ with at least $2$ elements and fix arbitrary $g \neq g' \in G$. Then for any $c \in (0,1/2)$:
    \begin{equation}
        \Pr_{\vx} \left[\left|\frac{\langle g \vx,  g' \vx \rangle}{\|\vx\|} - \mu_{g,g'}\right| \geq \Omega(n^{c})\right] \leq \exp(-\Omega(n^{2c})),
    \end{equation}
    for some $\mu_{g,g'} \in \left[\frac{F_{g,g'}}{\sqrt{n+1}},  \frac{F_{g,g'}}{\sqrt{n}}\right]$, i.e. $\mu_{g,g'} = \Theta(F_{g,g'} n^{-1/2}) $, and where $F_{g,g'}$ is the number of fixed points in $g^{-1}g'$ (i.e. the number of cycles of length $1$ in its cycle representation). 
\end{lemma}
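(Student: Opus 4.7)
The plan is to work with $\vx \sim \mathcal{N}(\bm 0, \mI_n)$ and let $h = g^{-1}g'$, so that $\langle g\vx, g'\vx\rangle = \vx^\top \mP_h \vx$ where $\mP_h$ is the permutation matrix of $h$, with $\mathrm{tr}(\mP_h) = F_{g,g'}$, $\|\mP_h\|_F = \sqrt{n}$, and $\|\mP_h\|_{\mathrm{op}} = 1$. I would then reduce everything to applying Hanson--Wright to $\vx^\top \mP_h \vx$ and Gaussian Lipschitz concentration to $\|\vx\|$, and combining the two deviations multiplicatively.

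First I would pin down $\mu_{g,g'}$. Decomposing $\vx = R\vu$ with $R = \|\vx\|$ and $\vu \sim \mathrm{Unif}(S^{n-1})$ independent of $R$, the target becomes $R\,\vu^\top \mP_h \vu$. Using $\mathbb{E}[u_iu_j] = \delta_{ij}/n$ gives $\mathbb{E}[\vu^\top \mP_h \vu] = F_{g,g'}/n$, hence $\mu_{g,g'} = F_{g,g'}\,\mathbb{E}[R]/n$. Jensen gives $\mathbb{E}[R] \leq \sqrt{n}$, which yields the upper endpoint. For the lower endpoint, the identity $\mathbb{E}[R] = \sqrt{2}\,\Gamma((n+1)/2)/\Gamma(n/2)$ together with the Gautschi-type bound $\Gamma((n+1)/2)/\Gamma(n/2) \geq \sqrt{(n-1/2)/2}$ gives $\mathbb{E}[R] \geq \sqrt{n-1/2}$, and a quick algebraic check $(n-1/2)(n+1) \geq n^2$ for $n\geq 1$ delivers $\mathbb{E}[R] \geq n/\sqrt{n+1}$, placing $\mu_{g,g'}$ in the stated interval.

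Next I would collect two concentration estimates, each valid with probability at least $1 - 2\exp(-\Omega(n^{2c}))$. (a) \emph{Norm}: since $\vx \mapsto \|\vx\|$ is $1$-Lipschitz, Gaussian concentration gives $|R - \mathbb{E}[R]| \leq O(n^c)$. (b) \emph{Quadratic form}: Hanson--Wright applied to $\mP_h$ with $\|\mP_h\|_F^2 = n$ and $\|\mP_h\|_{\mathrm{op}} = 1$, taking $t = n^{1/2+c}$ so that $\min(t^2/n, t) = n^{2c}$, yields $|\vx^\top \mP_h \vx - F_{g,g'}| \leq O(n^{1/2+c})$; applying Hanson--Wright to $\mI_n$ similarly gives $|\|\vx\|^2 - n| \leq O(n^{1/2+c})$, and dividing produces $|\vu^\top \mP_h \vu - F_{g,g'}/n| \leq O(n^{c-1/2})$.

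Finally I would combine them via the split
\begin{equation}
R\,\vu^\top \mP_h \vu \;-\; \mathbb{E}[R]\tfrac{F_{g,g'}}{n}
\;=\; (R - \mathbb{E}[R])\,\vu^\top \mP_h \vu \;+\; \mathbb{E}[R]\Bigl(\vu^\top \mP_h \vu - \tfrac{F_{g,g'}}{n}\Bigr),
\end{equation}
bounding the first term by $n^c \cdot \|\mP_h\|_{\mathrm{op}}\|\vu\|^2 = n^c$ and the second by $\sqrt{n}\cdot O(n^{c-1/2}) = O(n^c)$, and taking a union bound over the two concentration events. The main obstacle I expect is purely bookkeeping: arranging the Hanson--Wright constants so that the stated $\exp(-\Omega(n^{2c}))$ tail persists after the union bound and the ratio decomposition, and verifying the slightly fussy Gamma-function inequality needed to squeeze $\mu_{g,g'}$ into the tight window $[F_{g,g'}/\sqrt{n+1},\,F_{g,g'}/\sqrt{n}]$ rather than a loose $\Theta(F_{g,g'}/\sqrt{n})$ range.
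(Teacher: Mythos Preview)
Your proposal is correct, and the computation of $\mu_{g,g'}$ matches the paper's exactly (polar decomposition, symmetry to get $\mathbb{E}[\vu^\top\mP_h\vu]=F_{g,g'}/n$, then the Gamma-ratio bounds on $\mathbb{E}[\|\vx\|]$). The concentration step, however, is done differently. The paper observes that the entire map $f(\vx)=\langle g\vx,g'\vx\rangle/\|\vx\|$ is itself Lipschitz on $\mathbb{R}^n$: computing $\nabla f(\vx)=\|\vx\|^{-1}(\mP_h\vx+\mP_{h^{-1}}\vx)-\|\vx\|^{-3}(\vx^\top\mP_h\vx)\vx$ and using $\|\mP_h\vx\|=\|\vx\|$, $|\vx^\top\mP_h\vx|\le\|\vx\|^2$ gives $\|\nabla f\|\le 3$ pointwise, so a single application of Gaussian Lipschitz concentration yields the tail bound directly with mean $\mathbb{E}[f(\vx)]=\mu_{g,g'}$. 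Your route via Hanson--Wright on $\vx^\top\mP_h\vx$ and $\|\vx\|^2$ separately, then recombining through the split $(R-\mathbb{E}[R])\vu^\top\mP_h\vu+\mathbb{E}[R](\vu^\top\mP_h\vu-F_{g,g'}/n)$, is perfectly valid and gives the same $\exp(-\Omega(n^{2c}))$ tail; it trades the paper's one-line gradient computation for a more modular argument using two off-the-shelf concentration inequalities. The paper's version is shorter and avoids the ratio bookkeeping you flag as the main obstacle; yours has the advantage that Hanson--Wright is a more commonly invoked hammer and makes the role of $\|\mP_h\|_F^2=n$ and $\|\mP_h\|_{\mathrm{op}}=1$ explicit.
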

\begin{proof}
Let    \begin{equation}
    f(\vx) = \frac{\langle g \vx,  g' \vx \rangle}{\|\vx\|},
\end{equation}
where $\vx \sim \mathcal{N}(0,\mI_n)$.

Note that
\begin{equation}
    \nabla f(\vx) = \frac{1}{\|\vx\|}(\vx_h + \vx_{h^{-1}}) - \frac{1}{\|\vx\|}\frac{\langle g \vx,  g' \vx \rangle}{\langle \vx,  \vx \rangle} \vx,
\end{equation}
where $\vx_h$ consists of $\vx$ with entries reordered by $h=g^{-1}g'$, i.e. $[\vx_h]_i = [\vx]_{h^{-1}(i)}$. Via triangle inequality,
\begin{equation}
    \|\nabla f(\vx)\| \leq  \frac{3}{\|\vx\|} \|\vx\| \leq 3 .
\end{equation}
Thus, by Gaussian Lipschitz concentration (Lemma 2.2 of \citep{pisier1986lipschitzgaussian}), 
we have that:
\begin{equation}
        \Pr_{\vx} \left[\left|\frac{\langle g \vx,  g' \vx \rangle}{\|\vx\|} - \E f(\vx)\right| \geq \Omega(n^{c})\right] \leq \exp(-\Omega(n^{2c})).
\end{equation}

Finally, we compute $\mu := \E_{\vx} f(\vx)$. To do this, note that for every $i \neq j \in [n]$,
\begin{equation}
    \E_{\vx} \frac{\vx_i \vx_j}{\|\vx\|_2} = \E_{\vx}\frac{(-\vx_i)\vx_j}{\|\vx\|_2} = 0,
\end{equation} by a change of variable formula $\vx \mapsto (\vx_1, \ldots, \vx_{i-1}, -\vx_i, \vx_{i+1}, \ldots, \vx_n)$. 
\end{proof}

At the same time, for every $i \neq j \in [n]$,
\begin{equation}
    \E_{\vx} \frac{\vx_i^2}{\|\vx\|_2} = \E_{\vx}\frac{\vx_j^2}{\|\vx\|_2} = \frac{1}{n} \sum_{k = 1}^n \E_{\vx} \frac{\vx_k^2}{\|\vx\|_2} = \frac{1}{n}\E_{\vx} \|\vx\|_2,
\end{equation} 
by a change of variable that swaps $\vx_i$ and $\vx_j$ using a permutation Jacobian matrix with determinant $\pm 1$. 

Thus $\mu = \frac{F_{g,g'}}{n} \E_{\vx} \|\vx\|_2$, where $F_{g,g'}$ is the number of fixed points of $g^{-1}g'$. We appeal to elementary analysis of Gaussian vectors (for instance, \citet{chandrasekaran2012inverse})  to obtain:
\begin{equation}
    \E_{\vx} \|\vx\|_2 = \sqrt{2} \frac{\Gamma((n+1)/2)}{\Gamma(n/2)} \in [n/\sqrt{n+1}, \sqrt{n}],
\end{equation}
where $\Gamma$ is Euler's Gamma function. Thus, we have 
\begin{equation}
    \mu \in \left[\frac{F_{g,g'}}{\sqrt{n+1}},  \frac{F_{g,g'}}{\sqrt{n}}\right]
\end{equation}

\begin{lemma}\label{lemma:concentration_of_permuted_unit_vectors}
    Let $\vx \in \R^n$ be a unit vector iid uniformly distributed over the $n$-dimensional unit sphere. Let $G \leq S_n$ and $g \neq g' \in G$. Let $F_{g,g
     }$ be the number of fixed points of $h := g^{-1}g'$'s action on $[n]$. Then for all $c \in (0,1/2)$ and small $\eps \in (0,1)$:
    \begin{align}
        \Pr_{\vx} \left[\langle g \vx,  g' \vx \rangle - \frac{F_{g,g'}}{n\sqrt{1-\eps}} \geq \Omega(n^{c - 1/2})\right] \leq \exp(-\Omega(n^{2c})),\\
        \Pr_{\vx} \left[\langle g \vx,  g' \vx \rangle - \sqrt{1-\eps}\frac{F_{g,g'}}{n} \leq -\Omega(n^{c - 1/2})\right] \leq \exp(-\Omega(n^{2c})).
    \end{align}
\end{lemma}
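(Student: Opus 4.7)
My plan is to reduce this lemma to the Gaussian-case concentration proved in \Cref{lemma:concentration_of_permuted_gaussian_vectors} using the standard identification of the uniform distribution on the sphere with a normalized Gaussian. Specifically, I would let $\vy \sim \mathcal{N}(\bm 0, \mI_n)$ and define $\vx := \vy/\|\vy\|$, so that $\vx$ is uniform on $S^{n-1}$ and
\begin{equation}
    \langle g\vx, g'\vx\rangle \;=\; \frac{\langle g\vy, g'\vy\rangle}{\|\vy\|^2} \;=\; \frac{1}{\|\vy\|}\cdot \frac{\langle g\vy, g'\vy\rangle}{\|\vy\|}.
\end{equation}
The first factor can be controlled by a standard Gaussian norm concentration result (e.g.\ Laurent--Massart), giving $\|\vy\| \in [\sqrt{n(1-\eta)},\sqrt{n(1+\eta)}]$ with probability at least $1-\exp(-\Omega(n\eta^2))$, for any small $\eta>0$. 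The second factor is exactly the quantity controlled by \Cref{lemma:concentration_of_permuted_gaussian_vectors}, which says that it concentrates around some $\mu_{g,g'} \in [F_{g,g'}/\sqrt{n+1},\, F_{g,g'}/\sqrt{n}]$ with deviation $\Omega(n^c)$ holding only with probability at most $\exp(-\Omega(n^{2c}))$.

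Next I would combine the two bounds via a union bound. On the intersection of the good events (which still has probability $1-\exp(-\Omega(n^{2c}))$ since $n^{2c} \ll n$ for $c<1/2$, so the Gaussian norm term is the smaller source of error), we have
\begin{equation}
    \langle g\vx, g'\vx\rangle \;\le\; \frac{\mu_{g,g'} + O(n^c)}{\sqrt{n(1-\eta)}} \;\le\; \frac{F_{g,g'}/\sqrt{n}}{\sqrt{n(1-\eta)}} + \frac{O(n^c)}{\sqrt{n(1-\eta)}} \;=\; \frac{F_{g,g'}}{n\sqrt{1-\eta}} + O(n^{c-1/2}),
\end{equation}
which yields the upper tail with $\varepsilon := \eta$. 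For the lower tail, I would use $\mu_{g,g'} \ge F_{g,g'}/\sqrt{n+1} \ge F_{g,g'}/\sqrt{n}\cdot(1-O(1/n))$ together with $\|\vy\| \le \sqrt{n(1+\eta)}$, giving
\begin{equation}
    \langle g\vx, g'\vx\rangle \;\ge\; \frac{F_{g,g'}/\sqrt{n} - O(n^c)}{\sqrt{n(1+\eta)}} \;=\; \frac{F_{g,g'}}{n\sqrt{1+\eta}} - O(n^{c-1/2}),
\end{equation}
and then I would absorb the slight asymmetry using $1/\sqrt{1+\eta} \ge \sqrt{1-\eta}$ (since $1/(1+\eta) \ge 1-\eta$) to match the statement's $\sqrt{1-\varepsilon}$ form.

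I do not expect a true obstacle here; the proof is essentially bookkeeping. The only mildly delicate step is matching the constants $\sqrt{1-\varepsilon}$ and $1/\sqrt{1-\varepsilon}$ on both tails, since the Gaussian norm concentration naturally produces $\|\vy\|^{-1}$ bounds of the form $1/\sqrt{n(1\pm\eta)}$ rather than $\sqrt{1\mp\eta}/\sqrt{n}$. Choosing $\eta = \varepsilon$ and invoking the elementary inequality $1/\sqrt{1+\eta} \ge \sqrt{1-\eta}$ for small $\eta$ resolves this. The residual $O(n^{-1})$ slack from the $F_{g,g'}/\sqrt{n+1}$ vs.\ $F_{g,g'}/\sqrt{n}$ discrepancy is dominated by the $O(n^{c-1/2})$ term since $c>0$.
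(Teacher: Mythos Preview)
Your proposal is correct and follows essentially the same approach as the paper: both write $\vx = \vz/\|\vz\|$ for $\vz\sim\mathcal{N}(0,\mI_n)$, invoke \Cref{lemma:concentration_of_permuted_gaussian_vectors} for $\langle g\vz,g'\vz\rangle/\|\vz\|$, apply a standard Gaussian norm concentration for $\|\vz\|$, and union-bound. The only cosmetic difference is that the paper parameterizes the norm concentration interval as $[\sqrt{(1-\eps)n},\sqrt{n/(1-\eps)}]$ (via Barvinok), which directly produces the $\sqrt{1-\eps}$ and $1/\sqrt{1-\eps}$ constants, whereas you use $[\sqrt{n(1-\eta)},\sqrt{n(1+\eta)}]$ and then absorb the asymmetry via $1/\sqrt{1+\eta}\ge\sqrt{1-\eta}$.
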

\begin{proof}
    Recall that an alternative way to sample unit vectors i.i.d. uniformly from the hypersphere is to first sample $\vz \sim \mathcal{N}(0,I_n)$ and then set $\vx := \frac{\vz}{\|\vz\|_2}$. Thus, we study $\frac{\langle g\vz, g' \vz\rangle}{\|\vz\|_2^2}$.

    To do this, note that concentration of $\|\vz\|_2$  is well studied (Proposition 2.2 and Corollary 2.3 of \citet{barvinok2005measure}), for any $0 < \eps < 1$:
    \begin{align}
        \max\left(\Pr_{\vz} \left(\|\vz\|_2 \geq \sqrt{\frac{n}{1-\eps}}\right), \Pr_{\vz} \left(\|\vz\|_2 \leq \sqrt{(1-\eps)n}\right)\right) &\leq \exp(-\eps^2n/4).
    \end{align}

    Concentration of the random variable $\langle g\vz, g' \vz\rangle / \|\vz\|_2$ was studied in \Cref{lemma:concentration_of_permuted_gaussian_vectors} as:
    \begin{equation}
        \Pr_{\vz \sim \mathcal{N}^n}\left[|\langle g\vz, g' \vz\rangle/\|\vz\|_2 - \mu_{g,g'}| \geq \Omega(n^{c + 1/2})\right] \leq \exp(-\Omega(n^{2c})),
    \end{equation} for some $\mu_{g,g'} \in [F_{g,g'}/\sqrt{n +1}, F_{g,g'}/\sqrt{n}]$. Since $\frac{1}{\sqrt{n}} - \frac{1}{\sqrt{n+1}} \in O(n^{-3/2})$ and $F_{g,g'} \leq n$, we can also write:
    \begin{equation}
        \Pr_{\vx} \left[\frac{\langle g \vx,  g' \vx \rangle}{\|\vx\|} -  \frac{F_{g,g'}}{\sqrt{n}} \geq \Omega(n^{c})\right] \leq \exp(-\Omega(n^{2c})),
    \end{equation}  and
    \begin{equation}
        \Pr_{\vx} \left[\frac{\langle g \vx,  g' \vx \rangle}{\|\vx\|} -  \frac{F_{g,g'}}{\sqrt{n}} \leq - \Omega(n^{c})\right] \leq \exp(-\Omega(n^{2c})).
    \end{equation}

    We have by union bound: 
    \begin{align}
        \Pr\left[\frac{\langle g\vz, g' \vz\rangle}{\|\vz\|_2^2} - \frac{F_{g,g'}}{n\sqrt{1-\eps}} >  \Omega(n^{c - 1/2})\right] & \leq \Pr\left[\frac{\langle g\vz, g' \vz\rangle}{\|\vz\|_2} > \frac{F_{g,g'}}{\sqrt{n}} + \Omega(n^{c})\right] + \Pr[\|\vz\|_2 < \sqrt{(1-\eps)n}]\\
        &\leq \exp(-\Omega(n^{2c})) + \exp(-\eps^2n/4) \\
        &\leq \exp(-\Omega(n^{2c})).
    \end{align}

    On the other side,
    \begin{align}
        \Pr\left[\frac{\langle g\vz, g' \vz\rangle}{\|\vz\|_2^2} - \sqrt{1-\eps}\frac{F_{g,g'}}{n} < - \Omega(n^{c - 1/2})\right] & \leq \Pr\left[\frac{\langle g\vz, g' \vz\rangle}{\|\vz\|_2} < \frac{F_{g,g'}}{\sqrt{n}} - \Omega(n^{c})\right] + \Pr\left[\|\vz\|_2 > \sqrt{\frac{n}{1-\eps}}\right]\\
        &\leq \exp(-\Omega(n^{2c})) + \exp(-\eps^2n/4) \\
        &\leq \exp(-\Omega(n^{2c})).
    \end{align}
\end{proof}

\begin{lemma}[Kronecker product of PSD matrices is PSD]\label{lemma:psd_kronecker_prod}
    For some $m,n \in \mathbb{N}$. Fix $\mA \in \R^{m \times m}$ and $\mB \in \R^{n \times n}$ not necessarily symmetric. If for all $\vx \in \R^m, \vx^\top \mA \vx \geq 0$ and for all $\vy \in \R^n, \vy^\top \mB \vy \geq 0$ then for all $\vz \in \R^{mn}, \vz^\top (\mA \otimes \mB) \vz \geq 0$, where $\otimes$ is the Kronecker product. 
\end{lemma}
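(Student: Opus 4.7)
The plan is to decompose both matrices into symmetric and skew-symmetric parts and reduce to the classical case. The hypothesis $\vx^\top \mA \vx \ge 0$ for all $\vx$ depends only on the symmetric part $\mA_s := (\mA + \mA^\top)/2$, since $\vx^\top \mA_{as} \vx \equiv 0$ for the skew-symmetric remainder $\mA_{as} := (\mA - \mA^\top)/2$; so the hypothesis is equivalent to $\mA_s$ being symmetric PSD, and similarly $\mB_s$ is symmetric PSD. Writing $\mA = \mA_s + \mA_{as}$, $\mB = \mB_s + \mB_{as}$ and expanding $\mA \otimes \mB$ bilinearly gives four pieces. Using $(\mX \otimes \mY)^\top = \mX^\top \otimes \mY^\top$, the two cross terms $\mA_s \otimes \mB_{as}$ and $\mA_{as} \otimes \mB_s$ are each skew-symmetric (symmetric tensored with skew-symmetric) and hence contribute nothing to any quadratic form $\vz^\top (\cdot)\vz$. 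Therefore
\[
    \vz^\top (\mA \otimes \mB) \vz \;=\; \vz^\top (\mA_s \otimes \mB_s) \vz \;+\; \vz^\top (\mA_{as} \otimes \mB_{as}) \vz.
\]

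For the first summand, I would use symmetric square roots (or Cholesky factors) to write $\mA_s = L_A L_A^\top$ and $\mB_s = L_B L_B^\top$ with $L_A, L_B$ real. The mixed-product rule then gives $\mA_s \otimes \mB_s = (L_A \otimes L_B)(L_A \otimes L_B)^\top$, which is manifestly PSD, so $\vz^\top (\mA_s \otimes \mB_s) \vz \ge 0$. This handles the symmetric--symmetric contribution with no further work.

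The main obstacle is the residual term $\vz^\top (\mA_{as} \otimes \mB_{as}) \vz$. The Kronecker product of two skew-symmetric matrices is symmetric, but not generally PSD: taking $\mA = \mB = J$ where $J \in \R^{2 \times 2}$ has $J_{12}=1$, $J_{21}=-1$, $J_{ii}=0$ (so $\vx^\top J \vx \equiv 0$ and both hypotheses hold with equality), the choice $\vz = (1,0,0,-1)^\top$ yields $\vz^\top (\mA \otimes \mB) \vz = -2$. So the statement as written requires additional structure beyond the two symmetric-form hypotheses. In the paper's application to $\mC = \mB_g^\top \mB_{g'}$, the skew part of $\mC$ has entries of order $O(n^{c-1/2})$ by \Cref{lemma:concentration_of_permuted_unit_vectors}, while the symmetric part has diagonal bounded away from zero; this is what makes $\mA_s \otimes \mB_s$ quantitatively dominate $\mA_{as} \otimes \mB_{as}$. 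The cleanest resolution sufficient for the invocations in the main proof is to state the lemma only in the symmetric case $\mA = \mA^\top$, $\mB = \mB^\top$, where $\mA_{as} = \mB_{as} = 0$ and the first two paragraphs complete the argument immediately; the asymmetric extension needed elsewhere would require a quantitative bound controlling the indefinite skew--skew term by the PSD symmetric--symmetric term.
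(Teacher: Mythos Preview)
Your counterexample is correct: with $\mA = \mB = J = \begin{pmatrix} 0 & 1 \\ -1 & 0 \end{pmatrix}$ the hypotheses hold (both quadratic forms vanish identically) while $\vz^\top (J \otimes J)\vz = -2$ at $\vz = (1,0,0,-1)^\top$. The lemma as stated is therefore false, and you have located the issue precisely: the skew--skew contribution $\mA_{as}\otimes\mB_{as}$ is symmetric but indefinite and is not controlled by the hypotheses.

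The paper's proof attempts a direct argument: writing $\vz = (\vz^{[1]},\dots,\vz^{[m]})$ with $\vz^{[i]}\in\R^n$, it forms the $m\times m$ matrix $\mC$ with entries $\mC_{i,i'} = (\vz^{[i]})^\top \mB\, \vz^{[i']}$, so that $\vz^\top(\mA\otimes\mB)\vz = \sum_{i,i'}\mA_{i,i'}\mC_{i,i'}$, then asserts ``it is not hard to see that $\mC$ is symmetric'' and spectrally decomposes $\mC$ to finish. That assertion is the gap: $\mC_{i',i} = (\vz^{[i']})^\top \mB\, \vz^{[i]} = (\vz^{[i]})^\top \mB^\top \vz^{[i']}$, which equals $\mC_{i,i'}$ only when $\mB$ is symmetric. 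Your $J$--example makes $\mC$ genuinely skew, and the spectral step collapses.

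Your proposed repair --- restrict to $\mA=\mA^\top$, $\mB=\mB^\top$ --- is exactly what makes both arguments go through (your factorization $\mA_s\otimes\mB_s = (L_A\otimes L_B)(L_A\otimes L_B)^\top$, or the paper's spectral decomposition of $\mC$). You are also right that the paper's downstream use in \Cref{lemma:psd_tensor_power} is applied to $\mC = \mB_g^\top \mB_{g'}$, which is \emph{not} symmetric in general (its off-diagonal entries are the distinct $\eps_1,\eps_2$ from \Cref{lem:extended_projection_set}), so the symmetric lemma alone does not cover that invocation. Your diagnosis --- that what is really needed there is a quantitative statement in which $\lambda_{\min}(\mC_s)$ is bounded away from zero and dominates the $O(n^{c-1/2})$ skew part, so that $\mC_s^{\otimes m}$ absorbs the indefinite contributions from tensor powers involving $\mC_{as}$ --- is the correct way to salvage the application.
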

\begin{proof}
    Write $\vz = (\vz^{[1]},\ldots,\vz^{[m]})$ where $\vz^{[i]} \in \R^n$ for each $i \in [m]$, then we have:
    \begin{align}
       \vz^\top (\mA \otimes \mB) \vz &= \sum_{i,i' = 1}^m \sum_{j,j' = 1}^n \vz^{[i]}_j (\mA_{i,i'}\mB_{j,j'}) \vz^{[i']}_{j'}\\
       &= \sum_{i,i' = 1}^m \mA_{i,i'}\sum_{j,j' = 1}^n \vz^{[i]}_j \mB_{j,j'} \vz^{[i']}_{j'}\\
       &= \sum_{i,i' = 1}^m \mA_{i,i'} \mC_{i,i'},
    \end{align} where $\mC_{i,i'} = \sum_{j,j' = 1}^n \vz^{[i]}_j \mB_{j,j'} \vz^{[i']}_{j'}$. Now it is not hard to see that $\mC$ is symmetric. At the same time, for any $\vw \in \R^m$, 
    \begin{align}
        \vw^\top \mC \vw &= \sum_{i,i' = 1}^n \vw_i \vw_{i'} \sum_{j,j' = 1}^n \vz^{[i]}_j \mB_{j,j'} \vz^{[i']}_{j'} = \sum_{i,i' = 1}^n   \sum_{j,j' = 1}^n (\vw_i\vz^{[i]}_j) \mB_{j,j'} (\vw_{i'}\vz^{[i']}_{j'}).
    \end{align}

    By property of $\mB$, the inner sum is nonnegative and thus $\vw^\top \mC\vw$ is nonnegative. Therefore we conclude that $\mC$ is symmetric positive semi-definite and thus admits a spectral decomposition:
    \begin{equation}
        \mC = \sum_{k = 1}^r \lambda_k \vv^{[k]} (\vv^{[k]})^\top,
    \end{equation} for some set of vectors $\vv^{[k]}$ and nonnegative $\lambda_k \geq 0$, $k \in [r]$ for some $r$.

    Thus, 
    \begin{align}
         \sum_{i,i' = 1}^m \mA_{i,i'} \mC_{i,i'} =  \sum_{i,i' = 1}^m \mA_{i,i'} \sum_{k = 1}^r \lambda_k \vv^{[k]}_i\vv^{[k]}_{i'} =  \sum_{i,i' = 1}^m  \sum_{k = 1}^r \lambda_k \vv^{[k]}_i\mA_{i,i'}\vv^{[k]}_{i'}.
    \end{align}

    By property of $\mA$ and the fact that $\lambda_k$ are nonnegative, the inner sum is nonnegative and we get our conclusion.
\end{proof}
\begin{corollary}[PSD of tensor power]\label{lemma:psd_tensor_power}
    For any $m,n \in \mathbb{N}$, if $\mA \in \R^{m \times m}$ is such that $\vx^\top \mA \vx > 0$ for all $\vx \in \R^m$, then $\mA^{\otimes n}$ satisfies: $\vz^\top \mA^{\otimes n} \vz > 0$ for all $\vz \in \R^{m^n}$
\end{corollary}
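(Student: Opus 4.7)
The plan is to prove this by straightforward induction on $n$, invoking \Cref{lemma:psd_kronecker_prod} at each step. Note that \Cref{lemma:psd_kronecker_prod} does not require its inputs to be symmetric, only that their quadratic forms be nonnegative, which is exactly the condition preserved here (the strict inequality in the corollary statement is most naturally read as $\geq 0$, since for $\vx = \vzero$ the quadratic form vanishes; we proceed under this interpretation).

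\begin{proof}[Proposed proof]
We proceed by induction on $n$. The base case $n=1$ is immediate, since $\mA^{\otimes 1} = \mA$ and the hypothesis on $\mA$ directly gives the conclusion.

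For the inductive step, suppose the claim holds for some $n \geq 1$, so that $\vy^\top \mA^{\otimes n} \vy \geq 0$ for all $\vy \in \R^{m^n}$. Using the standard identity $\mA^{\otimes (n+1)} = \mA \otimes \mA^{\otimes n}$, we apply \Cref{lemma:psd_kronecker_prod} with the two factors taken to be $\mA \in \R^{m \times m}$ (whose quadratic form is nonnegative by assumption) and $\mA^{\otimes n} \in \R^{m^n \times m^n}$ (whose quadratic form is nonnegative by the inductive hypothesis). The lemma then yields $\vz^\top (\mA \otimes \mA^{\otimes n}) \vz \geq 0$ for every $\vz \in \R^{m \cdot m^n} = \R^{m^{n+1}}$, completing the induction.
\end{proof}

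I do not anticipate any genuine obstacle: the argument is a one-line induction once \Cref{lemma:psd_kronecker_prod} is in hand, and the dimensions match automatically because $\mA^{\otimes(n+1)} = \mA \otimes \mA^{\otimes n}$ is an algebraic identity. The only point requiring a small amount of care is that \Cref{lemma:psd_kronecker_prod} is stated for possibly non-symmetric matrices, which is essential here since $\mA$ itself is not assumed symmetric; fortunately the lemma as proved handles this case, so no symmetrization step is needed.
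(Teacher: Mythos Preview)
Your proof is correct and matches the paper's approach exactly: the paper's proof is the single line ``Apply \Cref{lemma:psd_kronecker_prod} $n-1$ times,'' which is precisely the induction you wrote out. Your observation about the strict inequality needing to be read as $\geq 0$ is also apt, since the statement as written cannot hold at $\vz = \vzero$.
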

\begin{proof}
    Apply \Cref{lemma:psd_kronecker_prod} $n -  1$ times.
\end{proof}

\begin{proof}[Proof of \Cref{lem:extended_projection_set}]
    We use Corollary D.3 from \citep{diakonikolas2017gaussian} copied below:
    \begin{lemma}[Corollary D.3 from \citep{diakonikolas2017gaussian}]
        Let $\vx, \vy \in \R^n$ be two unit vectors iid uniformly distributed over the $n$-dimensional unit sphere. Then:
        \begin{equation}
            \Pr_{\vx,\vy}[|\langle\vx, \vy\rangle| \geq \Omega(n^{c - 1/2})] \leq \exp(-\Omega(n^{2c})),
        \end{equation}
        for any $c \in (0, 1/2)$.
    \end{lemma}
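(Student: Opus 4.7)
The plan is to reduce the two-vector concentration claim to a one-dimensional Gaussian concentration problem via two standard reductions, after which the bound follows from elementary sub-Gaussian and chi tail estimates combined by a union bound.

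First, I would use rotational invariance to reduce to controlling a single coordinate of $\vx$. Since the uniform distribution on the unit sphere is invariant under orthogonal transformations and $\vx,\vy$ are independent, for any fixed unit $\vy_0$ the conditional distribution of $\langle \vx, \vy_0 \rangle$ equals that of $\langle \vx, \ve_1 \rangle = \vx_1$, independently of $\vy_0$. Integrating over $\vy$, it therefore suffices to prove $\Pr\!\left[|\vx_1| \geq \Omega(n^{c-1/2})\right] \leq \exp(-\Omega(n^{2c}))$ when $\vx$ is uniform on the unit sphere in $\R^n$.

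Next, I would realize $\vx$ as $\vz/\|\vz\|_2$ with $\vz \sim \mathcal{N}(0,\mI_n)$, which is a standard coupling exploiting the rotational symmetry of the standard Gaussian. This decomposes $\vx_1 = \vz_1/\|\vz\|_2$ into a standard Gaussian numerator and a denominator concentrating tightly around $\sqrt{n}$. I would then invoke two textbook tail bounds: (i) the one-dimensional Gaussian tail $\Pr[|\vz_1| \geq t] \leq 2\exp(-t^2/2)$, and (ii) the norm concentration $\Pr[\|\vz\|_2 \leq \sqrt{n}/2] \leq \exp(-\Omega(n))$, which follows from standard chi-squared concentration or Gaussian Lipschitz concentration applied to $\vz \mapsto \|\vz\|_2$. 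Combining these via a union bound, the event $\{|\vz_1|/\|\vz\|_2 \geq C n^{c-1/2}\}$ is contained in $\{|\vz_1| \geq (C/2) n^c\} \cup \{\|\vz\|_2 \leq \sqrt{n}/2\}$, since on the complement of both events one has $|\vz_1|/\|\vz\|_2 \leq (C/2) n^c / (\sqrt{n}/2) = C n^{c-1/2}$. The first event has probability at most $2\exp(-\Omega(n^{2c}))$ and the second is at most $\exp(-\Omega(n))$, so the total is $\exp(-\Omega(n^{2c}))$, as desired.

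The only real obstacle is careful constant bookkeeping to obtain the precise exponent $n^{2c}$: one must thread the Gaussian tail at scale $n^c$ (which yields exponent $n^{2c}$) so that it dominates the norm-concentration tail, which holds precisely because $c \in (0,1/2)$ keeps $n^{2c} < n$. An alternative direct route uses the exact density of $\vx_1$, proportional to $(1-t^2)^{(n-3)/2}$ on $[-1,1]$, and bounds $\int_{\Omega(n^{c-1/2})}^{1} (1-t^2)^{(n-3)/2}\,dt$ via the inequality $(1-t^2)^{(n-3)/2} \leq \exp(-(n-3)t^2/2)$; this yields the same exponent $\exp(-\Omega(n^{2c}))$ without invoking the Gaussian coupling, and is a good cross-check.
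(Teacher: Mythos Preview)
Your proof is correct. However, the paper does not actually prove this lemma: it is quoted verbatim as Corollary~D.3 from \cite{diakonikolas2017gaussian} and used as a black box inside the proof of \Cref{lem:extended_projection_set}. So there is no ``paper's own proof'' to compare against---you have supplied a self-contained argument where the paper simply cites the result. Your rotational-invariance reduction to a single coordinate followed by the Gaussian coupling $\vx = \vz/\|\vz\|_2$ and a union bound is the standard route and is exactly what one would expect a proof of this fact to look like; the alternative via the explicit density $(1-t^2)^{(n-3)/2}$ is also fine.
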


    Let $g,g' \in F$ be two arbitrary permutations that act on $\R^n$ by permuting the vector. Since the uniform distribution over the $n$-dimenional unit sphere is invariant to permutation of the indices, its pushforward via $g$ and $g'$ is still uniform. Therefore, we have, for any $g, g' \in F$:
    \begin{equation}
        \Pr_{\vx, \vy}[|\langle g \vx, g'  \vy\rangle| \geq \Omega(n^{c - 1/2})] \leq \exp(-\Omega(n^{2c})).
    \end{equation}

    Draw $T = 2^{\Omega(n^c)}/|F|^2$ such iid vectors to form a set $S \subset \R^n$. We have, by union bound:
    \begin{align}
        &\Pr_{\vx_1,\ldots,\vx_T} [\exists i\neq j \in [T], \exists g, g' \in F, |\langle g \vx_i, g'  \vx_j\rangle| \geq \Omega(n^{c - 1/2})]\\
        \leq & \sum_{i, j \in [T]} \sum_{g, g' \in F} \Pr_{\vx_i,\vx_j} [|\langle g \vx_i, g'  \vx_j\rangle| \geq \Omega(n^{c - 1/2})]\\
        \leq & 2^{\Omega(n^c)}  \exp(-\Omega(n^{2c})) \\
        \leq& \exp(-\Omega(n^{2c})). 
    \end{align}

    On the other hand, let $F_{g,g'}$ be the number of fixed points of $g^{-1}g'$ for some $g \neq g' \in F$, we also want to union bound using \Cref{lemma:concentration_of_permuted_unit_vectors}, for any small constant $\eps \in (0,1)$:
    \begin{align}
        &\Pr_{\vx_1,\ldots,\vx_T} \left[\exists i \in [T], \exists g \neq g' \in F, \langle g \vx_i, g'  \vx_i\rangle - \frac{F_{g,g'}}{n\sqrt{1-\eps}} \geq \Omega(n^{c - 1/2})\right]\\
        \leq & \sum_{i \in [T]} \sum_{g \neq g' \in F} \Pr_{\vx_i} \left[\langle g \vx_i, g'  \vx_i\rangle -\frac{F_{g,g'}}{n\sqrt{1-\eps}} \geq \Omega(n^{c - 1/2})\right] \geq \Omega(n^{c - 1/2})]\\
        \leq & 3\cdot 2^{\Omega(n^c)}  \exp(-\Omega(n^{2c})) \\
        \leq& \exp(-\Omega(n^{2c})). 
    \end{align}

    Similarly, 
    \begin{align}
        \Pr_{\vx_1,\ldots,\vx_T} \left[\exists i \in [T], \exists g \neq g' \in F, \langle g \vx_i, g'  \vx_i\rangle - \sqrt{1-\eps}\frac{F_{g,g'}}{n} \leq -\Omega(n^{c - 1/2})\right] \leq \exp(-\Omega(n^{2c}))
    \end{align}

    Therefore, for every small $\eps \in (0,1)$, there exists a set $S \subset \R^n$ of size $T =  2^{\Omega(n^c)}/|F|^2$ such that for every $\vx \neq \vy \in S$, for every $g \neq g' \in F$, we have:
    \begin{align}
        \|g \vx\|_2 &= 1\\
        \max(|\langle g \vx, g  \vy\rangle|, |\langle g \vx, g'  \vy\rangle|) &\leq O(n^{c - 1/2}),\\
        \langle g \vx_i, g'  \vx_i\rangle - \frac{F_{g,g'}}{n\sqrt{1-\eps}}&\leq O(n^{c - 1/2}),\\
        \langle g \vx_i, g'  \vx_i\rangle - \sqrt{1-\eps}\frac{F_{g,g'}}{n} &\geq -O(n^{c-1/2}).
    \end{align}

    Pair elements of $S$ (as column vectors) together arbitrarily to obtain a set $\mathcal{B}'$ of matrices in $\R^{n \times 2}$ of size $T / 2$. Each element of $\mathcal{B}'$ has two columns, each one of these columns is a \emph{distinct} vector in $S$. We have, for any $\mB =: (\vx_{1,1}, \vx_{1,2}), \mB' =: (\vx_{2,1}, \vx_{2,2}) \in \mathcal{B}'$, and for any $g, g' \in F$, 
    \begin{equation}
        \|(g  \mB)^\top (g'  \mB')\|_2 \leq \|(g  \mB) (g'  \mB')^\top\|_F = \sqrt{\sum_{i,j \in [2]} (\langle g  \vx_{1i}, g'  \vx_{2j} \rangle)^2} \leq O(n^{c - 1/2}).
    \end{equation}

    The remaining conditions in \Cref{lem:extended_projection_set} are also straightforward to check.
\end{proof}

\section{Learning invariant polynomials}
\label{app:invariant_polynomials}
Here, we aim to extend the algorithm of \cite{andoni2014learning} to the setting of invariant polynomials. This algorithm is an SQ algorithm, but not a CSQ algorithm. 

First, let us review the GROWING-BASIS algorithm. Assume inputs $\vx \in \mathbb{R}^n$ are distributed according to some product distribution $\mathcal{D} = \mu_1 \times \cdots \times \mu_n$. The goal of the algorithm is to learn the coefficients of some polynomial $f^*(\vx) = \sum_S a_S H_S(\vx)$ where $S=(S_1,S_2,\dots,S_n)$ is a tuple of positive integers which indicate the degree of the polynomial in the $i$-th variable. The monomials in the polynomial take the form
\begin{equation}
    f^*(\vx) = \sum_S a_S H_S(\vx), \text{ where } H_S(\vx) = \prod_{i=1}^n H_{S_i}(\vx_i),
\end{equation}
where $H_{S_i}$ is the $S_i$-th degree polynomial in an class of polynomials orthogonal under the distribution $\mathcal{D}$: $\langle H_i, H_j \rangle = \int H_i(x) H_j(x) \mu_i(x) dx = 0$ for all $i \neq j$. An example of such an orthogonal set of polynomials are the Legendre polynomials which are orthogonal under the uniform distribution over $[-1,1]$.

Next, to rigorously obtain a separation on invariant function classes, we need to define a few objects. We will often simplify notation or definitions for ease of readability; the presentation of this material in its full mathematical rigor can be found in various textbooks, e.g. \cite{derksen2015computational}.

\paragraph{Independent generators}
Let $g_1, \ldots, g_r$ be invariant functions and $\textup{K}$ a field, either $\R$ or $\mathbb{C}$. The ring of polynomial $\textup{K}[g_1, \ldots, g_r]$ is the set of all sums $\sum_{\alpha \in \mathbb{N}^r} c_\alpha g^\alpha$ where $g^\alpha = \prod_{i = 1}^r g_i^{\alpha_i}$.  
Assume that $\{g_i\}_{i = 1}^r$ are independent in the sense that for all $i \in [r]$, there is no polynomial $P \in \textup{K}[g_1, \ldots, g_{i - 1}, g_{i +1}, \ldots, g_r]$ that agrees with $g_i$ as a function $\textup{K}^n \to \textup{K}$. It is not hard to see that if a member of $\textup{K}[g_1, \ldots, g_r]$ can be evaluated as a function $\textup{K}^n \to \textup{K}$ then they are also invariant functions. The converse is not true, in general, as some invariant functions may not be written as polynomials over certain generators. However, some special groups have nice, finitely many generators whose polynomial ring spans almost the whole space of invariant functions.  Denote by $\mathcal{F}^{d,r} := \textup{K}[g_1, \ldots, g_r]^{\leq d}$ the set of all polynomials in $g_i$'s with degree at most $d$, i.e., $\sum_{\alpha \in \mathbb{N}^r | \sum_i \alpha_i \leq d} c_\alpha g^\alpha$.

\paragraph{Vector space of bounded degree polynomials generated by a finite set}
A different way of thinking about $\mathcal{F}^{d,r}$ is as a vector space. Let $V^d(g_1, \ldots, g_r)$ be the $(r + 1)^d$-dimensional vector space over the field $k$, whose canonical basis is the set $\{e_I\}_{I \in (\{0\} \cup [r])^d}$. It is not hard to show that $V^d(g_1, \ldots, g_r)$ is isomorphic to $\textup{K}[g_1, \ldots, g_r]^{\leq d}$ by identifying $e_I$ to the monomial $\prod_{i = 1}^d g_{I_i}$ where $g_0 \equiv 1$. 
This perspective as a vector space allows for a nice calculus tool: let $\mathbb{P}$ be a product distribution over $\textup{K}^r$ (tuples of length $r$, with each entry an element in $\textup{K}$) and $\langle f, g \rangle_{\mathbb{P}} := \int f\bar{g} \D \mathbb{P}$. Here, we once again think of $g_i$'s as functions $\textup{K}^n \to \textup{K}$ and so are elements in  $\mathcal{F}^{d,r}$. Applying Gram-Schmidt orthogonalization to basis elements of $V^d(g_1, \ldots, g_r)$ generates another set of orthogonal (in $\langle \cdot, \cdot \rangle_{\mathbb{P}}$) elements $H_1, \ldots H_{(r + 1)^d}$.\footnote{We also note that it is technically more efficient to apply this procedure to each of the generators individually. That is, apply Gram-Schmidt to the space $[1, g_i, \dots, g_i^d]$. Since there is a product distribution, this procedure constructs orthogonal polynomials in each basis which can then be combined to construct orthogonal polynomials in $\textup{K}[g_1, \ldots, g_r]$.} Note that there are still $(r + 1)^d$ such orthogonal functions since Gram-Schmidt preserves dimensionality of the input and ouput vector spaces. This process in analogous to that constructing the Hermite and Legendre polynomials for Gaussian in uniform distributions respectively.  When $\mathbb{P}$ is a product distribution, we can reindex this basis as $\{H_{\vv}\}_{\vv \in (\{0\}\cup[r])^d}$.

\paragraph{Sparse polynomial} Before giving any formal results, we must state some technical preliminary restrictions that must be set for learning to be possible. Our learning setting restricts $\mathcal{F}^{d,r}$ to only contain polynomials in $g_1, \ldots, g_r$ with at most $k$ terms in the expansion using $\{H_{\vv}\}_{\vv \in (\{0\}\cup[r])^d}$. In other words, when viewed as vectors in  $V^d(g_1, \ldots, g_r)$ under the  basis $\{H_{\vv}\}_{\vv \in (\{0\}\cup[r])^d}$, only $k$ elements are non-zero. Call this new set $\mathcal{F}^{d,r,k}$. 

We should also note that for technical reasons, we restrict the outputs of the polynomials to be normalized within some bounded range to properly normalize the function with respect to the error metric and allow for the SQ query bounds to also properly query such functions. In the GROWING-BASIS algorithm, one must make queries to estimate $\langle H_{\vv}, f^* \rangle$ and $\langle H_{\vv}, (f^*)^2 \rangle$. Given any statistical query function has output bounded in the range $[-1,+1]$, to measure the correlations, we must choose a query function that divides by the maximum value of $|H_{\vv} (f^*)^2|$ in some high probability region $D \subseteq \textup{K}^r$ with measure $\mathbb{{P}}(D) \geq 1 - o(1)$ and outputs zero elsewhere. This way, statistical query functions whose outputs are bounded in the range $[-1,+1]$ can effectively calculate the proper correlations needed in the algorithm.\footnote{We note that there are likely more effective means of handling this normalization requirement, by e.g., composing multiple queries together to calculate the relevant quantities. However, we do not consider this strengthening here for sake of simplicity.} Thus, we introduce a normalization term $\hat M_{d,k} = \max_{\vx \in D, \vv \in (\{0\}\cup[r])^d} |H_{\vv} (f^*)^2|$ and introduce this quantity into the query tolerance to allow for proper estimation of relevant variables. This normalization plays a similar role to the quantity $M_{d,k}$ in \cite{andoni2014learning} for their guarantees of sampling complexity. When the target function $f^*$ is properly bounded, it is not too hard to see that this quantity is also easily bounded. For example, when $|f^*| = O(1)$ in the range $[-1,+1]^n$, $\hat{M}_{d,k}$ is at most a constant for inputs drawn from the uniform distribution over the interval $[-1,+1]$ since the Legendre polynomials are also appropriately bounded. In general, functions can always be normalized by dividing by an appropriate constant. 

Finally, the GROWING-BASIS algorithm depends on a parameter $\tau_d$ that controls how much higher order terms show up in the square of a polynomial. More formally, let $H_i:\textup{K} \to \textup{K}$ denote the $i$-th orthogonal polynomial for some given distribution on $\textup{K}$. For any such orthogonal polynomial, we have the decomposition (see equation 2.1 of \cite{andoni2014learning})
\begin{equation}
    H_t(x) = 1 + \sum_{j=1}^{2t} c_{t, j} H_j(x).
\end{equation}
Let $c_t = c_{t,2t}$, then $\tau_d = \min_{t \leq d} c_t$. This quantity is generally bounded for common distributions, e.g., when $H_t$ are the Hermite or Legendre polynomials then $\tau_d = \Omega(1)$ \citep{andoni2014learning}.

With this set up, we are ready to state and prove our main result for this section:
\begin{theorem}[Adapted from theorems 2.1 and 2.2 of \cite{andoni2014learning}] \label{thm:separation}
    Let $\textup{K}$ be a field, $g_1, \ldots, g_r$ be independent functions $\textup{K}^n \to \textup{K}$ that are invariant to a group $G$ acting algebraically on $\textup{K}[g_1, \ldots, g_r]$ as defined above. Let $\mathbb{P}$ be a product distribution and define the inner product $\langle p,q\rangle_{\mathbb{P}} := \E_{\vg := (g_1, \ldots, g_r) \sim \mathbb{P}} [p(\vg) \overline{q(\vg)}]$. Let $\{H_{\vv}\}_{\vv \in (\{0\}\cup[r])^d}$ be the output of Gram-Schmidt on $\mathcal{F}^{d,r}$ with respect to this inner product. Further assume that the polynomial quotient $H_{2m,0,\ldots,0}/(H_{m,0,\ldots,0})^2$  (ignoring remainders) is $\tau_d$ for every $m \in [\lfloor d/2\rfloor]$. Let $\mathcal{F}^{d,r,k}$ be the space of polynomials in $\mathcal{F}^{d,r}$ that is k sparse in $H_i$ basis. Then any CSQ algorithm with oracle access to $\langle \cdot,f^*\rangle_{\mathbb{P}}$ for some $f^* \in \mathcal{F}^{d,r,k}$ that outputs a hypothesis $h \in \mathcal{F}^{d,r,k}$ while achieving mean square error $\|h - f^*\|_{\mathbb{P}} < O(1)$ must make either at least $(r + 1)^d$ calls to the oracle or require precision less than $O(r^{-c})$ for any $c > 0$. However, GROWING-BASIS can learn $h$ with error $\|h - f^*\|_{\mathbb{P}} < O(1)$ using only $O(krd (1/\tau_d)^d)$ calls to oracles $\langle \cdot,f^*\rangle_{\mathbb{P}}$ and $\langle \cdot,(f^*)^2\rangle_{\mathbb{P}}$ with precision $\Omega(\tau_d^d/(krd \hat{M}_{d,k}))$. 
    \end{theorem}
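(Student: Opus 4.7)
The plan is to adapt Theorems 2.1 and 2.2 of \cite{andoni2014learning} by leveraging the vector-space isomorphism $V^d(g_1,\ldots,g_r) \cong \mathcal{F}^{d,r}$. Independence of the generators gives an injective evaluation map $\R[y_1,\ldots,y_r]^{\leq d} \hookrightarrow \mathcal{F}^{d,r}$ sending $y_i \mapsto g_i(\vx)$, and the hypothesis that the induced distribution $\vg = (g_1(\vx),\ldots,g_r(\vx))$ under $\vx \sim \mathcal{D}$ is a product distribution $\mathbb{P}$ means that the inner product $\langle p, q\rangle_\mathbb{P}$ on invariant polynomials is formally identical to the $L^2(\mathbb{P})$-inner product in Andoni et al.'s setting on ordinary polynomials in $r$ variables. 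Under this identification, the Gram-Schmidt basis $\{H_{\vv}\}_{\vv \in (\{0\}\cup [r])^d}$ behaves precisely like a tensorized orthogonal polynomial basis, so the full set-up of \cite{andoni2014learning} carries over generator-by-generator.

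For the CSQ lower bound, the plan is to apply the statistical dimension argument (\Cref{lemma:sda}) to the concept class $\mathcal{C} = \{H_{\vv}/\|H_{\vv}\|_\mathbb{P} : \vv \in (\{0\}\cup[r])^d\}$ of normalized basis elements. By construction these are pairwise orthogonal under $\langle \cdot,\cdot\rangle_\mathbb{P}$ and each member lies in $\mathcal{F}^{d,r,1} \subseteq \mathcal{F}^{d,r,k}$, so $\rho(\mathcal{C}') = 1/|\mathcal{C}'|$ for every subset $\mathcal{C}' \subseteq \mathcal{C}$. This yields $\mathrm{SDA}(\mathcal{C}, \mathbb{P}, \tau') = (r+1)^d$ for any $\tau' > 1/(r+1)^d$, and \Cref{lemma:sda} then forces a CSQ algorithm to either issue $(r+1)^d$ queries or use precision beating $1/(r+1)^{d/2}$, which is $O(r^{-c})$ for any desired $c > 0$ when $d = O(\log n)$.

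For the SQ upper bound, the plan is to run the GROWING-BASIS procedure of \cite{andoni2014learning} essentially verbatim, with queries phrased on $\vx$ but effectively operating on $\vg$. At each stage the algorithm maintains a candidate support $T$ and tests each extension $T \cup \{\vv\}$ by estimating $\langle H_\vv, f^*\rangle_\mathbb{P}$ and $\langle H_\vv, (f^*)^2\rangle_\mathbb{P}$ via the two oracles; because the induced measure on $\vg$ is a product distribution, the key ``square'' identity $H_{2m} / H_m^2 = \tau_d + \text{(lower terms)}$ holds coordinate-wise and guarantees a coefficient of at least $\tau_d^d$ survives when lifting a nonzero monomial through the squaring. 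Hence a gap of order $\tau_d^d$ distinguishes genuine support elements from spurious ones, and since $f^*$ has at most $k$ monomials, only $O(krd)$ extension tests are needed. To make each test a valid SQ query, I normalize the query function by $\hat{M}_{d,k}$ so that it takes values in $[-1,+1]$, absorbing this factor into the tolerance $\Omega(\tau_d^d/(krd\,\hat{M}_{d,k}))$.

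The main obstacle is the bookkeeping around the normalization factor $\hat M_{d,k}$ and ensuring that the growing procedure remains correct in the invariant setting when queries are actually phrased against $\vx \sim \mathcal{D}$ rather than $\vg \sim \mathbb{P}$. Concretely, one must verify (i) that the truncation of the query to a high-probability region $D$ where $|H_\vv (f^*)^2|$ is bounded changes the measured inner products by at most $o(1)$, so the $\tau_d^d$ gap remains detectable; (ii) that $\hat M_{d,k}$ is bounded polynomially in $r,k,d$ under the stated tail assumptions on $\mathbb{P}$, so that polynomially many samples suffice to achieve the stated precision; and (iii) that the lift from orthogonal polynomials in one variable to $\{H_\vv\}$ preserves $\tau_d$ multiplicatively across coordinates, which is exactly where the product structure of $\mathbb{P}$ is used. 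Once these normalization and truncation details are checked, the remaining arguments track \cite{andoni2014learning} line by line.
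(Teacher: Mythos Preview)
Your proposal is correct and follows essentially the same approach as the paper's proof: both obtain the CSQ lower bound from the pairwise orthogonality of the Gram--Schmidt basis $\{H_{\vv}\}$ via the statistical dimension, and both obtain the SQ upper bound by observing that the product structure of $\mathbb{P}$ on $(g_1,\ldots,g_r)$ lets Andoni et al.'s GROWING-BASIS run verbatim, with the $\tau_d^d$ gap and the $\hat M_{d,k}$ normalization handled exactly as you describe. Your write-up is in fact somewhat more explicit than the paper's, which largely defers to \cite{andoni2014learning} and only sketches the thresholding and normalization adjustments.
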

\begin{proof}
    The CSQ lower bound can be obtained by invoking standard correlational statistical queries dimension bound form orthogonal polynomials \citep{diakonikolas2020algorithms,goel2020superpolynomial,reyzin2020statistical,szorenyi2009characterizing}. Specifically, by correctness of Gram-Schmidt orthonormalization, the set of basis elements $\{H_{\vv}\}_{\vv \in (\{0\}\cup[r])^d}$ also form a pairwise orthogonal set, which each themselves have norm $1$ under $\langle p,q\rangle_{\mathbb{P}}$. thus CSQ dimension of $f^* \in \mathcal{F}^{d,r,k}$ is immediately lower bounded by $\Omega(r^d)$ and the conclusion for CSQ algorithms follow.

    The SQ upper bound via GROWING-BASIS can be obtained by observing that \citet{andoni2014learning}'s proof still works verbatim when the main ingredients are there: a product distribution, an inner product with an orthornormal basis, and access to oracle $\langle \cdot,(f^*)^2\rangle_{\mathbb{P}}$. Note that although the GROWING-BASIS algorithm is presented error free, we allow small errors, by making a thresholding in Step 2 and 8 of \citet{andoni2014learning} equal to the error tolerance $\eps$. This way, we are guaranteed to miss only terms with low correlation and thus bounded effect on the final mean square error. To be more precise, every time we made an error on the cutoff of step 8, we lose as most a fraction of the precision that amounts to $\tau_d^d$ where as stated before $\tau_d$ is the coefficient of $H_{2n,0, \ldots, 0}$ in $(H_{n, 0, \ldots, 0})^2$. Intuitively, when losing a monomial in the final product, the norm of the monomial can multiplicatively compound over each variable. $\tau_d$ is the lowest possible amount each variable can contribute to this loss (See \citep{andoni2014learning}, Lemma 2.3). Finally, we also must divide the query tolerance by a factor $\hat{M}_{d,k}$ to account for the normalization of the target function as discussed earlier. 
\end{proof}

\begin{corollary}\label{cor:separation_in_canonical_basis}
    In the same setting as \Cref{thm:separation}, if instead, we are interested in polynomials that are $k$-sparse in the canonical basis $\{\prod_{i = 1}^r g_i^{\alpha_i} | \sum_i \alpha_i = 1 \}$ (recall that $\mathcal{F}^{d,r,k}$ was sparse in the orthogonal basis $\{H_{\vv}\}_{\vv \in (\{0\}\cup[r])^d}$). Then the query complexity of GROWING-BASIS becomes at most $O(krd 2^d)$ with precision $\Omega(1/(krd2^d))$ whenever $\tau_d = \Omega(1)$ and targets are normalized so that $\hat{M}_{d,k} = O(1)$.
\end{corollary}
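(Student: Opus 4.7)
The plan is to reduce the corollary directly to \Cref{thm:separation} by controlling how the sparsity of the target polynomial can blow up when one changes basis from the canonical monomial basis $\{\prod_i g_i^{\alpha_i}\}$ to the orthogonal basis $\{H_{\vv}\}$ produced by Gram--Schmidt under $\mathbb{P}$. Once the blow-up factor is pinned down, the claim will follow by simply feeding the inflated sparsity parameter into the existing upper bound.

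First, I would exploit the product structure of $\mathbb{P}$ to reinterpret the orthogonal basis as a tensor product. Let $\{P^{(i)}_j\}_{j\ge 0}$ be the univariate orthogonal polynomials associated with the $i$-th marginal of $\mathbb{P}$ (so $P^{(i)}_j(g_i)$ has degree $j$ in $g_i$). Because $\mathbb{P}$ factorizes, the functions $\prod_{i=1}^r P^{(i)}_{\beta_i}(g_i)$ form an orthonormal system under $\langle \cdot, \cdot\rangle_{\mathbb{P}}$, and this is exactly the system produced by Gram--Schmidt on monomials in the generators (as noted in the paper's own footnote on the efficiency of Gram--Schmidt). Working in this tensor-product form avoids any nuisance from the redundancy in the nominal indexing set $(\{0\}\cup[r])^d$.

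Second, I would expand a single canonical monomial $\prod_{i=1}^r g_i^{\alpha_i}$ of total degree $m=\sum_i\alpha_i\le d$ in the tensor-product orthogonal basis. Because $g_i^{\alpha_i}\in \operatorname{span}\bigl(P^{(i)}_0,\dots,P^{(i)}_{\alpha_i}\bigr)$, this expansion has exactly $\prod_{i=1}^r(\alpha_i+1)$ nonzero coefficients. The elementary bound $\alpha+1\le 2^{\alpha}$ for every integer $\alpha\ge 0$ then yields
\[
\prod_{i=1}^r(\alpha_i+1)\;\le\;\prod_{i=1}^r 2^{\alpha_i}\;=\;2^{\sum_i\alpha_i}\;\le\;2^d.
\]
Summing over the at most $k$ canonical monomials in the support of a target $f^*$ that is $k$-sparse in the canonical basis, by the triangle inequality on supports $f^*$ is at most $k':=k\cdot 2^d$-sparse in the orthogonal basis $\{H_{\vv}\}$.

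Third, I would invoke \Cref{thm:separation} with effective sparsity $k'=k\cdot 2^d$ in place of $k$. Its learning guarantee gives a query count of $O(k' r d\,(1/\tau_d)^d)$ at precision $\Omega\bigl(\tau_d^d/(k' r d\,\hat{M}_{d,k})\bigr)$; specializing to the assumptions $\tau_d=\Omega(1)$ and $\hat{M}_{d,k}=O(1)$ stated in the corollary and substituting $k'=k\cdot 2^d$ collapses these to $O(krd\,2^d)$ queries at precision $\Omega(1/(krd\,2^d))$, which is exactly the claimed bound. I do not anticipate a serious obstacle: the only point that deserves care is the identification between the nominal Gram--Schmidt basis indexed by $(\{0\}\cup[r])^d$ (which overcounts $\dim\mathcal{F}^{d,r}$) and the tensor-product orthogonal basis, and the brief argument above is enough to settle it cleanly.
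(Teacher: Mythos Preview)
Your proposal is correct and follows essentially the same approach as the paper: both argue that changing from the canonical basis to the orthogonal basis inflates sparsity by at most a factor of $2^d$, and then invoke \Cref{thm:separation} with the effective sparsity $k'=k\cdot 2^d$. The only difference is cosmetic: the paper cites Lemma~2.1 of \cite{andoni2014learning} for the $2^d$ blow-up, whereas you give the explicit computation $\prod_i(\alpha_i+1)\le 2^{\sum_i\alpha_i}\le 2^d$ using the tensor-product structure of the orthogonal basis.
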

\begin{proof}
    By the same argument as Lemma 2.1 of \citep{andoni2014learning}, any change of basis in the vector space of polynomials of bounded degree incur at worst a multiplicative factor $2^d$ in the sparsity parameter. Since the new function class is $k$-sparse in the canonical basis, it is $k2^d$-sparse in the orthogonal basis. Apply \Cref{thm:separation} to this new sparsity level to get the desired result. 
\end{proof}
This separation is weaker than when there is sparsity in the orthogonal basis. However, there is still a polynomial (in $r$) separation between CSQ and SQ if $d$ and $k$ are set to be constant independent of $r$ and a superpolynomial separation when $d=\Theta(\log(n))$.  

Finally, for the above theorem to give a meaningful separation, we list some example groups for which there is a simple distribution $\mathcal{D}$ over inputs such that the push-forward distribution $\mathbb{P} = \phi_{\#}\mathcal{D}$ via the map $\phi: \vx := (x_1,\ldots,x_n) \mapsto (g_1(\vx), \ldots g_r(\vx))$ is a product distribution. 

\begin{example}
    Consider inputs $\mX \in \mathbb{R}^{n \times 3}$ which correspond to $n$ points in $\mathbb{R}^3$. Ordered point clouds are symmetric under the action $\mX \cdot \mU$ for matrix $\mU \in O(3)$ in the orthogonal group. Let inputs $\mX$ have entries drawn i.i.d. Gaussian. Consider generators $g_1, \dots, g_n$ where $g_i(\vx) = \|\mX_{i,:}\|^2$, which is the sum of the squares of the $i$-th row of $\mX$ and clearly invariant to orthogonal transformations. The pushforward distribution after the mapping $\mX \mapsto [g_1(\mX), \dots, g_n(\mX)]$ is a product distribution over Chi-squared random variables $\chi^2(3)^n$. 
\end{example}

\begin{example}
    Consider inputs $\vx \in \mathbb{C}^n$ drawn from $U[\mathbb{C} \cap \{|z| = 1\}]^n$ uniformly from unit norm complex numbers in each entry. The power sum symmetric polynomials consist of a generating basis $\left[\sum_i \vx_i, ..., \sum_i \vx_i^d\right]$. Another way to parameterize each $\vx_j$ is as $\vx_j = e^{i \theta}$ where $\theta$ is distributed $U[0,2\pi]$, for all $j \in [n]$. Thus $\vx_j^t$ can also be parameterized by $e^{i\theta}$ with the exact same distribution of $\theta \sim U[0,2 \pi]$, for all $j \in [n], t \in [d]$. The distribution of $\left[\sum_i \vx_i, ..., \sum_i \vx_i^d\right]$ is therefore a product distribution $(\textup{Law}(\sum_{i = 1}^n Z_i))^d$ where $Z_i \sim U[\mathbb{C} \cap \{|z| = 1\}]$.
    These generators and input distributions come from the classical text of \citet{macdonald1979symmetric}. Here, as long as $r \leq n$, $r$ only depends on $d$ and not $n$. Plugging this $r$ into \Cref{thm:separation} gives straightforward bounds that are independent of $n$-the original number of variables, but does not give meaningful separation in 
\Cref{cor:separation_in_canonical_basis}.
\end{example}

As a reminder, we also have an example in the main text for the sign group \citep{lim2022sign}.

\section{Experimental Details}
\label{app:experimental_details}

\begin{figure}[t]
    \centering
    \begin{subfigure}[b]{0.48\textwidth}
        \includegraphics{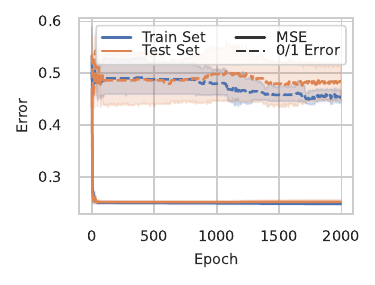}
        \caption{GNN learning $\mathcal{H}_{ER,n}$ }
        \label{fig:GNN_performance_1layer}
    \end{subfigure}
    \hfill
    \begin{subfigure}[b]{0.48\textwidth}
        \includegraphics{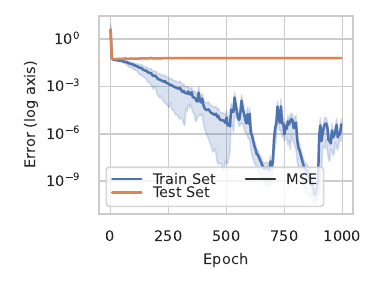}
        \caption{CNN learning $C^{\mathcal{B}}_{\mathcal{F}}$}
        \label{fig:CNN_performance_1layer}
    \end{subfigure}
    \caption{Replication of experiments as in \Cref{fig:experiment_performance}, except here, we consider a minimal architecture consisting of a single layer of graph or cyclic convolution followed by a single hidden layer MLP. This is the minimal number of layers needed to learn the desired function classes for the architectures considered. For the CNN plot, the jumps in the train set MSE are due to perturbations in the loss at very low values near computer precision. }
    \label{fig:experiment_performance_1layer}    
\end{figure}

\begin{figure}
    \centering
    \includegraphics[width=0.99\textwidth]{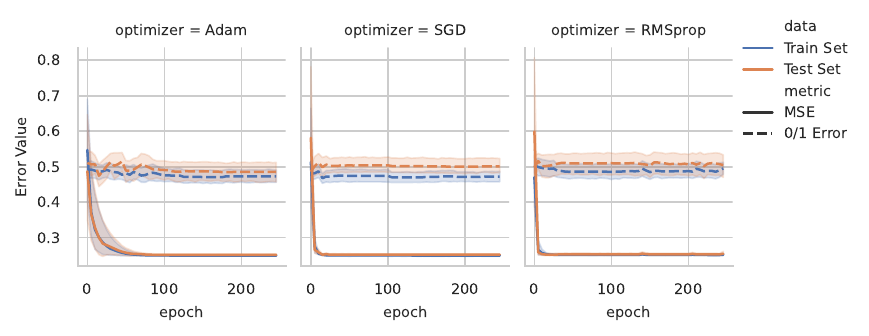}
    \caption{Replication of experiments in \Cref{fig:GNN_performance} with different optimizers show that the performance of the GNN is virtually the same across the various optimizers. Performance is averaged over 10 runs. For each run, the learning rate is chosen by perturbing the default learning rate by a random multiplicative factor in the range $[0.1, 10]$. }
    \label{fig:optimizer_sensitivity}
\end{figure}

GNN experiments were performed using the Pytorch Geometric library for implementing geometric architectures \citep{fey2019fast}. The hard functions were drawn from $\mathcal{H}_{ER,n}$ (see \Cref{eq:hard_GNN_SQ_functions}) where subset $S \subseteq [n]$ was drawn uniformly from all the possible subsets of size $\floor{n/2}$ and $b$ was either $0$ or $1$ with equal probability. For our experiments, we set $n=15$ and trained on $n^2 = 225$ datapoints. The overparameterized GNN used during training consisted of $3$ layers of graph convolution followed by a node aggregation average pooling layer and a two layer ReLU MLP with width $64$. The graph convolution layers used $32$ channels.

For the CNN experiments, we constructed the hard functions in \Cref{eq:hard_functions_reynolds} setting $n=50$ and $k=10$. Here, matrices $B \in \mathcal{B}$ were drawn randomly from all possible $n \times 2$ orthogonal matrices by taking a QR decomposition of a random $n \times 2$ matrix with Gaussian i.i.d. elements. For sake of convenience, we did not divide by the norm of the function. The CNN architecture implemented consisted of three layers of fully parameterized cyclic convolution each with $100$ output channels. This was followed by a global average pooling layer and and a two layer ReLU MLP of width $100$. The network was given $10n = 500$ training samples and was trained with the Adam optimizer with batch size $32$.

Experiments in the main text considered architectures which were overparameterized in terms of the number of layers with respect to the target function. For sake of completeness, we include in \Cref{fig:experiment_performance_1layer} results for merely sufficiently parameterized networks which consist of a single convolution layer, pooling and single hidden layer MLP. The results in \Cref{fig:experiment_performance_1layer} are consistent with those observed in \Cref{fig:experiment_performance}. Additionally, to further confirm that these results are robust to hyperparameter choices for the optimizer, we repeated the GNN learning experiments with the choice of optimizer varying between Adam, SGD, and RMSprop \citep{NEURIPS2019_9015}. \Cref{fig:optimizer_sensitivity} shows the results for these different optimizers. For each optimizer, $10$ simulations were performed where for each simulation, the learning rate was chosen by multiplying the default learning rate by a random number in the range $[0.1, 10]$. 

All experiments were run using Pytorch on a single GPU \citep{NEURIPS2019_9015}. Default initialization schemes were used for the initial network weights. In our experiments, we used the Adam optimizer and tuned the learning rate in the range $[0.0001, 0.003]$. For CNN experiments, to increase stability of training in later stages, we added a scheduler that divided the learning rate by two every $200$ epochs. Plots are created by combining and averaging five random realizations of each experiment.

 \end{document}